\def\eqref#1{equation~\ref{#1}}
\def\1{\bm{1}}
\DeclareMathAlphabet{\mathsfit}{\encodingdefault}{\sfdefault}{m}{sl}
\SetMathAlphabet{\mathsfit}{bold}{\encodingdefault}{\sfdefault}{bx}{n}
\DeclareMathOperator*{\argmax}{arg\,max}
\DeclareMathOperator*{\argmin}{arg\,min}
\newcommand\blfootnote[1]{%
  \begingroup
  \renewcommand\thefootnote{}\footnote{#1}%
  \addtocounter{footnote}{-1}%
  \endgroup
}
\newcommand{\eproof}{$\null\hfill\blacksquare$}
\newenvironment{proofsketch}{\par\noindent{\bf Proof Sketch:\ }}{\eproof}
\newtheorem{defn}{Definition}
\newtheorem{assm}{Assumption}
\newtheorem{lem}{Lemma}
\def\bbbr{{\rm I\!R}}
\newcommand{\yvec}{\mathbf{y}}
\newcommand\mysim{\stackrel{\mathclap{\tiny\mbox{iid}}}{\sim}}
\newenvironment{proofref}[1]{\par\noindent{\bf #1:\ }}{\eproof} %{\hfill\BlackBox\\[.3mm]}
\newcommand{\wvec}{\mathbf{w}}
\newcommand{\uvec}{\mathbf{u}}
\newcommand{\xvec}{\mathbf{x}}
\newcommand{\zerovec}{\mathbf{0}}
\newcommand{\pparams}{\mathbf{w}}
\newcommand{\propparams}{\mathbf{w}'}
\newcommand{\qparams}{\theta}
\newcommand{\Actions}{\mathcal{A}}
\newcommand{\States}{\mathcal{S}}
\newcommand{\defeq}{\doteq}
\newcommand{\Sigmamat}{\boldsymbol{\Sigma}}
\newcommand{\hatident}[1]{\hat{I}(#1)}
\newcommand{\thresh}[1]{f_{#1}}
\theoremstyle{plain}
\newtheorem{theorem}{Theorem}[section]
\theoremstyle{definition}
\theoremstyle{remark}
\title{Greedy Actor-Critic: A New Conditional Cross-Entropy Method for Policy Improvement}
\author{Samuel Neumann, Sungsu Lim, Ajin Joseph, Yangchen Pan, Adam White, Martha White \\ %\thanks{}
Department of Computing Science\\
University of Alberta\\
Edmonton, Alberta, Canada \\
\texttt{\string{sfneuman,amw8,whitem\string}@ualberta.ca} \\
}
\begin{document}

\maketitle

\begin{abstract}
Many policy gradient methods are variants of Actor-Critic (AC), where a value function (critic) is learned to facilitate
updating the parameterized policy (actor). The update to the actor involves a log-likelihood update weighted by the
action-values, with the addition of entropy regularization for soft variants. In this work, we explore an alternative
update for the actor, based on an extension of the cross entropy method (CEM) to condition on inputs (states). The idea
is to start with a broader policy and slowly concentrate around maximally valued actions, using a maximum likelihood update
towards actions in the top percentile per state. The speed of this concentration is controlled by a proposal policy,
that concentrates at a slower rate than the actor. We first provide a policy improvement result in an idealized setting,
and then prove that our conditional CEM (CCEM) strategy tracks a CEM update per state, even with changing action-values.
We empirically show that our GreedyAC algorithm, that uses CCEM for the actor update, performs better than Soft
Actor-Critic and is much less sensitive to entropy-regularization.
\blfootnote{Code available at \url{https://github.com/samuelfneumann/GreedyAC}.}
\end{abstract}

\vspace{-0.35cm}
\section{Introduction}
\vspace{-0.25cm}

Many policy optimization strategies update the policy towards the Boltzmann policy. This strategy became popularized by
Soft Q-Learning \citep{haarnoja2017reinforcement} and Soft Actor-Critic (SAC) \citep{haarnoja2018soft}, but has a long
history in reinforcement learning \citep{kober2008policy,neumann2011variational}. In fact, recent work
\citep{vieillard2020leverage,chan2021greedification} has highlighted that an even broader variety of policy optimization
methods can be seen as optimizing either a forward or reverse KL divergence to the Boltzmann policy, as in SAC. In fact,
even the original Actor-Critic (AC) update \citep{sutton1984temporal} can be seen as optimizing a reverse KL divergence,
with zero-entropy.

The use of the Boltzmann policy underlies many methods for good reason: it guarantees policy improvement
\citep{haarnoja2018soft}. More specifically, this is the case when learning entropy-regularized action-values
$Q_\tau^\pi$ for a policy $\pi$ with regularization parameter $\tau > 0$. The Boltzmann policy for a state is
proportional to $\exp(Q_\tau^\pi(s,a) \tau^{-1})$. The level of emphasis on high-valued actions is controlled by $\tau$:
the higher the magnitude of the entropy level (larger $\tau$),
the less the probabilities in the Boltzmann policy are peaked around
maximally valued actions.
% Martha: No space, maybe this is too much info here anyway
%This exponentiation shifts to a non-negative range, and emphasizes high-values actions. The level of emphasis is controlled by $\tau$: the bigger the entropy level (bigger $\tau$), the less the new probabilities in the Boltzmann policy are peaked around maximal actions. For very small $\tau$, on the other hand, the Boltzmann policy is highly peaked at maximal actions under $Q_\tau^\pi$. This new policy is guaranteed to be an improvement, in terms of the entropy-regularized values.

This choice, however, has several limitations. The policy improvement guarantee is for the entropy-regularized MDP, rather than
the original MDP. Entropy regularization is used to encourage exploration \citep{ziebart2008maximum,mei2019principled}
and improve the optimization surface \citep{ahmed2018understanding,shani2019adaptive}, resulting in a trade-off between
improving the learning process and converging to the optimal policy.
% MARTHA: Again, maybe too much here, and its a relatively minor point
%Further, the Boltzmann policy may be multi-modal, and the parameterized policy unimodal. This mismatch can result in suboptimal local minima, when using the reverse KL to the Boltzmann policy. Finally,
Additionally, SAC and other methods are well-known to be sensitive to the entropy regularization
parameter \citep{pourchot2018cem}. Prior work has explored optimizing entropy during learning \citep{SACv2}, however,
this optimization introduces yet another hyperparameter to tune,
and this approach may be less performant than a simple grid
search (see Appendix~\ref{apdx:sac_ablation}).
It is reasonable to investigate alternative policy improvement approaches that could potentially
improve our actor-critic algorithms.

In this work we propose a new greedification strategy towards this goal.
%In this work we propose a new greedification strategy, inspired by a global optimization algorithm called the Cross Entropy Method \citep{rubinstein1999cem} (CEM).
The basic idea is to iteratively take the top percentile of actions, ranked according to the learned action-values. The
procedure slowly concentrates on the maximal action(s), across states, for the given action-values. The update itself is
simple: $N \in \mathbb{N}$
actions are sampled according to a \emph{proposal policy}, the actions are sorted based on the magnitude of the
action-values, and the policy is updated to increase the probability of the $\lceil \rho N \rceil$ maximally valued
actions for $\rho \in (0, 1)$.
% top 10\% of these actions.
We call this
algorithm for the actor Conditional CEM (CCEM), because it is an extension of the well-known Cross-Entropy Method (CEM)
\citep{rubinstein1999cem} to condition on inputs\footnote{CEM has been used for
policy optimization, but for two very different purposes. It has been used to directly optimize the policy
gradient objective \citep{mannor2003cem,szita2006tetris}. CEM has also been used to solve for the maximal action---running CEM each time we want to find $max_a' Q(S',a')$---for an algorithm called QT-Opt \citep{kalashnikov2018qt}. A follow-up algorithm adds an explicit deterministic policy to minimize a squared error to this maximal action \citep{simmons2019q} and another updates the actor with this action rather than the on-policy action \citep{shao2022grac}. We do not
directly use CEM, but rather extend the idea underlying CEM to provide a new policy update.}. We leverage theory for CEM
to validate that our algorithm concentrates on maximally valued actions across states over time. We introduce GreedyAC, a new AC
algorithm that uses CCEM for the actor.

GreedyAC has several advantages over using Boltzmann greedification. First, we show that our new greedification operator
ensures a policy improvement for the original MDP, rather than a different entropy-regularized MDP. Second, we can still
leverage entropy to prevent policy collapse, but only incorporate it into the proposal policy. This ensures the agent
considers potentially optimal actions for longer, but does not skew the actor. In fact, it is possible to decouple the
role of entropy for exploration and policy collapse within GreedyAC: the actor could have a small amount of entropy to
encourage exploration, and the proposal policy a higher level of entropy to avoid policy collapse. Potentially because
of this decoupling, we find that GreedyAC is much less sensitive to the choice of entropy regularizer, as compared to
SAC.
%Finally, CCEM may be more robust to multi-modal or non-smooth action-values, because it attempts to consider the action-value surface more globally.
This design of the algorithm may help it avoid getting stuck in a locally optimal action, and empirical evidence for CEM
suggests it can be quite effective for this purpose \citep{rubinstein2004cross}. In addition to our theoretical support
for CCEM, we provide an empirical investigation comparing GreedyAC, SAC, and a vanilla AC, highlighting that GreedyAC
performs consistently well, even in problems like the Mujoco environment Swimmer and pixel-based control where SAC
performs poorly.

%  First, because CCEM is built on a global optimization approach, it explicitly tackles
%non-concave action-values and even non-differentiable action-values. Second, we can obtain a policy improvement result,
%for the original MDP rather than in an entropy-regularized MDP. Finally, we prevent policy collapse by using entropy
%regularization within the proposal policy, that is only used within the CCEM update. In this way, a broader range of
%actions are considered for the percentile calculation, so that the target policy does not incorrectly settle on
%suboptimal actions early in learning. This strategy prevents policy collapse, but also appears to be much less sensitive
%to the choice of entropy regularizer, as compared to SAC.

\vspace{-0.25cm}
\section{Background and Problem Formulation}%
\label{sec:background_and_problem_formulation}
\vspace{-0.25cm}
The interaction between the agent and environment is formalized by a Markov decision process
$\left(\mathscr{S}, \mathscr{A}, \mathscr{P}, \mathscr{R}, \gamma \right)$, where $\mathscr{S}$ is the state space,
$\mathscr{A}$ is the action space, $\mathscr{P}: \mathscr{S} \times \mathscr{A} \times \mathscr{S} \to \left[ 0, \infty
\right)$ is the one-step state transition dynamics, $\mathscr{R}: \mathscr{S} \times \mathscr{A} \times \mathscr{S} \to
\mathbb{R}$ is the reward function, and $\gamma \in [0, 1]$ is the discount rate. We assume an episodic problem setting,
where the start state $S_0 \sim d_0$ for start state distribution $d_0: \mathscr{S} \rightarrow [0, \infty)$ and the
length of the episode $T$ is random, depending on when the agent reaches termination. At each discrete timestep $t = 1,
2, \ldots, T$, the agent finds itself in some state $S_{t}$ and selects an action $A_{t}$ drawn from its stochastic
policy $\pi: \mathscr{S} \times \mathscr{A} \to [0, \infty)$. The agent then transitions to state $S_{t+1}$ according to $\mathscr{P}$ and observes a scalar reward $R_{t+1} \doteq \mathscr{R}(S_{t}, A_{t}, S_{t+1})$.

For a parameterized policy $\pi_\pparams$ with parameters $\pparams$, the agent attempts to maximize the objective
$J(\pparams) =\mathbb{E}_{\pi_\pparams} [ \sum_{t=0}^{T} \gamma^{t} R_{t+1} ]$, where the expectation is according to start
state distribution $d_0$, transition dynamics $\mathscr{P}$, and policy $\pi_\pparams$.  Policy gradient methods, like
REINFORCE \citep{williams1992simple}, attempt to obtain (unbiased) estimates of the gradient of this objective
to directly update the policy.

The difficulty is that the policy gradient is expensive to sample, because it requires sampling return trajectories from
states sampled from the visitation distribution under $\pi_\pparams$, as per the policy gradient theorem
\citep{sutton2000policy}. Theory papers analyze such an idealized algorithm
\citep{kakade2002approximately,agarwal2021theory}, but in practice this strategy is rarely used.
Instead, it is much more common to (a) ignore bias in the state distribution \citep{thomas2014bias,imani2018off,nota2019policy} and
%avoid attempting to sample states from the correct visitation distribution and
(b) use biased estimates of the return, in the form of a value function critic.  The
action-value function $Q^\pi(s,a) \defeq \mathbb{E}_\pi[\sum_{k=1}^{T-t} \gamma^{t} R_{t+k} | S_t = s, A_t = a]$
is the expected return from a given state and action, when following policy $\pi$. Many PG methods---specifically
variants of Actor-Critic---estimate these action-values with parameterized $Q_\qparams(s, a)$, to use the update $Q_\qparams(s, a) \nabla
\ln \pi_\pparams(a | s)$ or one with a baseline $[Q_\qparams(s, a) - V(s)]\nabla \ln \pi_\pparams(a | s)$ where the
value function $V(s)$ is also typically learned. The state $s$ is sampled from a replay buffer, and $a \sim
\pi_\pparams(\cdot | s)$, for the update.

There has been a flurry of work, and success, pursuing this path, including methods such as OffPAC
\citep{degris2012offpac}, SAC \citep{haarnoja2018soft}, SQL \citep{haarnoja2017reinforcement}, TRPO
\citep{schulman2015trust} and many other variants of related ideas
\citep{peters2010relative,silver2014deterministic,schulman2015high,lillicrap2015continuous,wang2016sample,gu2016qprop,schulman2017proximal,abdolmaleki2018maximum,mei2019principled,vieillard2019deep}.
Following close behind are unification results that make sense of this flurry of work
\citep{tomar2020mirror,vieillard2020leverage,chan2021greedification,lazic2021optimization}. They highlight that many
methods include a mirror descent component---to minimize KL to the most recent policy---and an entropy-regularization
component \citep{vieillard2020leverage}. In particular, these methods are better thought of as (approximate) policy
iteration approaches that update towards the Boltzmann policy, in some cases using a mirror descent update.
The Boltzmann policy $\mathcal{B}_\tau Q(s,a)$ for a given $Q$ is
\begin{equation} \label{eqn:Boltzmann}
\mathcal{B}_\tau Q(s,a) = \frac{\exp(Q(s,a) \tau^{-1})}{\int_\Actions \exp(Q(s,b) \tau^{-1}) db}
\end{equation}
for
entropy parameter $\tau$. As $\tau \rightarrow 0$, this policy puts all weight on greedy actions. As $\tau \rightarrow
\infty$, all actions are weighted uniformly. This policy could be directly used as the new greedy policy.
% without maintaining an explicit policy, as in Q-learning.
However, because it is expensive to sample from
$\mathcal{B}_\tau Q(s,a)$, typically a parameterized policy $\pi_\pparams$ is learned to approximate $\mathcal{B}_\tau Q(s,a)$, by
minimizing a KL divergence.
%Sampling from this parameterized policy is simple, while also mimicking the desired hard-to-sample Boltzmann policy.
As the entropy goes to zero, we get an unregularized update that corresponds to the
vanilla AC update \citep{chan2021greedification}.

\vspace{-0.25cm}
\section{Conditional CEM}%
\label{sec:ccem_algorithm}
\vspace{-0.25cm}

Though using the Boltzmann policy has been successful, it does have some limitations. The primary limitation is that it
is sensitive to the choice of entropy \citep{pourchot2018cem,chan2021greedification}. A natural question is what other strategies we can
use for this greedification step in these approximate policy iteration algorithms, and how they compare to this common
approach.  We propose and motivate a new approach in this section, and then focus the paper on providing insight into
its benefits and limitations, in contrast to using the Boltzmann policy.

Let us motivate our approach, by describing the well-known global optimization algorithm called the Cross Entropy Method
(CEM) \citep{rubinstein1999cem}. Global optimization strategies are designed to find the global optimum of a general
function $f(\beta)$ for some parameters $\beta$. For example, for parameters $\beta$ of a neural network, $f$ may be
the loss function on a sample of data. An advantage of these methods is that they do not rely on gradient-based
strategies, which are prone to getting stuck in local optima. Instead, they use randomized search strategies, that have
optimality guarantees in some settings \citep{hu2012stochastic} and have been shown to be effective in practice
\citep{peters2007reinforcement,hansen2003cma,szita2006tetris,salimans2017evolution}.

CEM maintains a distribution $p(\beta)$ over parameters $\beta$, iteratively narrowing the range of plausible
solutions. The algorithm maintain a current threshold $\thresh{t}$, that slowly increases over time as it narrows on the
maximal $\beta$. On iteration $t$, $N$ parameter vectors $\beta_1, \ldots, \beta_N$ are sample from $p_t$; the
algorithm only keeps $\beta_1^*, \ldots, \beta_h^*$ where $f(\beta^*_i) \ge \thresh{t}$ and discards the rest.  The
KL divergence is reduced between $p_t$ and this empirical distribution $\hat{I} = \{\beta_1^*, \ldots, \beta_h^*\}$,
for $h \le N$. This step corresponds to increasing the likelihood of the $\beta$ in the set $\hat{I}$. Iteratively, the
distribution over parameters $p_t$ narrows around $\beta$ with higher values under $f$.  To make it more likely to find the global optimum, the initial distribution $p_0$ is a wide distribution, such as a Gaussian distribution with mean zero $\mu_0 =
\zerovec$ and a diagonal covariance $\Sigmamat_0$ of large magnitude.

\begin{wrapfigure}{}{0.4\textwidth}
\vspace{-0.75cm}
\begin{minipage}{\linewidth}
\begin{algorithm}[H]
\caption{Percentile Empirical Distribution($N, \rho$)}
\begin{algorithmic}\label{alg_ccem1}
	\STATE Evaluate and sort in descending order: \\
	$Q_\theta(S_t, a_{i_1}) \geq \ldots \geq Q_\theta(S_t,a_{i_N})$
	 \RETURN $\hatident{S_t} = \{a_{i_1}, \ldots, a_{i_h}\}$ \\
	 (where $h=\lceil \rho N \rceil$ )
\end{algorithmic}
\end{algorithm}
\end{minipage}
\vspace{-0.5cm}
\end{wrapfigure}

CEM attempts to find the single-best set of optimal parameters for a single optimization problem. The straightforward
use in reinforcement learning is to learn the single-best set of policy parameters $\pparams$ \citep{szita2006tetris,
mannor2003cem}.  Our goal, however, is to (repeatedly) find maximally valued actions $a^*$ conditioned on
each state for $Q(s, \cdot)$. The global optimization
strategy could be run on each step to find the exact best action for each current state, as in QT-Opt
\citep{kalashnikov2018qt} and follows-ups \citep{simmons2019q,shao2022grac}, but this is expensive and throws away prior information about the function surface obtained
when previous optimizations were executed.

We extend CEM to be (a) conditioned on state and (b) learned iteratively over time.  The key modification when extending
CEM to Conditional CEM (CCEM), to handle these two key differences, is to introduce another \emph{proposal policy} that
concentrates more slowly. This proposal policy is entropy-regularized to ensure that we keep a broader set of potential
actions when sampling, in case changing action-values are very different since the previous update to that state. The
main policy (the actor) does not use entropy regularization, allowing it to more quickly start acting according to
currently greedy actions, without collapsing. We visualize this in Figure \ref{fig_ccem}.

\begin{figure*}[ht]
	 \vspace{-0.75cm}
	\centering
 	\begin{subfigure}[b]{0.72\textwidth}
		\fontsize{6pt}{6pt}\selectfont
		\includesvg[width=\linewidth]{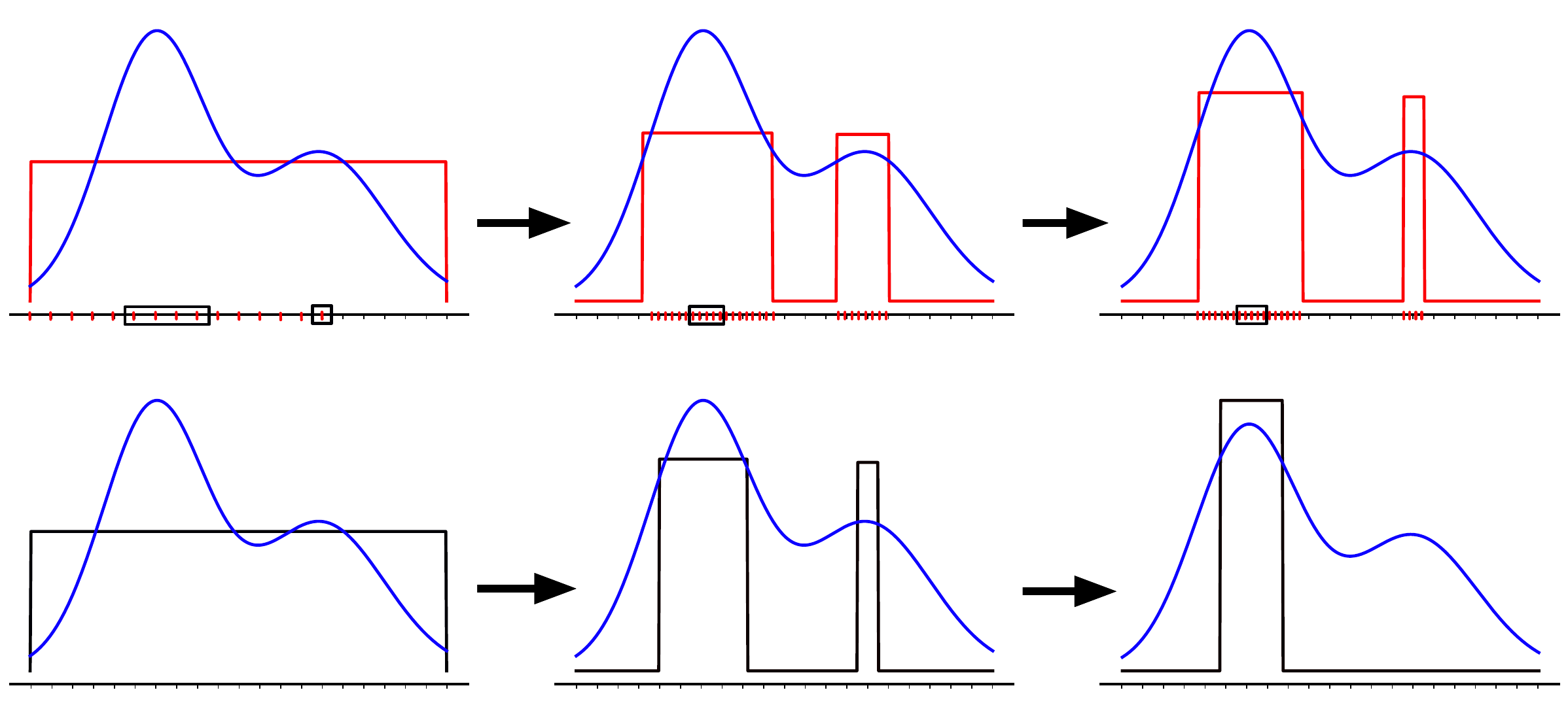}
	\end{subfigure}
\begin{subfigure}[b]{0.27\textwidth}
		\fontsize{6pt}{6pt}\selectfont
		\includesvg[width=\linewidth]{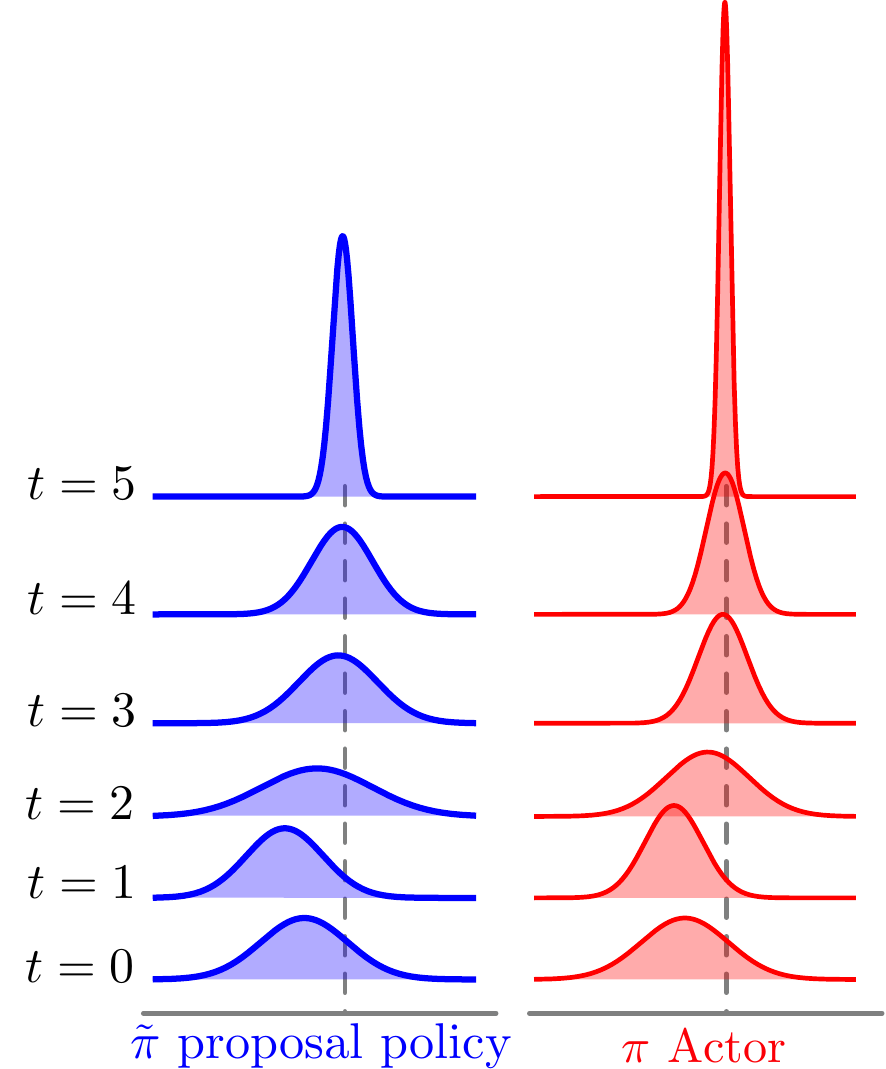}
	\end{subfigure}
	\vspace{-0.25cm}
  \caption{In the {\bf left} figure we see multiple updates for both policies of the CCEM in a single state.
	  % Typically, CCEM would only be executed once for a state, and then a new state sampled,
	  % but we visualize multiple updates for
	  % illustrative purposes to understand how the policies concentrate.
	  We use uniform policies, for interpretability.
	  % though in practice we use Gaussian policies.
	  In the {\bf rightmost} figure, we show an actual
	  progression of CCEM with Gaussian policies, when executed on the action-values depicted in the leftmost figure.
	  The Actor policy (in black) concentrates more quickly than the Proposal policy (in red).}  \label{fig_ccem}
	  \vspace{-0.25cm}
\end{figure*}

The CCEM algorithm is presented in Algorithm \ref{alg_ae_ccem}.
% Martha: too much. Let focus on our alg, not CEM
%In contrast to CEM, it replaces the learned $p(\cdot)$ with $\pi(\cdot | S_t)$, where $\pi(\cdot | S_t)$ can be any parametrized distribution. For a mixture model, for example, the parameters are conditional means $\mu_i(S_t)$, conditional diagonal covariances $\Sigma_i(S_t)$ and coefficients $c_i(S_t)$, for the $i$th component of the mixture.
On each step, the proposal policy, $\tilde \pi_{\pparams'_t}(\cdot | S_t)$, is
sampled to provide a set of actions $a_1, \ldots, a_N$ from which we construct the empirical distribution
$\hatident{S_t} = \{a^*_1, \ldots, a_h^*\}$ of maximally valued actions.  The actor parameters $\pparams_t$ are updated using a gradient ascent
step on the log-likelihood of the actions $\hatident{S_t}$. The proposal parameters $\pparams'_t$ are updated using a similar update, but with
an entropy regularizer.
To obtain $\hatident{S_t}$,
%A key step is to select the empirical distribution $\hatident{S_t}$.
%A standard strategy for CEM is to use the top percentile, which avoids the need to pick a threshold and provides a consistent number of points $h$ for the likelihood step.
%For $a_1, \ldots, a_N$ sampled from $\pi_{\pparams'_t}( \cdot | S_t)$,
we select $a_i^* \subset \{a_1, \ldots,
a_N\}$ where $Q(S_t, a_i^*)$ are in the top $(1-\rho)$ quantile values.  For example, for $\rho = 0.2$, approximately
the top 20\% of actions are chosen, with $h = \lceil \rho N \rceil$. Implicitly, $\thresh{t}$ is $Q_\theta(S_t, a^*_h)$
for $a^*_h$ the action with the lowest value in this top percentile.  This procedure is summarized in Algorithm
\ref{alg_ccem1}.

Greedy Actor-Critic, in Algorithm \ref{alg_greedy-ac}, puts this all together. We use experience replay, and
the CCEM algorithm on a mini-batch. The updates involve obtaining the sets $\hatident{S}$ for every $S$
in the mini-batch $B$ and updating with the gradient $\tfrac{1}{|B|}\sum_{S \in B} \sum_{a \in \hatident{S}}
\nabla_\pparams \ln \pi_\pparams(a|S)$.  The Sarsa update to the critic involves (1) sampling an on-policy action from
the actor $A' \sim
\pi_{\pparams}(\cdot | S')$ for each tuple in the mini-batch and (2) using the update
$\tfrac{1}{|B|}\sum_{(S,A,S',R,A') \in B} (R + \gamma Q_\qparams(S', A') - Q_\qparams(S, A))\nabla_\qparams
Q_\qparams(S, A)$. Other critic updates are possible; we discuss alternatives and connections to related algorithms in Appendix \ref{app_connections}.

\vspace{-0.4cm}
\begin{minipage}{.45\textwidth}
\begin{algorithm}[H]
\caption{Conditional CEM for the Actor}\label{alg_ae_ccem}
\begin{algorithmic}
	\STATE \textbf{Input: } $S_t$ and $Q_\qparams$, $N \in \mathbb{N}$, $\rho \in (0, 1)$
	\IF{actions discrete and $| \Actions | \le 1/\rho$}
	\STATE $\hatident{S_t} = \argmax_{a \in \Actions} Q_\qparams(S_t, a)$
	\ELSE
	\STATE Sample $N$ actions $a_i \sim \tilde \pi_{\propparams}(\cdot | S_{t})$
	\STATE Obtain $\hatident{S_t}$ using Algorithm \ref{alg_ccem1}
%	\STATE Obtain $\hatident{S_t} = \{a_1^*, \ldots, a_h^*\}$ using Algorithm \ref{alg_ccem1} on $a_1, \ldots, a_N$ and $Q_\qparams(S_t, \cdot)$
	\ENDIF
	\STATE $\pparams \gets \pparams + \alpha_{p,t} \sum_{a \in \hatident{S_t}} \nabla_\pparams \ln \pi_\pparams(a|S_{t})$
	\STATE $\!\!\propparams \!\gets\! \propparams \!+\! \alpha_{p,t} [\sum_{a \in \hatident{S_t}} \!\!\nabla_{\propparams}
	\ln \tilde \pi_{\propparams}\!(a|S_{t}) \!+\! \tau \nabla_{\propparams} \mathcal{H}(\tilde \pi_{\propparams}\!(\cdot|S_{t}))]$
\end{algorithmic}
\end{algorithm}
\end{minipage}%
\begin{minipage}{.05\textwidth}
\hspace{0.05cm}
\end{minipage}
\begin{minipage}{.5\textwidth}
\begin{algorithm}[H]
\caption{Greedy Actor-Critic}\label{alg_greedy-ac}
\begin{algorithmic}
	\STATE Initialize parameters $\qparams, \pparams, \pparams'$, replay buffer $\mathcal{B}$
	\STATE Obtain initial state $S$
	\WHILE{agent interacting with the environment}
	\STATE Take action $A \sim \pi_{\pparams}(\cdot | S)$, observe $R$, $S'$
	\STATE Add $(S, A, S', R) $ to the buffer $\mathcal{B}$
	\STATE Grab a random mini-batch $B$ from buffer $\mathcal{B}$
	\STATE Update $\qparams$ using Sarsa for policy $\pi_\pparams$ on $B$
	\STATE Update $\pparams, \pparams'$ using Algorithm \ref{alg_ae_ccem} on $B$.
	\ENDWHILE
\end{algorithmic}
\end{algorithm}
\end{minipage}

\textbf{CCEM for Discrete Actions.} Although we can use the same algorithm for discrete actions, we can make it simpler when we have a small number of discrete actions. Our algorithm is designed around the fact that it is difficult to solve for the maximal action for $Q_\qparams(S_t, a)$ for continuous actions; we slowly identify this maximal action across states. For a small set of discrete actions, it is easy to get this maximizing action. If $|\Actions| < 1/\rho$, then the top percentile consists of the one top action (or the top actions if there are ties); we can directly set $\hatident{S_t} = \argmax_{a \in \Actions} Q_\qparams(S_t, a)$ and do not need to maintain a proposal policy. For this reason, we focus our theory on the continuous-action setting, which is the main motivation for using CEM for the actor update.
%Though we focus on continuous actions for our analysis, it is useful to consider
%what this algorithm should be for discrete actions. Essentially, on each step the actor is increasing the likelihood of
%maximal actions. We need not slowly identify this maximal action, like in the continuous setting, since this is a simple
%optimization problem to solve. This reduces to a maximum likelihood step where we (1) take all the maximal actions
%$\hatident{S_t} = \argmax_{a \in \Actions} Q_\qparams(S_t, a)$ and (2) increase the likelihood of those actions (as in
%the continuous action variant). We no longer have a proposal policy, as it is not necessary.

% \begin{wrapfigure}{}{0.6\textwidth}
% \begin{minipage}{\linewidth}
% \end{minipage}
% \end{wrapfigure}

\vspace{-0.25cm}
\section{Theoretical Guarantees}
\vspace{-0.25cm}

In this section, we motivate that the target policy underlying CCEM guarantees policy improvement, and characterize the
ODE underlying CCEM. We show it tracks a CEM update in expectation across states and slowly concentrates around
maximally valued actions even while the action-values are changing.

\newcommand{\threshold}{\text{thresh}}

\vspace{-0.25cm}
\subsection{Policy Improvement under an Idealized Setting}
\vspace{-0.25cm}

We first consider the setting where we have access to $Q^\pi$, as is typically done for characterizing the policy
improvement properties of an operator \citep{haarnoja2018soft,ghosh2020operator,chan2021greedification} as well as for
the original policy improvement theorem \citep{sutton2018reinforcement}. Our update moves our policy towards a
\emph{percentile-greedy} policy that redistributes probability solely to the $(1-\rho)$-quantile according to magnitudes
under $Q(s, a)$. More formally, let $f_{Q}^{\rho}(\pi; s)$ be the threshold such that $\int_{\{ a \in \Actions | Q(s, a)
\ge f_{Q}^{\rho}(\pi; s) \}} \pi(a | s) da = \rho$, namely that gives the set of actions in the top $1-\rho$ quantile,
according to magnitudes under $Q(s, \cdot)$. Then we can define the percentile-greedy policy as
\begin{equation}\label{eq_percentile}
	\pi_\rho(a | s, Q, \pi) = \begin{cases} \pi(a | s)/\rho & Q(s, a) \ge \threshold f_{Q}^{\rho}(\pi; s) \\
	0 & \text{else} \end{cases}
\end{equation}
	where diving by $\rho$ renormalizes the distribution.
	Computing this policy would be onerous; instead, we only sample the KL divergence to this policy, using a sample
	percentile. Nonetheless, this percentile-greedy policy represents the target policy that the actor updates towards
	(in the limit of samples $N$ for the percentile).

	Intuitively, this target policy should give policy improvement, as it redistributes weight for low valued actions
	proportionally to high-valued actions.  We formalize this in the following theorem. We write $\pi_\rho(a | s)$
	instead of $\pi_\rho(a | s, Q^\pi, \pi)$, when it is clear from context.

\begin{theorem}
For a given policy $\pi$, action-value $Q^\pi$ and $\rho > 0$, the percentile-greedy policy $\pi_\rho$ in $\pi$ and $Q^\pi$ is guaranteed to be at least as good as $\pi$ in all states:
\begin{align*}
\int_\Actions \pi_\rho(a | s, Q^\pi, \pi) Q^{\pi_\rho}(s, a) da \ge \int_\Actions \pi(a | s) Q^\pi(s,a) da
\end{align*}
\end{theorem}
\begin{proof}
The proof is a straightforward modification of the policy improvement theorem. Notice that
\begin{align*}
\int_\Actions \pi_\rho(a | s) Q^\pi(s,a) da \qquad &=
\int\limits_{\{ a \in \Actions | Q(s, a) \ge f_{Q}^{\rho}(\pi; s) \}} \!\!\!\!\!\!\!\!\!\!\!\!\!\!\!\!\!\!\!\!
\frac{\pi(a | s)}{\rho} Q^\pi(s,a) da
&\ge \quad \int\limits_{\Actions} \pi(a | s) Q^\pi(s,a) da
\end{align*}
by the definition of percentiles, for any state $s$. Rewriting $\int_\Actions \pi(a | s) Q^\pi(s,a)da = \mathbb{E}_\pi[Q^\pi(s, A)]$,
\begin{align*}
V^\pi(s) &= \mathbb{E}_\pi[Q^\pi(s, A)] \le \mathbb{E}_{\pi_\rho}[Q^\pi(s, A)]
= \mathbb{E}_{\pi_\rho}[R_{t+1} + \gamma \mathbb{E}_\pi[Q^\pi(S_{t+1}, A_{t+1}) | S_t = s]\\
&\le \mathbb{E}_{\pi_\rho}[R_{t+1} + \gamma \mathbb{E}_{\pi_\rho}[Q^\pi(S_{t+1}, A_{t+1})] | S_t = s]\\
&\le \mathbb{E}_{\pi_\rho}[R_{t+1} + \gamma R_{t+2} + \gamma^2 \mathbb{E}_{\pi}[Q^\pi(S_{t+2}, A_{t+2})] | S_t = s]\\
&\ldots\\
&\le \mathbb{E}_{\pi_\rho}[R_{t+1} + \gamma R_{t+2} + \gamma^2 R_{t+3} + \ldots \gamma^{T-1} R_T | S_t = s]
= \mathbb{E}_{\pi_\rho}[Q^{\pi_\rho}(s, A)] = V^{\pi_\rho}(s)
\end{align*}
\par\vspace{-0.9cm}
\end{proof}
\vspace{-0.25cm}
This result is a sanity check to ensure the target policy is sensible in our update.
Note that the Boltzmann policy only guarantees improvement under the entropy-regularized action-values.
%Despite being a simple
%requirement---guaranteed policy improvement in an ideal case---the Boltzmann policy does not satisfy this requirement.
%Rather, the Boltzmann policy only guarantees improvement under the entropy-regularized action-values.
% Martha: I am realizing I am not sure this is true; in fact, im pretty sure its not
%The result above equally applies to the action-values or the soft action-values.

\vspace{-0.25cm}
\subsection{CCEM Tracks the Greedy Action}\label{sec_ccem_theory}
\vspace{-0.25cm}

Beyond the idealized setting, we would like to understand the properties of the stochastic algorithm.  CCEM is not a
gradient descent approach, so we need to reason about its dynamics---namely the underlying ODE. We expect CCEM to behave like CEM per state, but with some qualifiers. First, CCEM uses a parameterized policy conditioned
on state, meaning that there is aliasing between the action distributions per state. CEM, on the other hand, does not
account for such aliasing. We identify conditions on the parameterized policy and use an ODE that takes expectations
over states.

Second, the function we are attempting to maximize is also changing with time, because the action-values are updating.
We address this issue using a two-timescale stochastic approximation approach, where the action-values $Q_{\theta}$
change more slowly than the policy, allowing the policy to track the maximally valued actions. The policy itself has two
timescales, to account for its own parameters changing at different timescales. Actions for the maximum likelihood step
are selected according to older (slower) parameters $\pparams^{\prime}$, so that it is as if the primary (faster)
parameters $\pparams$ are updated using samples from a fixed distribution. These two policies correspond to our proposal
policy (slow) and actor (fast).
% in our CCEM algorithm.

We show that the ODE for the CCEM parameters $\pparams_t$ is based on the gradient
\vspace{-0.10cm}
\begin{equation*}
\nabla_{\pparams(t)}\mathbb{E}_{\substack{S \sim \nu, A \sim \pi_{\pparams^{\prime}}(\cdot \vert
S)}}\Big[I_{\{Q_{\theta}(S, A) \geq f_{\theta}^{\rho}(\pparams^{\prime}; S)\}} \ln \pi_{\pparams(t)}(A|S)\Big]
\vspace{-0.10cm}
\end{equation*}
where $\theta$ and $\pparams^{\prime}$ are changing at slower timescales, and so effectively fixed from
the perspective of the faster changing $\pparams_t$.  The term per-state is exactly the update underlying CEM, and so we
can think of this ODE as one for an expected CEM Optimizer, across states for parameterized policies. We say that CCEM
\emph{tracks} this expected CEM Optimizer, because $\theta$ and $\pparams^{\prime}$ are changing with time.

We provide an informal theorem statement here for Theorem \ref{thm_main}, with a proof-sketch. The main result,
including all conditions, is given in Appendix \ref{app_theory_main}. We discuss some of the (limitations of the)
conditions after the proof sketch.

\textbf{Informal Result:} Let $\theta_t$ be the action-value parameters with stepsize $\alpha_{q,t}$, and $\wvec_t$ be
the policy parameters with stepsize $\alpha_{a,t}$, with $\wvec'_t$ a more slowly changing set of policy parameters set
to $\wvec'_t = (1-\alpha'_{a,t}) \wvec'_t + \alpha'_{a,t} \wvec_t$ for stepsize $ \alpha'_{a,t} \in (0,1]$.  Assume: (1) States $S_t$ are sampled from a fixed marginal
	distribution. (2) $\nabla_{\wvec}\ln{\pi_{\pparams}(\cdot \vert s)}$ is locally Lipschitz w.r.t. $\pparams$,
	$\forall s \in \States$. (3) Parameters $\wvec_t$ and $\theta_t$ remain bounded almost surely.  (4) Stepsizes
	are chosen for three different timescales: $\wvec_t$ evolves faster than $\wvec'_t$ and $\wvec'_t$ evolves faster
than $\theta_t$.
%\begin{enumerate}[leftmargin=*,itemsep=0em,topsep=0em] \item States $S_t$ are sampled from a fixed marginal
%	distribution.  \item $\nabla_{\wvec}\ln{\pi_{\pparams}(\cdot \vert s)}$ is locally Lipschitz w.r.t. $\pparams$,
%	$\forall s \in \States$.  \item Parameters $\wvec_t$ and $\theta_t$ remain bounded almost surely.  \item Stepsizes
%	are chosen for three different timescales: $\wvec_t$ evolves faster than $\wvec'_t$ and $\wvec'_t$ evolves faster
%than $\theta_t$.  \end{enumerate}
%
Under these four conditions,
the CCEM Actor tracks the expected CEM Optimizer.  \begin{proofsketch} The stochastic
	update to the Actor is not a direct gradient-descent update. Each update to the Actor is a CEM update, which
	requires a different analysis to ensure that the stochastic noise remains bounded and is asymptotically negligible.
	Further, the classical results of CEM also do not immediately apply, because such updates assume distribution
	parameters can be directly computed. Here, distribution parameters are conditioned on state, as outputs from a
	parametrized function. We identify conditions on the parametrized policy to ensure well-behaved CEM updates.

	The multi-timescale analysis allows us to focus on the updates of the Actor $\wvec_t$, assuming the action-value
	parameter $\theta$ and action-sampling parameter $\wvec^{\prime}$ are quasi-static. These parameters are allowed to
	change with time---as they will in practice---but are moving at a sufficiently slower timescale relative to
	$\wvec_t$ and hence the analysis can be undertaken as if they are static.
%	These updates need to produce $\theta$
%	that keep the action-values bounded, but we do not specify the algorithm for the action-values.  We assume the
%	action-value algorithm is given, and focus the analysis on the novel component: the CCEM updates for the Actor.

	The first step in the proof is to formulate the update to the weights as a projected stochastic recursion---simply
	meaning a stochastic update where after each update the weights are projected to a compact, convex set to keep them
	bounded. The stochastic recursion is reformulated into a summation involving the mean vector field
	$g^\theta(\wvec_t)$ (which depends on the action-value parameters $\theta$), martingale noise, and a loss term
	$\ell^\theta_t$ that is due to having approximate quantiles. The key steps are then to show almost surely that the
	mean vector field $g^\theta$ is locally Lipschitz, the martingale noise is quadratically bounded and that the loss
	term $\ell^\theta_t$ decays to zero asymptotically. For the first and second, we identify conditions on the policy
parameterization that guarantee these. For the final case, we adapt the proof for sampled quantiles approaching true
quantiles for CEM, with modifications to account for expectations over the conditioning variable, the state.
\end{proofsketch}

This result has several limitations. First, it does not perfectly characterize the CCEM algorithm that we actually use.
We do not use the update $\wvec'_t = (1-\alpha'_{a,t}) \wvec'_t + \alpha'_{a,t}\wvec_t$, and instead use entropy
regularization to make $\wvec'_t$ concentrate more slowly than $\wvec_t$. The principle is similar; empirically we
found entropy regularization to be an effective strategy to achieve this condition.
% , and empirically we
% found entropy regularization to be a more effective strategy to achieve this condition.

Second, the theory assumes the state distribution is fixed, and not influenced by $\pi_{\pparams}$. It is standard to
analyze the properties of (off-policy) algorithms for fixed datasets as a first step, as was done for Q-learning
\citep{jaakkola1994convergence}. It allows us to separate concerns, and just ask: does our method concentrate on maximal
actions across states?
%Though a reasonable first step, this analysis choice again does not match the algorithm, as we
%use $\pi_{\pparams}$ to select actions.  In general,
An important next step is to characterize the full Greedy
Actor-Critic algorithm, beyond just understanding the properties of the CCEM component.

\vspace{-0.25cm}
\section{Empirical Results}% \label{sec:empirical_results}
\vspace{-0.25cm}

We are primarily interested in investigating sensitivity to hyperparameters. This sensitivity reflects
how difficult it can be to get AC methods working on a new task---relevant for both applied settings and research.
AC methods have been notoriously difficult to tune due to the interacting time scales of the actor and
critic \citep{degris2012model}, further compounded by the sensitivity in the entropy scale.
The use of modern optimizers may reduce some of the sensitivity in stepsize selection;
these experiments help understand if that is the
case. Further, a very well-tuned algorithm may not be representative of performance across problems. We particularly
examine the impacts of selecting a single set of hyperparameters across environments, in contrast to tuning per
environment.

We chose to conduct experiments in small, challenging domains appropriately sized for
extensive experiment repetition. Ensuring significance in results and carefully exploring hyperparameter
sensitivity required many experiments. Our final plots required $\sim$30,000 runs across
all environments, algorithms, and hyperparameters.
% This is just a small fraction of the experimental data used
% throughout this work. Such extensive experimentation is required to refine our
% algorithms, tune baselines, design experiments, and understand the limitations of all methods studied.
Further,
contrary to popular belief, classic control domains are a challenge for Deep RL agents \citep{ghiassian2020improving},
and performance differences in these environments have been shown to extend to larger environments
\citep{obando2021revisiting}.

\vspace{-0.25cm}
\subsection{Algorithms}
\vspace{-0.25cm}

We focus on comparing GreedyAC to Soft Actor-Critic (SAC) both since this allows us to compare to a method that uses
the Boltzmann target policy on action-values and because SAC continues to have the most widely reported
success\footnote{See \url{https://spinningup.openai.com/en/latest/spinningup/bench.html}}. We additionally include
VanillaAC as a baseline, a basic AC variant which does not include any of the tricks SAC utilizes
to improve performance, such as action reparameterization to estimate the policy gradient
or double Q functions to mitigate maximization bias.
% We
% investigated a variant of GreedyAC, which incorporated all these additional tricks; it did not provide improvements,
% and so we kept the simplest variant of our algorithm.
For discrete actions, policies are parameterized using Softmax distributions. For continuous
actions, policies are parameterized using Gaussian distributions, except SAC which uses a squashed Gaussian policy as
per the original work. We tested SAC with a Gaussian policy, and it performed worse.
All algorithms use neural networks. Feedforward networks consist of two hidden layers of 64 units (classic control
environments) or 256 units (Swimmer-v3 environment). Convolutional layers
consists of one convolutional layer with 3 kernels of size 16 followed by a fully connected layer of size 128.
All algorithms use the Adam optimizer \citep{kingma2014adam}, experience replay, and target networks for the value
functions. See Appendix~\ref{apdx:hypers} for a full discussion of hyperparameters.

\vspace{-0.25cm}
\subsection{Environments}
\vspace{-0.25cm}

We use the classic versions of Mountain Car \citep{sutton2018reinforcement}, Pendulum \citep{degris2012model},
and Acrobot \citep{sutton2018reinforcement}.
Each environment is run with
both continuous and discrete action spaces; states are continuous.
Discrete actions consist of the two extreme continuous actions and 0.
All environments use a discount factor of $\gamma = 0.99$,
and episodes are cut off at 1,000 timesteps, teleporting the agent back to the start state (but not causing
termination).
To demonstrate the potential of GreedyAC at scale, we also include experiments on Freeway and Breakout
from MinAtar \citep{young19minatar} as well as on Swimmer-v3 from OpenAI Gym
\citep{aigym}. On MinAtar, episodes are cutoff at 2,500 timesteps.

In Mountain Car, the goal is to drive an underpowered car up a hill. State consists of the position in $\left[-1.2,
0.6\right]$ and velocity in $\left[-0.7, 0.7 \right]$. The agent starts in a random position in $\left[ -0.6, -0.4
\right]$ and velocity 0.  The action is the force to apply to the car, in $[-1, 1]$. The reward is -1 per step.

In Pendulum, the goal is to hold a pendulum with a fixed base in a vertical position. State consists of the angle
(normalized in $\left[-\pi, \pi\right)$) and angular momentum (in $\left[-1, 1\right]$).  The agent starts with the
pendulum facing downwards and 0 velocity. The action is the torque applied to the fixed base, in $\left[-2, 2\right]$.
The reward is the cosine of the angle of the pendulum from the positive y-axis.

In Acrobot, the agent controls a doubly-linked pendulum with a fixed base. The goal is to swing the second link one
link's length above the fixed base. State consists of the angle of each link (in $\left[-\pi, \pi\right)$) and the
angular velocity of each link (in $\left[-4\pi, 4\pi\right]$ and  $\left[-9\pi, 9\pi \right]$ respectively). The agent
starts with random angles and angular velocities in $\left[-0.1, 0.1 \right]$. The action is the torque applied to the
joint between the two links, in $\left[-1, 1\right]$. The reward is -1 per step.

\subsection{Experimental Details}
\label{sec:exp_details}

We sweep hyperparameters for 40 runs, tuning over the first 10 runs
and reporting results using the final 30 runs for the best hyperparameters. We
sweep critic step size $\alpha = 10^{x}$ for $x \in \left\{ -5, -4, \ldots, -1 \right\}$.
% $\{ 10^{-1}, 10^{-2}, 10^{-3}, 10^{-4},10^{-5}$\}.
We set the actor step size to be $\kappa \times \alpha$
and sweep $\kappa \in \left\{10^{-3}, 10^{-2}, 10^{-1}, 1
, 2, 10 \right\}$.  We sweep entropy scales $\tau = 10^{y}$ for $y \in \left\{-3, -2, -1, 0, 1\right\}$.
For the classic control experiments, we used fixed batch sizes of 32 samples and a
replay buffer capacity of 100,000 samples.
For the MinAtar experiments, we used fixed batch sizes of 32 samples and a buffer capacity of 1 million. For the Swimmer
experiments, we used fixed batch sizes of 100 samples and a buffer capacity of 1 million.
For CCEM, we fixed $\rho = 0.1$ and sample $N = 30$ actions.

To select hyperparameters across environment, we must normalize performance to provide an aggregate score. We use
near-optimal performance as the normalizer for each environment, with a score of 1 meaning equal to this performance.
We only use this normalization to average scores across environments.
We report learning curves using the original unnormalized returns. For
more details, see Appendix \ref{app_normalization}.

\vspace{-0.25cm}
\subsection{Results}% \label{sub:experiments}
\vspace{-0.25cm}

\begin{wrapfigure}[14]{}{0.6\textwidth}
	\vspace{-.4cm}
	\includegraphics[width=1.0\linewidth]{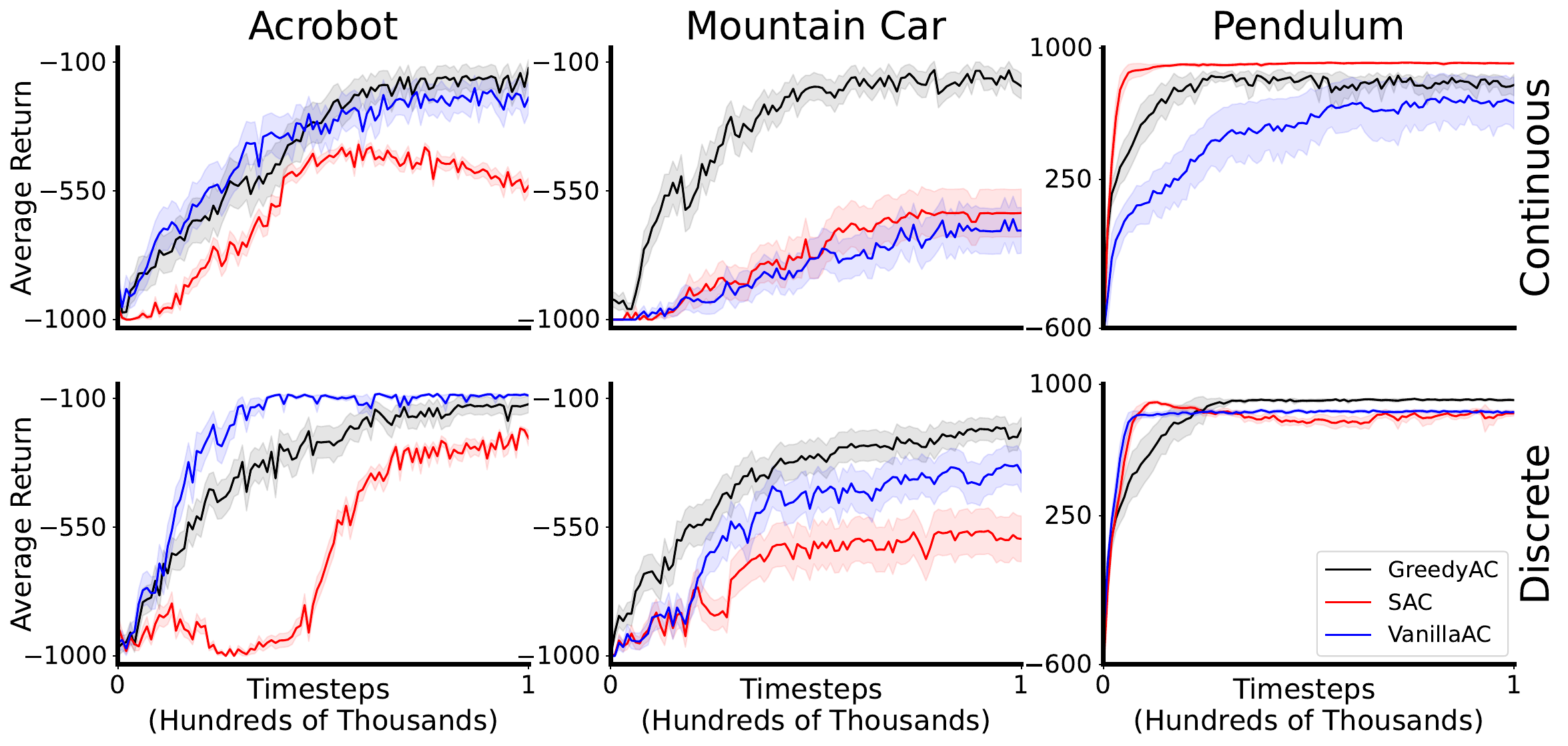}
	\vspace{-.5cm}
	\caption{Learning curves when tuning hyperparameters \textbf{per-environment}, averaged over 30 runs with
	standard errors. }%
	\label{fig:per-env}
\end{wrapfigure}
\textbf{Per-environment Tuning:} We first examine how well the algorithms can perform when they are tuned
per-environment.
% We start in this setting, since it is more common and because it allows us to see the algorithms at
% their best, before moving to the more restricted setting of using a single hyperparameter setting across environments.
In Figure \ref{fig:per-env}, we see that SAC performs well in Pendulum-CA (continuous actions) and in
Pendulum-DA (discrete actions) but poorly in the other settings.
SAC learns slower than GreedyAC and VanillaAC on Acrobot.
GreedyAC performs worse than SAC
in Pendulum-CA, but still performs acceptably, nearly reaching the same final performance. SAC performs poorly
on both versions of Mountain Car. That AC methods struggle
with Acrobot is common wisdom,
but here we see that both GreedyAC and VanillaAC do well on this problem.
GreedyAC is the clear winner in Mountain Car.

\begin{wrapfigure}[14]{}{0.6\textwidth}
	\vspace{-.5cm}
	\centering
	\includegraphics[width=1.0\linewidth]{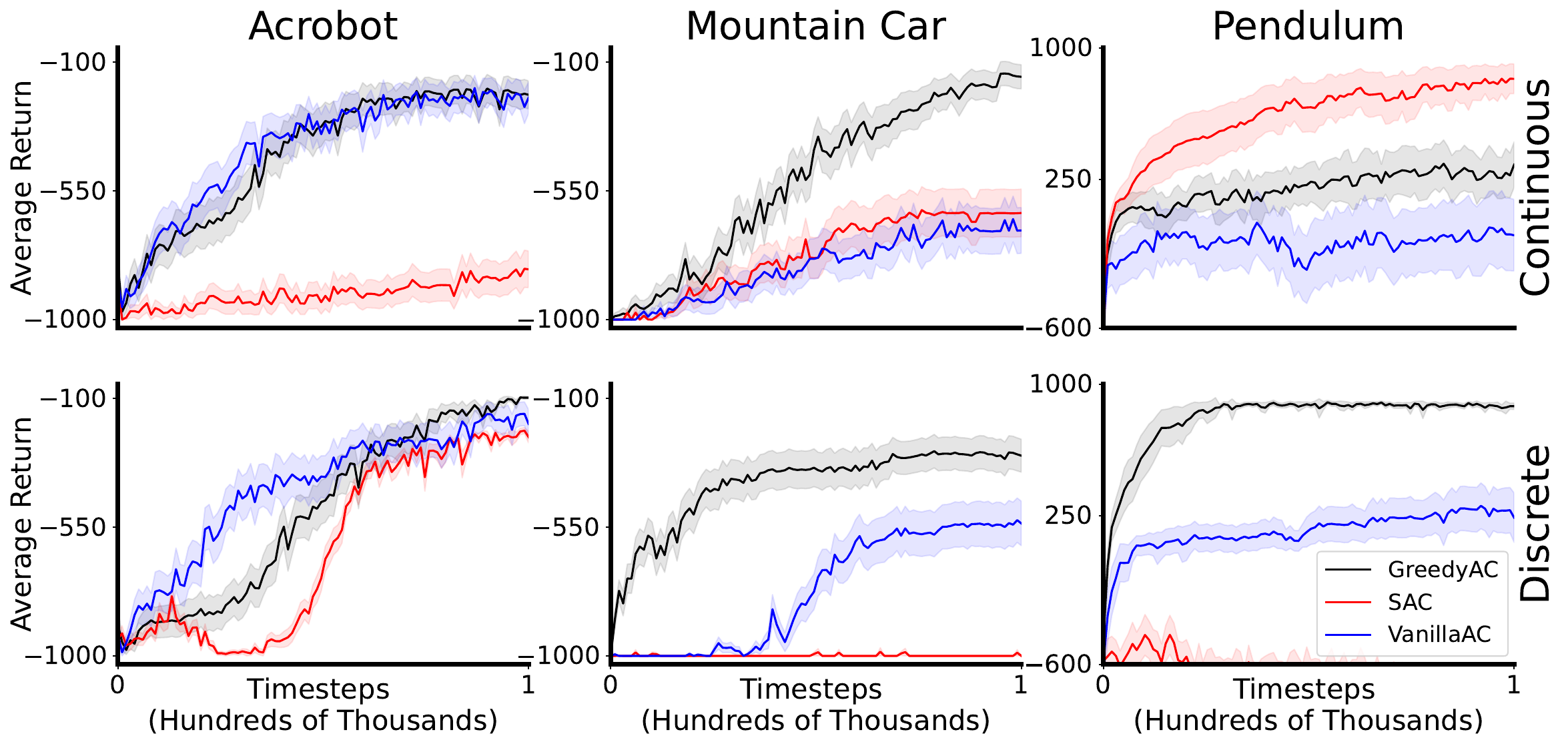}
	\vspace{-.5cm}
	\caption{Learning curves when tuning hyperparameters \textbf{across-environments}, averaged over 30 runs with
	standard errors.}%
	\label{fig:across-env}
\end{wrapfigure}
\textbf{Across-environment Tuning:}
We next examine the performance of the algorithms when they are forced to select one hyperparameter setting across
continuous- or discrete-action environments separately, shown in Figure~\ref{fig:across-env}.
% We should expect to see degradation in performance.
We expect algorithms that are less sensitive to their
parameters to suffer less degradation. Under this regime, GreedyAC has a clear advantage over SAC. GreedyAC
maintains acceptable performance across all environments, sometimes learning more slowly than under per-environment
tuning, but having reasonable behavior. SAC performs poorly on two-thirds of the environments.
GreedyAC is less sensitive than VanillaAC under
across-environment tuning and performs at least as good as VanillaAC.

\textbf{Hyperparameter Sensitivity:} We examine the sensitivity of GreedyAC and SAC
to their entropy scales, focusing on the continuous action environments.
% , because SAC and GreedyAC are
% particularly designed for the continuous action setting, and SAC
% performed the best in these environments.
We plot sensitivity curves, with one plotted for each entropy
scale, with the stepsize on the x-axis and average return across all steps and all 40 runs on the y-axis.
Because there are two
stepsizes, we have two sets of plots. When examining the
sensitivity to the critic stepsize, we select the best actor stepsize. We do the same for the actor
stepsize plots. We provide the plots with individual lines in Appendix \ref{app_sensitivity}
and here focus on a more summarized view.

\begin{wrapfigure}{}{0.6\textwidth}
	\vspace{-0.5cm}
	\centering
	\includegraphics[width=1.0\linewidth]{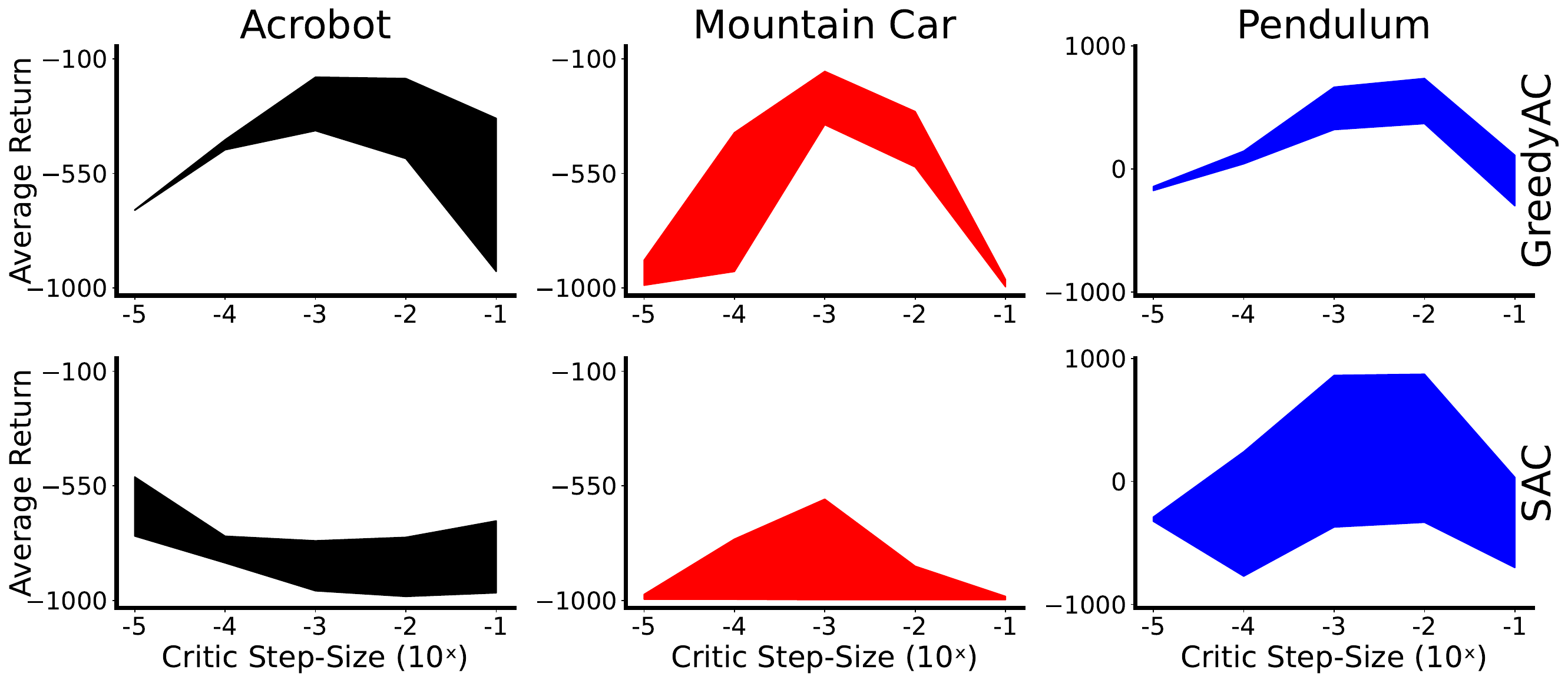}
	\caption{
		A \textbf{sensitivity region} plot for entropy, for GreedyAC (top row) and SAC (bottom row) in the
		continuous action problems.
		\vspace{-.25cm}
}
	\label{fig:summary-sensitivity}
\end{wrapfigure}

Figure \ref{fig:summary-sensitivity} depicts the range of performance obtained across entropy scales. The plot is
generated by filling in the region between the curves for each entropy scale.
If this \emph{sensitivity region} is broad, then the algorithm performed
very differently across different entropy scales and so is sensitive to the entropy.
SAC has much wider sensitivity regions than GreedyAC. Those of GreedyAC are generally narrow,
indicating that the stepsize rather than entropy was the dominant factor.
Further, the bands of performance are generally at the top of the plot. When SAC exhibits narrower regions than
GreedyAC, those regions are lower on the plot, indicating overall poor performance.

\vspace{-0.25cm}

\section{Scaling Greedy-AC}
\begin{figure*}[ht]
		\vspace{-.5cm}
	\centering
 	\begin{subfigure}[b]{0.45\textwidth}
	\includegraphics[width=1.0\linewidth]{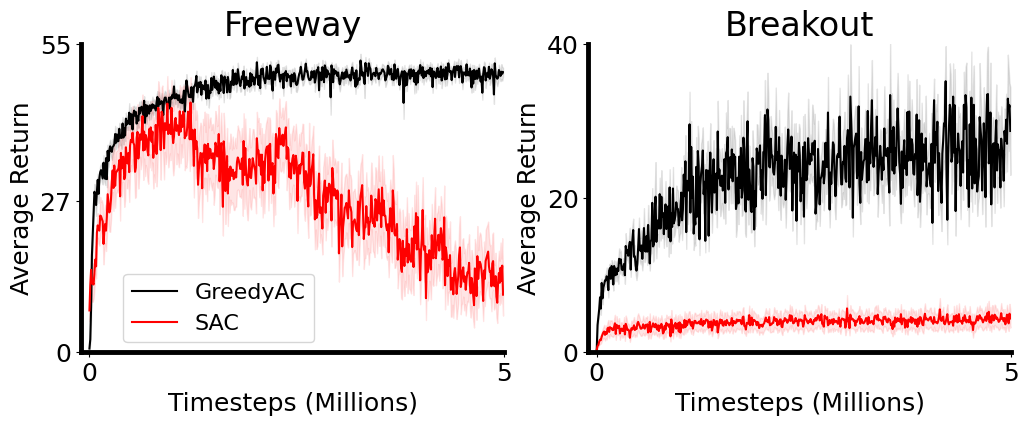}
	\caption{Pixel-based control on MinAtar}%
	\label{fig:minatar}
	\vspace{.244cm}
	\end{subfigure}
\begin{subfigure}[b]{0.45\textwidth}
	\includegraphics[width=1.0\linewidth]{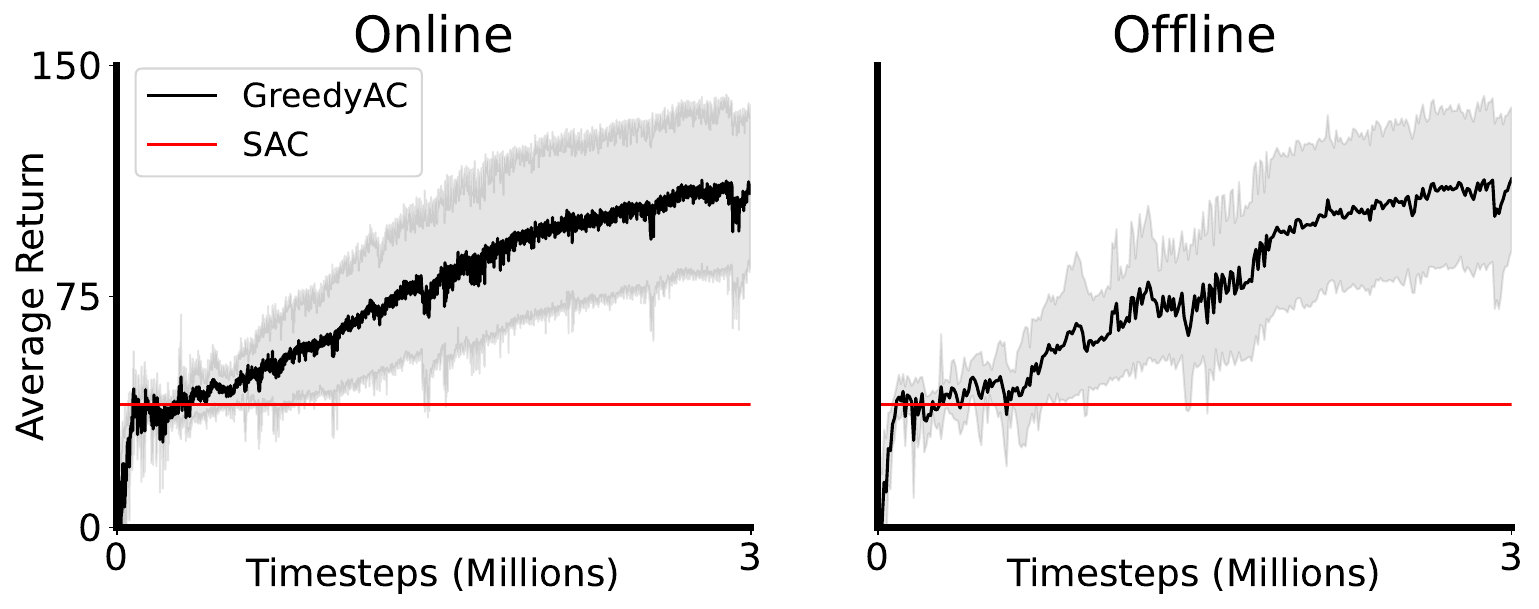}
	\caption{Learning curves on Swimmer over 10 runs, red line denotes SAC's final performance}%
	\label{fig:swimmer}
	\end{subfigure}
		\vspace{-.25cm}
	\caption{Comparing GreedyAC and SAC in more challenging environments}
	\vspace{-.35cm}
\end{figure*}

%\begin{wrapfigure}{}{0.5\textwidth}
%	\centering
%	\includegraphics[width=1.0\linewidth]{MinAtar.png}
%	\caption{Learning curves on MinAtar}%
%	\label{fig:minatar}
%\end{wrapfigure}
%
%\begin{wrapfigure}{}{0.5\textwidth}
%	\centering
%	\includegraphics[width=1.0\linewidth]{Swimmer.pdf}
%	\caption{\TODO{Placeholder Caption for Swimmer Environment}}%
%	\label{fig:swimmer}
%\end{wrapfigure}

We ran GreedyAC on two pixel-based control problems, Breakout and Freeway, from the MinAtar suite
\citep{young19minatar}. Recent work has shown that MinAtar results are indicative of those in much larger scale problems
\citep{obando2021revisiting}. We set the actor step-size scale to 1.0 and a critic step-size of $10^{-3}$ for both
GreedyAC and SAC---the
defaults of SAC. We set the entropy scale of SAC to $10^{-3}$ based on a grid search. Figure~\ref{fig:minatar} above
clearly indicates GreedyAC can learn a good policy from high-dimensional inputs; comparable performance to DQN Rainbow.

Finally, we ran GreedyAC on Swimmer-v3 from OpenAI Gym \citep{aigym}. We tuned over one run and then ran
the tuned hyperparameters for an additional 9 runs to generate Figure~\ref{fig:swimmer}. We report online
and offline performance. Offline evaluation is performed every
10,000 steps, for 10 episodes, where only the mean action is selected and learning is disabled.
We report SAC's final performance on Swimmer from the SpinningUp benchmark\footnote{See
\url{https://spinningup.openai.com/en/latest/spinningup/bench.html}}. GreedyAC is clearly not state-of-the-art
here---most methods are not---however, GreedyAC steadily improves throughout the experiment.

\vspace{-0.3cm}
\section{Conclusion}%
\label{sec:conclusion}
\vspace{-0.25cm}

In this work, we introduced a new Actor-Critic (AC) algorithm called GreedyAC, that uses a new update to the Actor
based on an extension of the cross-entropy method (CEM). The idea is to (a) define a
\emph{percentile-greedy} target policy and (b) update the actor towards this target policy,
by reducing a KL divergence to it.
% Mechanistically, the
% algorithm is simple and does not require this percentile-greedy policy to be explicitly constructed; instead, a sample
% percentile is used, with a simple maximum likelihood update to the policy to increase the probability of actions in this
% top sample percentile.
This percentile-greedy policy guarantees policy improvement, and we prove that our
Conditional CEM algorithm tracks the actions of maximal value under changing action-values.
We conclude with an in-depth
empirical study, showing that GreedyAC has significantly lower sensitivity to its
hyperparameters than SAC does.

\bibliography{paper}
\bibliographystyle{style/iclr2023_conference}

\appendix

\section{Related Policy Optimization Algorithms}\label{app_connections}

As mentioned in the main text, there are many policy optimization algorithms that can be seen as approximate policy
iteration (API) rather than performing gradient descent on a policy objective. An overview and survey
are given by \citet{vieillard2020leverage} and \citet{chan2021greedification}. There, many methods are shown to either
minimize a forward or reverse KL-divergence to the Boltzmann policy. Our approach similarly updates the actor using a
KL-divergence to a target policy, but here that target policy is the percentile-greedy policy. By doing a maximum
likelihood update with actions sampled under the percentile-greedy policy, we are reducing the forward KL-divergence to
the percentile-greedy policy in Equation \ref{eq_percentile}.

Our CCEM update for the actor is new, but there are several approaches that resemble the idea, particularly those that try to match an expert. This includes dual policy iteration methods (DPI) \citep{parisotto2016actor,sun2018dual,steckelmacher2019sample} and RL as classification methods \citep{lagoudakis2003reinforcement,lazaric2010analysis,farahmand2015classificationbased}.
DPI has two policies, one which is guiding the other. For example, one policy might be an expensive tree search and another a learned neural network, trained to mimic the first (expert or guide) policy. CCEM, on the other hand, uses two policies differently. Our actor does not imitate our proposal policy. Rather, the proposal policy is used to improve the search over the nonconcave surface of Q. It samples actions more broadly, to make it more likely to find a maximizing action. Further, the actor increases the likelihood of only the top actions and does not imitate the proposal policy.
In contrast, Bootstrap DPI \citep[Equation 5]{steckelmacher2019sample} uses an update based on Actor-Mimic \citep{parisotto2016actor}, where the policy increases likelihood of actions for the softmax policies it is trying to mimic.
%This resembles the CCEM update, but CCEM increases the likelihood of actions from the percentile greedy policy (not our proposal policy).
The resemblance arises from the fact that \citep[Equation 5]{steckelmacher2019sample} can be seen as a sum over forward KL divergences to softmax policies (for discrete actions), just like we have a forward KL divergence but to the percentile-greedy policy (for discrete or continuous actions).

The other class of algorithms, RL as classification, also look similar due to using a forward KL divergence. They reduce the problem to identifying ``positive'' actions in a state (producing maximal returns) and ``negative" actions in a state (producing non-maximal returns). If a cross-entropy loss is used, then this corresponds to maximizing the likelihood of the positive actions and minimizing the likelihood of the negative ones. More generally, other classification algorithms can be used that do not involve maximizing likelihood (like SVMs). The RL as classifications algorithms primarily focus on how to obtain these positive and negative actions, and otherwise look quite different from Greedy AC, in addition to being restricted to a discrete set of actions.

Finally, we can also consider the connection to Conservative Policy Iteration (CPI) \citep{kakade2002approximately} and
a generalization called Deep CPI \citep{vieillard2019deep}. CPI updates the policy to be an interpolation between the
greedy policy $G(Q)$ and the current policy $\pi$, to get the new policy $\pi' = (1-\alpha) \pi + \alpha G(Q)$ for
$\alpha \in [0,1]$. Deep CPI extends this idea to parameterized policies, instead minimizing a forward KL to this
interpolation policy. Greedy AC could be seen as another way to obtain a conservative update, because it does not move
the actor all the way to the greedy policy. Instead, it moves towards the percentile-greedy policy (in Equation
\ref{eq_percentile}), which shifts probability to the upper percentile of actions. Similarly to the interpolation
policy, this percentile-greedy policy depends on the previous policy and on increasing probability for maximally valued
actions. As yet, Deep CPI has not been shown to enjoy the same theoretical guarantees as CPI: minimizing a forward KL to
the interpolation policy does not provide the same guarantees. It remains an open question how to implement this
conservative update in deep RL, and it would be interesting to understand if the CCEM update could provide an
alternative route to obtaining such guarantees.

\section{Convergence Analysis of the Actor} \label{app_theory_main}

We provided an informal proof statement and proof sketch in Section \ref{sec_ccem_theory}, to provide intuition for the
result. Here, we provide the formal proof in the following subsections. We first provide some definitions, particularly
for the quantile function which is central to the analysis. We then lay out the assumptions, and discuss some policy
parameterizations to satisfy those assumptions. We finally state the theorem, with proof, and provide one lemma needed
to prove the theorem in the final subsection.

\subsection{Notation and Definitions}

\paragraph{Notation: }
For a set $A$, let $\mathring{A}$ represent the interior of $A$, while $\partial A$ is the boundary of $A$.
The abbreviation $a.s.$ stands for \textit{almost surely} and $i.o.$ stands for \textit{infinitely often}.
Let $\mathbb{N}$ represent the set $\{0,1,2,\dots\}$. For a set $A$, we let $I_{A}$ to be the indicator
function/characteristic function of $A$ and is defined as $I_{A}(x) = 1$ if $x \in A$ and 0 otherwise. Let
$\mathbb{E}_{g}[\cdot]$, $\mathbb{V}_{g}[\cdot]$ and $\mathbb{P}_{g}(\cdot)$ denote the expectation, variance
and probability measure \emph{w.r.t.} $g$. For a $\sigma$-field $\mathcal{F}$, let $\mathbb{E}\left[\cdot |
\mathcal{F}\right]$ represent the conditional expectation \emph{w.r.t.} $\mathcal{F}$.
A function $f:X \rightarrow Y$ is called Lipschitz continuous if $\exists L \in (0, \infty)$ $s.t.$
$\Vert f(\xvec_1) - f(\xvec_2) \Vert \leq L\Vert \xvec_1 - \xvec_2 \Vert$, $\forall \xvec_1, \xvec_2 \in X$. A
function $f$ is called locally Lipschitz continuous if for every $\xvec \in X$, there exists a neighbourhood $U$
of $X$ such that $f_{\vert U}$  is Lipschitz continuous. Let $C(X, Y)$ represent the space of continuous
functions from $X$ to $Y$. Also, let $B_{r}(\xvec)$ represent an open ball of radius $r$ with centered at $\xvec$.
For a positive integer $M$, let $[M] \defeq \{1,2\dots M\}$.

\vspace{0.2cm}
\begin{defn}\label{def:def2}
	A function $\Gamma:U \subseteq \bbbr^{d_1} \rightarrow V  \subseteq \bbbr^{d_2}$ is Frechet differentiable at
	$\xvec \in U$   if there exists a bounded linear operator $\widehat{\Gamma}_{\xvec}:\bbbr^{d_1} \rightarrow
	\bbbr^{d_2}$ such that the limit
	\begin{flalign}\label{eq:frechetdef}
		\lim_{\epsilon \downarrow 0}\frac{\Gamma(\xvec+\epsilon \yvec) - \xvec}{\epsilon}
	\end{flalign}
	exists and is equal to $\widehat{\Gamma}_{\xvec}(\yvec)$. We say $\Gamma$ is Frechet differentiable if Frechet
	derivative of $\Gamma$ exists at every point in its domain.
\end{defn}

\vspace{0.2cm}
\begin{defn}\label{def:def3}
	Given a bounded real-valued continuous function $H:\bbbr^{d} \rightarrow \bbbr$  with $H(a) \in [H_{l}, H_{u}]$
	and a scalar $\rho \in [0,1]$, we define the $(1-\rho)$-quantile of $H(A)$ \emph{w.r.t.} the PDF $g$ (denoted
	as $f^{\rho}(H, g)$) as follows:
	\begin{flalign}\label{eq:quantdef}
		f^{\rho}(H, g) \defeq \sup_{\ell \in [H_{l}, H_{u}]}\{\mathbb{P}_{g}\big(H(A) \geq \ell\big) \geq \rho\},
	\end{flalign}
	where $\mathbb{P}_{g}$ is the probability measure induced by the PDF $g$, \emph{i.e.}, for a Borel set
	$\mathcal{A}$, $\mathbb{P}_{g}(\mathcal{A}) \defeq \int_{\mathcal{A}}g(a)da$.
\end{defn}

This quantile operator will be used to succinctly write the quantile for $Q_{\theta}(S, \cdot)$, with actions
selected according to $\pi_{\pparams}$, i.e.,
\begin{equation}
	f_{\theta}^{\rho}(\pparams; s) \defeq f^{\rho}(Q_{\theta}(s, \cdot), \pi_{\pparams}(\cdot \vert s)) =
	\sup_{\ell \in [Q^{\theta}_{l}, Q^{\theta}_{u}]}\{\mathbb{P}_{\pi_{\pparams}(\cdot|s)}\big(Q_{\theta}(s,A)
	\geq \ell\big) \geq \rho\} \label{eq:quantile}
 .
 \end{equation}

\subsection{Assumptions}

\begin{assm}\label{assm:assm1}
Given a realization of the transition dynamics of the MDP in the form of a sequence of transition tuples $\mathcal{O} \defeq \{(S_{t}, A_{t}, R_{t}, S^{\prime}_{t})\}_{t \in \mathbb{N}}$, where the state $S_t \in \States$ is drawn using a latent sampling distribution $\nu$, while $A_{t} \in \Actions$ is the action chosen at state $S_t$, the transitioned state $\States \ni S^{\prime}_{t} \sim P(S_{t}, A_{t}, \cdot)$ and the reward $\bbbr \ni R_{t} \defeq R(S_{t}, A_{t}, S^{\prime}_{t})$. We further assume that the reward is uniformly bounded, \emph{i.e.}, $\vert R(\cdot, \cdot, \cdot) \vert < R_{max} < \infty$.
\end{assm}
We analyze the long run behaviour of the conditional cross-entropy recursion (actor) which is defined as follows:
\begin{flalign}
	&\pparams_{t+1} \defeq \Gamma^{W}\left\{\pparams_{t} + \alpha_{a, t} \frac{1}{N_{t}}\sum_{A \in \Xi_t}{I_{\{Q_{\theta_t}(S_{t}, A) \geq \widehat{f}^{\rho}_{t+1}\}}}  \nabla_{\pparams_t}  \ln \pi_\pparams(A|S_{t})\right\},\label{eq:pparamsrec}\\
	&\hspace*{30mm}\text { where } \Xi_{t} \defeq \{A_{t,1}, A_{t,2}, \dots, A_{t,N_t}\} \mysim\pi_{\pparams^{\prime}_{t}}(\cdot | S_{t}).\nonumber\\\nonumber\\
	&\pparams^{\prime}_{t+1} \defeq \pparams^{\prime}_{t} + \alpha^{\prime}_{a, t}\left(\pparams_{t+1} - \pparams^{\prime}_{t}\right).\label{eq:pparams_prime_rec}
\end{flalign}
Here, $\Gamma^{W}\{\cdot\}$ is the projection operator onto the compact (closed and bounded) and convex set $W \subset \bbbr^{m}$  with a
smooth boundary $\partial W$. Therefore, $\Gamma^{W}$ maps vectors in  $\bbbr^{m}$ to the nearest vectors in  $W$  \emph{w.r.t.} the
Euclidean distance (or equivalent metric). Convexity and compactness ensure that the projection is
unique and belongs to $W$.
\begin{assm}\label{assm:assm2}
The pre-determined, deterministic, step-size sequences $\{\alpha_{a, t}\}_{t \in \mathbb{N}}$, $\{\alpha^{\prime}_{a, t}\}_{t \in \mathbb{N}}$ and $\{\alpha_{q, t}\}_{t \in \mathbb{N}}$  are positive scalars which satisfy the following:
\begin{flalign*}
&\sum_{t \in \mathbb{N}}\alpha_{a, t} = \sum_{t \in \mathbb{N}}\alpha^{\prime}_{a, t} = \sum_{t \in \mathbb{N}}\alpha_{q, t} = \infty\\
& \sum_{t \in \mathbb{N}}\left(\alpha^{2}_{a, t} + {\alpha^{\prime}}^{2}_{a, t} + \alpha^{2}_{q, t}\right) < \infty\\
&\lim_{t \rightarrow \infty}\frac{\alpha^{\prime}_{a, t}}{\alpha_{a, t}} = 0, \hspace*{4mm} \lim_{t \rightarrow \infty}\frac{\alpha_{q, t}}{\alpha_{a, t}} = 0.
\end{flalign*}
\end{assm}
The first conditions in Assumption \ref{assm:assm2} are the classical Robbins-Monro conditions \citep{robbins1985stochastic} required for stochastic approximation algorithms. The last two conditions enable the different stochastic recursions to have separate timescales. Indeed, it ensures the $\pparams_t$ recursion is faster compared to the recursions of $\theta_t$ and $\pparams^{\prime}_{t}$. This timescale divide is needed to obtain the desired asymptotic behaviour, as we describe in the next section.
\begin{assm}\label{assm:assm3}
The pre-determined, deterministic, sample length schedule $\{N_{t} \in \mathbb{N}\}_{t \in \mathbb{N}}$ is positive and strictly monotonically increases to $\infty$ and $\inf_{t \in \mathbb{N}}\frac{N_{t+1}}{N_{t}} > 1$.
\end{assm}
Assumption \ref{assm:assm3} states that the number of samples increases to infinity and is primarily required to ensure that the estimation error arising due to the estimation of sample quantiles eventually decays to $0$. Practically, one can indeed consider a fixed, finite, positive integer for $N_t$ which is large enough to accommodate the acceptable error.
\begin{assm}\label{assm:assm4}
	The sequence $\{\theta_{t}\}_{t \in \mathbb{N}}$ satisfies $\theta_t \in \Theta$, where $\Theta$ $\subset \bbbr^{n}$ is a convex, compact set. Also, for $\theta \in \Theta$, let $Q_{\theta}(s, a) \in [Q^{\theta}_{l}, Q^{\theta}_{u}]$, $\forall s \in \States, a \in \Actions$.
\end{assm}
Assumption \ref{assm:assm4} assumes stability of the Expert, and minimally only requires that the values remain in a bounded range. We make no additional assumptions on the convergence properties of the Expert, as we simply need stability to prove the Actor tracks the update.
\begin{assm}\label{assm:assm5}
	For $\theta \in \Theta$ and $s \in \States$, let $\mathbb{P}_{A \sim \pi_{\pparams^{\prime}}(\cdot \vert s)}\left(Q_{\theta}(s, A) \geq \ell\right) > 0$, $\forall \ell \in [Q^{\theta}_{l}, Q^{\theta}_{u}]$ and $\forall \pparams^{\prime} \in W$.
\end{assm}
Assumption \ref{assm:assm5} implies that there always exists a strictly positive probability mass beyond every threshold $\ell \in [Q^{\theta}_{l}, Q^{\theta}_{u}]$. This assumption is easily satisfied when $Q_{\theta}(s, a)$ is continuous in $a$ and $\pi_{\pparams}(\cdot \vert s)$ is a continuous probability density function.
\begin{assm}\label{assm:assm6}
	\begin{flalign*}
		&\sup_{\substack{\pparams, \pparams^{\prime} \in W, \\ \theta \in \Theta, \ell \in [Q^{\theta}_{l}, Q^{\theta}_{u}]}}
		\mathbb{E}_{A \sim \pi_{\pparams^{\prime}}(\cdot \vert S)}\Big[\Big\Vert I_{\{Q_{\theta}(S, A) \geq \ell\}}\nabla_{\pparams}\ln{\pi_{\pparams}(A \vert S)} - \\[-20pt]
		&\hspace*{40mm}\mathbb{E}_{A \sim \pi_{\pparams^{\prime}}(\cdot \vert S)}\left[I_{\{Q_{\theta}(S, A) \geq \ell\}}\nabla_{\pparams}\ln{\pi_{\pparams}(A \vert S)} \big\vert S\right] \Big\Vert_{2}^{2} \Big\vert S\Big] < \infty \hspace*{2mm} a.s.,\\
		&\sup_{\substack{\pparams, \pparams^{\prime} \in W, \\ \theta \in \Theta, \ell \in [Q^{\theta}_{l}, Q^{\theta}_{u}]}}  \mathbb{E}_{A \sim \pi_{\pparams^{\prime}}(\cdot \vert S)}\left[\Big\Vert I_{\{Q_{\theta}(S, A) \geq \ell\}}  \nabla_{\pparams}  \ln \pi_{\pparams}(A | S) \Big\Vert_{2}^{2} \Big\vert S \right] < \infty \hspace*{2mm} a.s.
	\end{flalign*}
\end{assm}
\begin{assm}\label{assm:assm7}
	For $s \in \States$, $\nabla_{\pparams}\ln{\pi_{\pparams}(\cdot \vert s)}$ is locally Lipschitz continuous w.r.t. $\pparams$.
\end{assm}
Assumptions \ref{assm:assm6} and \ref{assm:assm7} are technical requirements that can be more easily characterized when we consider $\pi_{\pparams}$ to belong to the natural exponential family (NEF) of distributions.
\begin{defn}\label{def:def1}
\textbf{Natural exponential family of distributions (NEF)\citep{morris1982natural}: }These probability distributions over $\bbbr^{m}$ are represented by
\begin{equation}\label{eqn:nefdef}
 \{\pi_{\eta}(\xvec) \defeq h(\xvec)e^{\eta^{\top}T(\xvec)-K(\eta)} \mid \eta \in \Lambda \subset \bbbr^d\},
\end{equation}
	where $\eta$ is the natural parameter, $h:\bbbr^{m} \longrightarrow \bbbr$, while $T:\bbbr^{m} \longrightarrow \bbbr^{d}$ (called the \textit{sufficient statistic}) and $K(\eta) \defeq \ln{\int{h(\xvec)e^{\eta^{\top}T(\xvec)}d\xvec}}$ (called the \textit{cumulant function} of the family). The space $\Lambda$ is  defined as $\Lambda \defeq \{\eta \in \bbbr^{d} \vert \hspace*{3mm}\vert K(\eta) \vert < \infty\}$. Also, the above representation is assumed minimal.\footnote{For a distribution in NEF, there may exist multiple representations of the form (\ref{eqn:nefdef}). However, for the distribution, there definitely exists a representation where the components of the sufficient statistic are linearly independent and such a representation is referred to as \textit{minimal}.}
A few popular distributions which belong to the NEF family include Binomial, Poisson, Bernoulli, Gaussian, Geometric and Exponential distributions.
\end{defn}
We parametrize the policy $\pi_{\pparams}(\cdot \vert S)$ using a neural network, which implies that when we consider NEF for the stochastic policy, the natural parameter $\eta$ of the NEF is being parametrized by $\pparams$. To be more specific, we have $\{\psi_{\pparams}:\States \rightarrow \Lambda | \pparams \in \bbbr^{m}\}$ to be the function space induced by the neural network of the actor, \emph{i.e.}, for a given state $s \in \States$, $\psi_{\pparams}(s)$ represents the natural parameter of the NEF policy $\pi_{\pparams}(\cdot \vert s)$. Further,
\begin{flalign}
	\nabla_{\pparams}  \ln \pi_\pparams(A|S)
	& = \ln{(h(A))} + \psi_{\wvec}(S_{t})^{\top}T(A) - K(\psi_{\wvec}(S))\nonumber\\
	&=  \nabla_{\pparams}\psi_{\pparams}(S) \left(T(A) - \nabla_{\eta}K(\psi_{\pparams}(S))\right).\nonumber\\
	&=  \nabla_{\pparams}\psi_{\pparams}(S) \left(T(A) - \mathbb{E}_{A \sim \pi_{\pparams}(\cdot \vert S)}\left[T(A)\right]\right).
\end{flalign}
Therefore Assumption \ref{assm:assm7} can be directly satisfied by assuming that $\psi_{w}$ is twice continuously differentiable \emph{w.r.t.} $\pparams$.

\begin{assm}\label{assm:assm9}
	For every $\theta \in \Theta$, $s \in \States$ and $\pparams \in W$, $f_{\theta}^{\rho}(\pparams; s)$ (from Eq. \eqref{eq:quantile}) exists and is unique.
\end{assm}
The above assumption ensures that the true $(1-\rho)$-quantile is unique and the assumption is usually satisfied for most distributions and a well-behaved $Q_{\theta}$.

\subsection{Main Theorem}

To analyze the algorithm, we  employ here the ODE-based analysis as proposed in
\citep{borkar2008stochastic,kushner2012stochastic}.  The actor recursions (Eqs.
(\ref{eq:pparamsrec}-\ref{eq:pparams_prime_rec})) represent a classical two timescale stochastic approximation
recursion, where there exists a bilateral coupling between the individual stochastic recursions (\ref{eq:pparamsrec})
and (\ref{eq:pparams_prime_rec}). Since the step-size schedules $\{\alpha_{a,t}\}_{t \in \mathbb{N}}$ and
$\{\alpha^{\prime}_{a, t}\}_{t \in \mathbb{N}}$ satisfy $\frac{\alpha^{\prime}_{a,t}}{\alpha_{a,t}} \rightarrow 0$, we
have $\alpha^{\prime}_{a,t} \rightarrow 0$ relatively faster than $\alpha_{a,t} \rightarrow 0$. This disparity induces a
pseudo-heterogeneous rate of convergence (or timescales) between the individual stochastic recursions which further
amounts to the asymptotic emergence of a stable coherent behaviour which is quasi-asynchronous.  This pseudo-behaviour
can be interpreted using multiple viewpoints. When viewed from the faster timescale recursion--- controlled by
$\alpha_{a, t}$---the slower timescale recursion---controlled by $\alpha^{\prime}_{a, t}$---appears quasi-static, i.e.,
almost a constant. Likewise, when observed from the slower timescale, the faster timescale recursion seems equilibrated.

The existence of this stable long run behaviour under certain standard assumptions of stochastic approximation
algorithms is rigorously established in \citep{borkar1997stochastic} and also in Chapter 6 of
\citep{borkar2008stochastic}. For our stochastic approximation setting (Eqs.
(\ref{eq:pparamsrec}-\ref{eq:pparams_prime_rec})), we can directly apply this appealing characterization of the long run
behaviour of the two timescale stochastic approximation algorithms---after ensuring the compliance of our setting to the
pre-requisites demanded by the characterization---by considering the slow timescale stochastic recursion
(\ref{eq:pparams_prime_rec}) to be quasi-stationary (\emph{i.e.}, ${\pparams^{\prime}}_t \equiv \pparams^{\prime}$,
$a.s.$, $\forall t \in \mathbb{N}$), while analyzing the limiting behaviour of the  faster timescale recursion
(\ref{eq:pparamsrec}). Similarly, we let $\theta_t$ to be quasi-stationary too (\emph{i.e.}, $\theta_t \equiv \theta$,
$a.s.$, $\forall t \in \mathbb{N}$). The asymptotic behaviour of the slower timescale recursion is further analyzed by
considering the faster timescale temporal variable $\pparams_t$ with the limit point so obtained during quasi-stationary
analysis.

Define the filtration $\{\mathcal{F}_{t}\}_{t \in \mathbb{N}}$, a family of increasing natural $\sigma$-fields, where
\begin{equation*}
\mathcal{F}_{t}
\defeq \sigma\left(\{{\pparams}_i, {\pparams}^{\prime}_i, (S_i, A_i, R_i, S^{\prime}_i), \Xi_{i}; 0 \leq i \leq t\}\right)
.
\end{equation*}
\begin{theorem}\label{thm_main}
	Let $\pparams^{\prime}_{t} \equiv \pparams^{\prime}, \theta_{t} \equiv \theta, \forall t \in \mathbb{N}$ $a.s.$
	Let Assumptions \ref{assm:assm1}-\ref{assm:assm9} hold. Then the stochastic sequence $\{\pparams_{t}\}_{t \in \mathbb{N}}$ generated by the stochastic recursion (\ref{eq:pparamsrec}) asymptotically tracks the ODE:
\begin{equation}\label{eq:wt_thm_ode}
	\frac{d}{dt}{\pparams}(t) = \widehat{\Gamma}^{W}_{{\pparams}(t)}\left(\nabla_{\pparams(t)}\mathbb{E}_{\substack{S \sim \nu, A \sim \pi_{\pparams^{\prime}}(\cdot \vert S)}}\Big[I_{\{Q_{\theta}(S, A) \geq f_{\theta}^{\rho}(\pparams^{\prime}; S)\}} \ln \pi_{\pparams(t)}(A|S)\Big]\right), \hspace*{4mm} t \geq 0.
\end{equation}
	In other words, $\lim_{t \rightarrow \infty}\pparams_{t} \in \mathcal{K}$ $a.s.$, where $\mathcal{K}$ is set of stable equilibria of the ODE (\ref{eq:wt_thm_ode}) contained inside $W$.
\end{theorem}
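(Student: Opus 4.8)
The plan is to cast the recursion \eqref{eq:pparamsrec} as a projected stochastic approximation scheme and invoke the ODE method of \citep{borkar2008stochastic,kushner2012stochastic}. With $\pparams'$ and $\theta$ genuinely frozen (as the theorem hypothesis dictates, and as justified in the full system by the timescale separation in Assumption \ref{assm:assm2}), the sampling law $\pi_{\pparams'}(\cdot|S_t)$ and the value function $Q_\theta$ are fixed, so the only $\pparams_t$-dependence in the increment is through $\nabla_{\pparams_t}\ln\pi_{\pparams_t}$. The first step is to define the mean vector field
$$g^{\theta}(\pparams) \defeq \mathbb{E}_{S \sim \nu,\, A \sim \pi_{\pparams'}(\cdot|S)}\Big[I_{\{Q_{\theta}(S,A) \geq f^{\rho}_{\theta}(\pparams'; S)\}}\, \nabla_{\pparams} \ln \pi_{\pparams}(A|S)\Big],$$
which, since the indicator does not depend on $\pparams$, is exactly the gradient appearing inside the projection in \eqref{eq:wt_thm_ode}. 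I would then decompose the bracketed increment in \eqref{eq:pparamsrec} as $g^\theta(\pparams_t) + M_{t+1} + \ell^\theta_t$, where $M_{t+1}$ is the martingale difference between the sample average formed with the \emph{true} quantile $f^\rho_\theta(\pparams';S_t)$ and its conditional mean $g^\theta(\pparams_t)$, and $\ell^\theta_t$ is the residual from replacing the true quantile by the empirical quantile $\widehat f^\rho_{t+1}$.

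Two comparatively routine steps verify the hypotheses of the projected ODE theorem for $g^\theta$ and $M_{t+1}$. Local Lipschitz continuity of $g^\theta$ follows from Assumption \ref{assm:assm7} together with the boundedness in Assumption \ref{assm:assm6}, which licenses differentiating under the expectation and bounds the Lipschitz modulus uniformly over the compact sets $W$ and $\Theta$; the fixed indicator contributes no $\pparams$-dependence. For the noise, Assumption \ref{assm:assm6} gives $\mathbb{E}[\|M_{t+1}\|^2 \mid \mathcal{F}_t] \le K(1 + \|\pparams_t\|^2)$ a.s., and since $\Gamma^W$ confines $\pparams_t$ to the compact set $W$ and $\sum_t \alpha_{a,t}^2 < \infty$ (Assumption \ref{assm:assm2}), the martingale $\sum_t \alpha_{a,t} M_{t+1}$ converges almost surely.

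The hard part will be showing $\ell^\theta_t \to 0$ a.s., which is where the conditional structure departs from classical CEM. I would bound $\ell^\theta_t$ by the quantile-estimation error $|\widehat f^\rho_{t+1} - f^\rho_\theta(\pparams';S_t)|$. Assumption \ref{assm:assm9} (uniqueness of the true quantile) and Assumption \ref{assm:assm5} (strictly positive mass beyond every threshold) make the level-to-tail-probability map strictly monotone, giving consistency of the empirical quantile; Assumption \ref{assm:assm8} upgrades this to an exponential concentration rate in $N_t$ that is uniform in $s,\pparams,\pparams',\theta$. Combining this exponential bound with the geometric growth $\inf_t N_{t+1}/N_t > 1$ (Assumption \ref{assm:assm3}) makes the tail probabilities summable, so a Borel--Cantelli argument forces the per-step quantile error, and hence $\ell^\theta_t$, to $0$ almost surely. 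The delicate point is that the conditioning state $S_t$ changes every step, so consistency must hold uniformly over $\States$ and be integrated against $\nu$; this is precisely the role of the uniform constants $C_1,c_2,c_3,c_4$ in Assumption \ref{assm:assm8}, and I would adapt the classical sampled-quantile argument for CEM to carry the extra expectation over states through these uniform bounds.

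With a locally Lipschitz mean field, an almost surely convergent martingale contribution, a vanishing bias term, iterates confined to the compact convex set $W$, and Robbins--Monro stepsizes (Assumption \ref{assm:assm2}), the Kushner--Clark theorem for projected stochastic approximation applies: the interpolated trajectory of $\{\pparams_t\}$ is an asymptotic pseudo-trajectory of the projected ODE \eqref{eq:wt_thm_ode}, where the boundary reflection is captured by the Frechet derivative $\widehat\Gamma^W_{\pparams(t)}$ (Definition \ref{def:def2}). Hence $\{\pparams_t\}$ converges almost surely to the internally chain-transitive invariant set of this ODE, namely the set $\mathcal{K}$ of its stable equilibria contained in $W$, which completes the argument.
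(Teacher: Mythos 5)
Your proposal is correct and follows essentially the same route as the paper: the same mean field $g^{\theta}$, a three-way decomposition of the increment into mean field, martingale noise, and a vanishing quantile-approximation bias, verification of the Lipschitz, noise-moment, and bias conditions from Assumptions \ref{assm:assm5}--\ref{assm:assm9} (with Borel--Cantelli over the geometrically growing $N_t$), and a final appeal to the projected-ODE theorem of Borkar/Kushner--Clark. The only difference is bookkeeping: the paper centers its martingale term at the conditional mean of the empirical-quantile sample average and places the empirical-versus-true quantile discrepancy in a difference of expectations, whereas you center at $g^{\theta}$ directly and carry that discrepancy as a pathwise difference of sample averages; both variants are resolved by the same sample-quantile convergence lemma.
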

\begin{proof}
Firstly, we rewrite the stochastic recursion (\ref{eq:pparamsrec}) under the hypothesis that $\theta_t$ and $\pparams^{\prime}_{t}$ are quasi-stationary, \emph{i.e.},  $\theta_t \underset{a.s.}{\equiv} \theta$  and $\pparams^{\prime}_{t} \underset{a.s.}{\equiv} \pparams^{\prime}$ as follows:
\begin{equation}\label{eq:wtrec_i_p_first}
	\pparams_{t+1} \defeq \Gamma^{W}\left\{\pparams_{t} + \alpha_{a, t} \frac{1}{N_{t}}\sum_{A \in \Xi_{t}}{I_{\{Q_{\theta}(S_{t}, A) \geq \widehat{f}^{\rho}_{t+1}\}}}  \nabla_\pparams  \ln \pi_\pparams(A|S_{t})\right\}
\end{equation}
where $f_{\theta}^{\rho}(\pparams^{\prime}; S) \defeq f^{\rho}(Q_{\theta}(S, \cdot), \pi_{\pparams^{\prime}}(\cdot \vert S))$ and $\nabla_{\pparams_t} \defeq \nabla_{\pparams = \pparams_t}$, \emph{i.e.}, the gradient \emph{w.r.t.} $\pparams$ at $\pparams_t$. Define
\begin{flalign}
	g^{\theta}(\pparams) &\defeq \mathbb{E}_{S_{t} \sim \nu, A \sim \pi_{\pparams^{\prime}}(\cdot \vert S_{t})}\Big[I_{\{Q_{\theta}(S_{t}, A) \geq f_{\theta}^{\rho}(\pparams^{\prime}; S_{t})\}}  \nabla_{\pparams}  \ln \pi_\pparams(A|S_{t})\Big].&&
\end{flalign}
\begin{flalign}
	\mathbb{M}_{t+1} &\defeq
	\frac{1}{N_{t}}\sum_{A \in \Xi_{t}}{I_{\{Q_{\theta}(S_{t}, A) \geq \widehat{f}^{\rho}_{t+1}\}}  \nabla_{\pparams_{t}}  \ln \pi_\pparams(A|S_{t})}  - &&\nonumber\\
	&\hspace*{10mm}\mathbb{E}\Bigg[\frac{1}{N_{t}}\sum_{A \in \Xi_{t}}{I_{\{Q_{\theta}(S_{t}, A) \geq \widehat{f}^{\rho}_{t+1}\}}  \nabla_{\pparams_{t}}  \ln \pi_\pparams(A|S_{t})} \Big\vert \mathcal{F}_{t}\Bigg].
\end{flalign}
\begin{flalign}
	\ell^{\theta}_{t} &\defeq \mathbb{E}\bigg[\frac{1}{N_{t}}\sum_{A \in \Xi_{t}}{I_{\{Q_{\theta}(S_{t}, A) \geq \widehat{f}^{\rho}_{t+1}\}}  \nabla_{\pparams_t}  \ln \pi_\pparams(A | S_{t})} \bigg\vert \mathcal{F}_{t}\bigg] - &&\nonumber\\
&\hspace*{10mm}\mathbb{E}_{S_{t} \sim \nu, A \sim \pi_{\pparams^{\prime}}(\cdot \vert S_{t})}\Big[I_{\{Q_{\theta}(S_{t}, A) \geq f_{\theta}^{\rho}(\pparams^{\prime}; S_{t})\}}  \nabla_{\pparams_t}  \ln \pi_\pparams(A | S_{t})\Big]
\end{flalign}
Then we can rewrite
\begin{flalign}\label{eq:wtrec_i_p}
	\eqref{eq:wtrec_i_p_first}
	&= \Gamma^{W}\Bigg\{\pparams_{t} + \alpha_{a, t}\Bigg(\mathbb{E}_{S_{t} \sim \nu, A \sim \pi_{\pparams^{\prime}}(\cdot \vert S_{t})}\left[I_{\{Q_{\theta}(S_{t}, A) \geq f_{\theta}^{\rho}(\pparams^{\prime}; S_{t})\}}  \nabla_{\pparams_t}  \ln \pi_\pparams(A|S_{t})\right] - \nonumber\\
	&\hspace*{15mm}\mathbb{E}_{S_{t} \sim \nu, A \sim \pi_{\pparams^{\prime}}(\cdot \vert S_{t})}\Big[I_{\{Q_{\theta}(S_{t}, A) \geq f_{\theta}^{\rho}(\pparams^{\prime}; S_{t})\}}  \nabla_{\pparams_t}  \ln \pi_\pparams(A|S_{t})\Big]  + \nonumber\\
	&\hspace*{20mm}\mathbb{E}\bigg[\frac{1}{N_{t}}\sum_{A \in \Xi_{t}}{I_{\{Q_{\theta}(S_{t}, A) \geq \widehat{f}^{\rho}_{t+1}\}}  \nabla_{\pparams_t}  \ln \pi_\pparams(A|S_{t})} \bigg\vert \mathcal{F}_{t}\bigg] - \nonumber\\
	&\hspace*{25mm}\mathbb{E}\bigg[\frac{1}{N_{t}}\sum_{A \in \Xi_{t}}{I_{\{Q_{\theta}(S_{t}, A) \geq \widehat{f}^{\rho}_{t+1}\}}  \nabla_{\pparams_t}  \ln \pi_\pparams(A | S_{t})} \bigg\vert \mathcal{F}_{t}\bigg] + \nonumber\\
	&\hspace*{30mm}\frac{1}{N_{t}}\sum_{A \in \Xi_{t}}{I_{\{Q_{\theta}(S_{t}, A) \geq \widehat{f}^{\rho}_{t+1}\}}}  \nabla_{\pparams_{t}}  \ln \pi_\pparams(A|S_{t})\Bigg)\Bigg\}.\nonumber\\\nonumber\\
	&=\Gamma^{W}\Big\{g^{\theta}(\pparams_t) + \mathbb{M}_{t+1} + \ell^{\theta}_{t}\Big\},
\end{flalign}

\vspace*{5mm}

\noindent
A few observations are in order:
\begin{enumerate}
	\item[B1.]
	$\{\mathbb{M}_{t+1}\}_{t \in \mathbb{N}}$ is a martingale difference noise sequence \emph{w.r.t.} the filtration $\{\mathcal{F}_{t}\}_{t \in \mathbb{N}}$, \emph{i.e.}, $\mathbb{M}_{t+1}$ is $\mathcal{F}_{t+1}$-measurable and integrable, $\forall t \in \mathbb{N}$ and $\mathbb{E}\left[\mathbb{M}_{t+1} \vert \mathcal{F}_{t}\right] = 0$ $a.s.$, $\forall t \in \mathbb{N}$.
\item[B2.]
	$g^{\theta}$ is locally Lipschitz continuous. This follows from Assumption \ref{assm:assm7}.
\item[B3.]
	$\ell^{\theta}_{t} \rightarrow 0$ $a.s.$ as $t \rightarrow \infty$. (By Lemma \ref{lem:ltheta_conv} below).
\item[B4.]
	The iterates $\{\pparams_{t}\}_{t \in \mathbb{N}}$ is bounded almost surely, \emph{i.e.},
\begin{flalign*}
\sup_{t \in \mathbb{N}} \Vert \pparams_{t} \Vert < \infty \hspace*{4mm} a.s.
\end{flalign*}
		This is ensured by the explicit application of the projection operator $\Gamma^{W}\{\cdot\}$ over the iterates $\{\pparams_t\}_{t \in \mathbb{N}}$ at every iteration onto the bounded set $W$.
\item[B5.]
	$\exists L \in (0,\infty) \hspace*{2mm} s.t. \hspace*{2mm} \mathbb{E}\left[\Vert \mathbb{M}_{t+1} \Vert^{2} \vert \mathcal{F}_{t}\right] \leq L\left(1+\Vert \pparams_{t} \Vert^{2}\right) \hspace*{2mm} a.s.$\vspace*{4mm}\\
This follows from Assumption \ref{assm:assm6} (ii).
\end{enumerate}
\vspace*{10mm}
Now, we rewrite the stochastic recursion (\ref{eq:wtrec_i_p}) as follows:
\begin{flalign}\label{eq:projexpert}
	\pparams_{t+1} &\defeq \pparams_{t} +  \alpha_{a,t}\frac{\Gamma^{W}\left\{\pparams_{t} + \xi_{t}\left(g^{\theta}(\pparams_{t}) + \mathbb{M}_{t+1} + \ell^{\theta}_{t}\right)\right\} - \pparams_{t}}{\alpha_{a, t}}\nonumber\\
&= {\pparams}_{t} + \alpha_{a, t}\left(\widehat{\Gamma}^{W}_{{\pparams}_t}(g^{\theta}({\pparams}_{t})) + \widehat{\Gamma}^{W}_{{\pparams}_t}\left(\mathbb{M}_{t+1}\right) + \widehat{\Gamma}^{W}_{{\pparams}_t}\left({\ell}^{\theta}_{t}\right) + o(\alpha_{a, t})\right),\end{flalign}
where $\widehat{\Gamma}^{W}$ is the Frechet derivative (Definition \ref{def:def1}).

The above stochastic recursion is also a stochastic approximation recursion with the vector field  $\widehat{\Gamma}^{W}_{{\pparams}_t}(g^{\theta}({\pparams}_{t}))$, the noise term $\widehat{\Gamma}^{W}_{{\pparams}_t}\left(\mathbb{M}_{t+1}\right)$, the bias term $\widehat{\Gamma}^{W}_{{\pparams}_t}\left({\ell}^{\theta}_{t}\right)$ with an additional error term $o(\alpha_{a, t})$ which is asymptotically inconsequential.

Also, note that $\Gamma^{W}$ is single-valued map since the set $W$ is assumed convex and also the limit exists since the boundary $\partial W$ is considered smooth. Further, for $\pparams \in \mathring{W}$, we have
\begin{flalign}\label{eq:projimap}
\widehat{\Gamma}^{W}_{\pparams}(\uvec) \defeq \lim_{\epsilon \rightarrow 0}\frac{\Gamma^{W}\left\{\pparams + \epsilon \uvec\right\} - \pparams}{\epsilon} = \lim_{\epsilon \rightarrow 0}\frac{\pparams + \epsilon \uvec - \pparams}{\epsilon}  = \uvec \text{ (for sufficiently small }\epsilon),
\end{flalign}
\emph{i.e.}, $\widehat{\Gamma}^{W}_{\pparams}(\cdot)$ is an identity map for $\pparams \in \mathring{W}$.

Now by appealing to Theorem 2, Chapter 2 of \citep{borkar2008stochastic} along with the observations B1-B5, we conclude that the stochastic recursion (\ref{eq:pparamsrec}) asymptotically tracks the following ODE almost surely:
\begin{flalign}\label{eq:projexpert_ode}
\frac{d}{dt}{\pparams}(t) &= \widehat{\Gamma}^{W}_{{\pparams}(t)}(g^{\theta}({\pparams}(t))), \hspace*{4mm} t \geq 0\nonumber\\
&= \widehat{\Gamma}^{W}_{{\pparams}(t)}\left(\mathbb{E}_{\substack{S \sim \nu, A \sim \pi_{\pparams^{\prime}}(\cdot \vert S)}}\Big[I_{\{Q_{\theta}(S, A) \geq f_{\theta}^{\rho}(\pparams^{\prime}; S)\}}  \nabla_{\pparams(t)}  \ln \pi_\pparams(A|S)\Big]\right)\nonumber\\
&= \widehat{\Gamma}^{W}_{{\pparams}(t)}\left(\nabla_{\pparams(t)}\mathbb{E}_{\substack{S \sim \nu, A \sim \pi_{\pparams^{\prime}}(\cdot \vert S)}}\Big[I_{\{Q_{\theta}(S, A) \geq f_{\theta}^{\rho}(\pparams^{\prime}; S)\}} \ln \pi_\pparams(A|S)\Big]\right).
\end{flalign}
The interchange of expectation and the gradient in the last equality follows from dominated convergence theorem and
Assumption \ref{assm:assm7} \citep{rubinstein1993discrete}.  The above ODE is a gradient flow  with dynamics restricted
inside $W$. This further implies that the stochastic recursion (\ref{eq:pparamsrec}) converges to a (possibly sample
path dependent) asymptotically stable equilibrium point of the above ODE inside $W$.
\end{proof}

\subsection{Proof of Lemma \ref{lem:ltheta_conv} to satisfy Condition 3}

In this section, we show that $\ell^{\theta}_{t} \rightarrow 0$ $a.s.$ as $t \rightarrow \infty$, in Lemma
\ref{lem:ltheta_conv}. To do so, we first need to prove several supporting lemmas. Lemma \ref{lem:quant_conv} shows
that, for a given Actor and Expert, the sample quantile converges to the true quantile. Using this lemma, we can then
prove Lemma \ref{lem:ltheta_conv}. In the following subsection, we provide three supporting lemmas about convexity and
Lipschitz properties of the sample quantiles, required for the proof Lemma \ref{lem:quant_conv}.

For this section, we require the following characterization of $f^{\rho}(Q_{\theta}(s, \cdot), \pparams^{\prime})$. Please refer Lemma 1 of \citep{homem2007study} for more details.
\begin{flalign}\label{eq:quant_chn}
	f^{\rho}(Q_{\theta}(s, \cdot), \pparams^{\prime}) = \argmin_{\ell \in [ Q^{\theta}_{l}, Q^{\theta}_{u} ]}{\mathbb{E}_{A \sim \pi_{\pparams^{\prime}}(\cdot \vert s)}\left[\Psi(Q_{\theta}(s, A), \ell)\right]},
\end{flalign}
where $\Psi(y, \ell) \defeq (y - \ell)(1-\rho)I_{\{y \geq \ell \}} +  (\ell - y)\rho I_{\{\ell \geq y\}}$.\\

Similarly, the sample estimate of the true $(1-\rho)$-quantile, \emph{i.e.}, $\widehat{f}^{\rho} \defeq Q^{(\lceil(1-\rho)N\rceil)}_{\theta, s}$, (where $Q_{\theta, s}^{(i)}$ is the $i$-th order statistic of the random sample $\{Q_{\theta}(s, A)\}_{A \in \Xi}$ with $\Xi \defeq \{A_i\}_{i=1}^{N} \mysim \pi_{\pparams^{\prime}}(\cdot \vert s)$) can be characterized as the unique solution of the stochastic counterpart of the above optimization problem, \emph{i.e.},
\begin{flalign}\label{eq:quant_hat}
	\widehat{f}^{\rho} = \argmin_{\ell \in [ Q^{\theta}_{l}, Q^{\theta}_{u} ]}{\frac{1}{N}\sum_{\substack{A \in \Xi \\ \vert \Xi \vert = N}}\Psi(Q_{\theta}(s, A), \ell)}.
\end{flalign}

\begin{lem}\label{lem:quant_conv}
	Assume $\theta_t \equiv \theta$, $\pparams^{\prime}_{t} \equiv \pparams^{\prime}$, $\forall t \in \mathbb{N}$. Also, let Assumptions 3-5 hold. Then, for a given state $s \in \States$,
	\begin{flalign*}
		\lim_{t \rightarrow \infty}\widehat{f}^{\rho}_{t} = f^{\rho}(Q_{\theta}(s, \cdot), \pparams^{\prime}) \hspace*{2mm} a.s.,
	\end{flalign*}
	where $\widehat{f}^{\rho}_{t} \defeq Q^{(\lceil(1-\rho)N_t\rceil)}_{\theta, s}$, (where $Q_{\theta, s}^{(i)}$ is the $i$-th order statistic of the random sample $\{Q_{\theta}(s, A)\}_{A \in \Xi_{t}}$ with $\Xi_{t} \defeq \{A_i\}_{i=1}^{N_{t}} \mysim \pi_{\pparams^{\prime}}(\cdot \vert s)$).
\end{lem}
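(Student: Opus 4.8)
The plan is to exploit the variational (check-loss) characterization of the quantile in Eqs.~\eqref{eq:quant_chn} and \eqref{eq:quant_hat}, which recasts the claim as the consistency of an M-estimator. Writing $\Phi(\ell) \defeq \mathbb{E}_{A \sim \pi_{\pparams^{\prime}}(\cdot | s)}[\Psi(Q_{\theta}(s,A), \ell)]$ and $\widehat{\Phi}_{t}(\ell) \defeq \frac{1}{N_{t}}\sum_{A \in \Xi_{t}}\Psi(Q_{\theta}(s,A), \ell)$, the true quantile $f^{\rho}(Q_{\theta}(s,\cdot), \pparams^{\prime})$ and its estimate $\widehat{f}^{\rho}_{t}$ are, respectively, the minimizers of $\Phi$ and $\widehat{\Phi}_{t}$ over the compact interval $[Q^{\theta}_{l}, Q^{\theta}_{u}]$, which is finite by Assumption~\ref{assm:assm4}. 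Since $\Psi(y, \cdot)$ is convex and $1$-Lipschitz in $\ell$ uniformly over the bounded range of $y = Q_{\theta}(s, A)$, both $\Phi$ and every $\widehat{\Phi}_{t}$ are convex and Lipschitz; this is precisely the structural content of the supporting convexity/Lipschitz lemmas referenced after the statement, and it is what makes minimizer convergence tractable.

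First I would establish pointwise almost-sure convergence $\widehat{\Phi}_{t}(\ell) \to \Phi(\ell)$ for each fixed $\ell$. Under the quasi-stationarity hypothesis $\theta_{t} \equiv \theta$, $\pparams^{\prime}_{t} \equiv \pparams^{\prime}$, the set $\Xi_{t}$ consists of $N_{t}$ i.i.d.\ draws from the fixed density $\pi_{\pparams^{\prime}}(\cdot | s)$, so $\widehat{\Phi}_{t}(\ell)$ is an empirical mean of i.i.d.\ random variables $\Psi(Q_{\theta}(s,A_{i}), \ell)$ that are bounded by $Q^{\theta}_{u} - Q^{\theta}_{l}$. A Hoeffding bound then gives $\mathbb{P}(|\widehat{\Phi}_{t}(\ell) - \Phi(\ell)| > \epsilon) \le C\exp(-c N_{t}\epsilon^{2})$. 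The essential role of Assumption~\ref{assm:assm3}, and specifically of $\inf_{t} N_{t+1}/N_{t} > 1$, is that $N_{t}$ grows geometrically, so these tail probabilities are summable in $t$; a Borel--Cantelli argument then upgrades convergence in probability to almost-sure convergence, even though the samples are drawn afresh (independently) at each step rather than nested.

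Second, I would promote pointwise to uniform convergence on the compact interval. Here convexity does the work: pointwise convergence of a sequence of convex functions on an interval implies uniform convergence on compact subsets. Since almost-sure pointwise convergence on a countable dense grid of $\ell$-values holds simultaneously (a countable intersection of probability-one events), this classical convex-function property yields $\sup_{\ell \in [Q^{\theta}_{l}, Q^{\theta}_{u}]}|\widehat{\Phi}_{t}(\ell) - \Phi(\ell)| \to 0$ almost surely.

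Finally, I would invoke the standard consistency argument for M-estimators: if $\widehat{\Phi}_{t} \to \Phi$ uniformly, $\Phi$ is continuous, and $\Phi$ has a \emph{unique} minimizer $f^{\rho}$, then $\widehat{f}^{\rho}_{t} \to f^{\rho}$ almost surely. The hard part will be uniqueness of the minimizer. A direct computation gives $\Phi^{\prime}(\ell) = \mathbb{P}_{\pi_{\pparams^{\prime}}}(Q_{\theta}(s,A) \le \ell) - (1-\rho)$, so the minimizers of $\Phi$ are exactly the $\ell$ at which the induced distribution function crosses level $1-\rho$; a flat spot there would leave the argmin a whole interval, and uniform convergence would then pin $\widehat{f}^{\rho}_{t}$ only to that set rather than to the $\sup$-defined value. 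I would rule this out using Assumption~\ref{assm:assm5}, whose strict positivity $\mathbb{P}_{\pi_{\pparams^{\prime}}}(Q_{\theta}(s,A) \ge \ell) > 0$ for every $\ell$ guarantees the quantile is attained in the interior rather than at the right boundary, together with the strict-crossing/uniqueness condition (Assumption~\ref{assm:assm9}) ensuring the minimum of $\Phi$ is not a plateau; verifying that these conditions force a genuinely isolated minimizer is the delicate step on which the conclusion rests.
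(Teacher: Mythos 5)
Your proposal is correct and shares the paper's overall skeleton---both recast the claim as consistency of an M-estimator via the check-loss characterization in Eqs.~\eqref{eq:quant_chn}--\eqref{eq:quant_hat}---but the two middle steps are executed by genuinely different means, and in each case your route is arguably the more careful one. For pointwise convergence, the paper first appeals to Kolmogorov's strong law to get $\widehat{\vartheta}_{t}(\ell) \to \vartheta(\ell)$ a.s.; strictly speaking the SLLN governs partial sums of a single i.i.d.\ sequence, whereas here each $\widehat{\vartheta}_{t}$ is built from a \emph{fresh} batch $\Xi_{t}$, so a.s.\ convergence along $t$ really needs summable tails --- exactly the Hoeffding--plus--Borel--Cantelli argument you give (and which the paper itself supplies only afterwards, when deriving the rate $N_{t}^{\alpha}$). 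For the upgrade to uniform convergence, the paper uses a finite subcover of $[Q^{\theta}_{l}, Q^{\theta}_{u}]$ together with the equi-Lipschitz bounds of Lemma~\ref{prop:quad_lips}; you instead invoke the classical fact that pointwise convergence of convex functions is locally uniform --- equally valid, with the minor caveat that the classical statement covers compact subsets of the interior, so the endpoints still need the uniform Lipschitz constant you already noted. Finally, for passing from uniform convergence of the objectives to convergence of their minimizers, the paper leans on the \emph{strict} convexity asserted in Lemma~\ref{prop:quant_convex} and on Lemma~\ref{prop:convex_sols_conv}; note, however, that the displayed computation in Lemma~\ref{prop:quant_convex} only yields non-strict inequalities, so strict convexity is not actually established there, and strict convexity of the expected check loss genuinely can fail when the law of $Q_{\theta}(s,A)$ has gaps in its support. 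Your version, which pins identifiability on the derivative computation $\Phi'(\ell) = \mathbb{P}_{\pi_{\pparams'}}(Q_{\theta}(s,A)\le\ell) - (1-\rho)$ plus the uniqueness/strict-crossing content of Assumption~\ref{assm:assm9} and the positivity in Assumption~\ref{assm:assm5}, isolates the true hypothesis on which the conclusion rests and is the more honest treatment of what you rightly call the delicate step.
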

\begin{proof}
	The proof is similar to arguments in Lemma 7 of \citep{hu2007model}.
	Since state $s$ and expert parameter $\theta$  are considered fixed, we assume the following notation in the proof. Let
	\begin{flalign}
		\widehat{f}^{\rho}_{t \vert s, \theta} \defeq \widehat{f}^{\rho}_{t} \text{ and }f^{\rho}_{\vert s, \theta} \defeq f^{\rho}(Q_{\theta}(s, \cdot), \pparams^{\prime}),
	\end{flalign}
	where $\widehat{f}^{\rho}_{t}$  and $f^{\rho}(Q_{\theta}(s, \cdot), \pparams^{\prime})$ are defined in Equations \eqref{eq:quant_chn} and \eqref{eq:quant_hat}.\\

	Consider the open cover $\{B_{r}(\ell), \ell\in [Q^{\theta}_{l}, Q^{\theta}_{u}]\}$ of $[Q^{\theta}_{l}, Q^{\theta}_{u}]$.
	Since $[Q^{\theta}_{l}, Q^{\theta}_{u}]$ is compact, there exists a finite sub-cover, \emph{i.e.}, $\exists \{\ell_1, \ell_2, \dots, \ell_{M}\}$ \emph{s.t.} $\cup_{i=1}^{M} B_{r}(\ell_i) = [Q^{\theta}_{l}, Q^{\theta}_{u}]$.
	Let $\vartheta(\ell) \defeq \mathbb{E}_{A \sim \pi_{\pparams^{\prime}}(\cdot \vert S)}\left[\Psi(Q_{\theta}(s, A), \ell)\right]$  and
	$\widehat{\vartheta}_{t}(\ell) \defeq \frac{1}{N_t}\sum\limits_{\substack{A \in \Xi_{t}, \vert \Xi_{t} \vert = N_{t},\\ \Xi_{t} \mysim \pi_{\pparams^{\prime}}(\cdot \vert s)}}\Psi(Q_{\theta}(s, A), \ell)$.\\

	Now, by triangle inequality, we have for $\ell \in [Q^{\theta}_{l}, Q^{\theta}_{u}]$,
	\begin{flalign}\label{eq:vt_diif}
		\vert \vartheta(\ell) - \widehat{\vartheta}_{t}(\ell) \vert &\leq \vert \vartheta(\ell) - \vartheta(\ell_{j}) \vert +  \vert \vartheta(\ell_{j}) - \widehat{\vartheta}_{t}(\ell_{j}) \vert +  \vert \widehat{\vartheta}_{t}(\ell_{j}) - \widehat{\vartheta}_{t}(\ell) \vert \nonumber\\
		&\leq L_{\rho}\vert \ell - \ell_{j} \vert + \vert \vartheta(\ell_{j}) - \widehat{\vartheta}_{t}(\ell_{j}) \vert +  \widehat{L}_{\rho}\vert \ell_{j} - \ell \vert \nonumber\\
		&\leq \left(L_{\rho} + \widehat{L}_{\rho}\right)r + \vert \vartheta(\ell_{j}) - \widehat{\vartheta}_{t}(\ell_{j}) \vert,
	\end{flalign}
	where $L_{\rho}$ and $\widehat{L}_{\rho}$ are the Lipschitz constants of $\vartheta(\cdot)$ and $\widehat{\vartheta}_{t}(\cdot)$ respectively.

	For $\delta > 0$, take $r = \delta(L_{\rho} + \widehat{L}_{\rho})/2$. Also, by Kolmogorov's strong law of large numbers (Theorem 2.3.10 of \citep{sen2017large}), we have $\widehat{\vartheta}_{t}(\ell) \rightarrow \vartheta(\ell)$ $a.s.$ This implies that there exists $T \in \mathbb{N}$ \emph{s.t.} $\vert \vartheta(\ell_{j}) - \widehat{\vartheta}_{t}(\ell_{j}) \vert < \delta/2$, $\forall t \geq T$, $\forall j \in [M]$.
	Then from Eq. (\ref{eq:vt_diif}), we have
	\begin{flalign*}
		\vert \vartheta(\ell) - \widehat{\vartheta}_{t}(\ell) \vert \leq \delta/2 + \delta/2 = \delta, \hspace*{4mm} \forall \ell \in [Q^{\theta}_{l}, Q^{\theta}_{u}].
	\end{flalign*}
	This implies $\widehat{\vartheta}_{t}$ converges uniformly to $\vartheta$. By Lemmas \ref{prop:quad_lips} and \ref{prop:quant_convex}, $\widehat{\vartheta}_{t}$ and $\vartheta$ are strictly convex and Lipschitz continuous, and so because $\widehat{\vartheta}_{t}$ converges uniformly to $\vartheta$, this means that the sequence of minimizers of $\widehat{\vartheta}_{t}$ converge to the minimizer of $\vartheta$ (see Lemma \ref{prop:convex_sols_conv}, Appendix \ref{app_lem3} for an explicit justification). These minimizers correspond to $\widehat{f}^{\rho}_{t}$ and $f^{\rho}(Q_{\theta}(s, \cdot), \pparams^{\prime})$ respectively, and so $\lim_{N_t \rightarrow \infty}\widehat{f}^{\rho}_{t} = f^{\rho}(Q_{\theta}(s, \cdot), \pparams^{\prime})$ $a.s.$\\

	Now, for $\delta > 0$ and $r \defeq \delta(L_{\rho} + \widehat{L}_{\rho})/2$, we obtain the following from Eq. (\ref{eq:vt_diif}):
	\begin{flalign*}
		\vert& \vartheta(\ell) - \widehat{\vartheta}_{t}(\ell) \vert \leq \delta/2 + \vert \vartheta(\ell_{j}) - \widehat{\vartheta}_{t}(\ell_{j}) \vert&&\\
		&\Leftrightarrow \{\vert \vartheta(\ell_{j}) - \widehat{\vartheta}_{t}(\ell_{j}) \vert \leq \delta/2, \forall j \in [M]\} \Rightarrow \{\vert \vartheta(\ell) - \widehat{\vartheta}_{t}(\ell) \vert \leq \delta, \forall \ell \in [Q^{\theta}_{l}, Q^{\theta}_{u}]\}
	\end{flalign*}
	\begin{flalign}\label{eq:hoffbound}
		\Rightarrow  \mathbb{P}_{\pi_{\pparams^{\prime}}}\left(\vert \vartheta(\ell) - \widehat{\vartheta}_{t}(\ell) \vert \leq \delta, \forall \ell \in [Q^{\theta}_{l}, Q^{\theta}_{u}]\right) &\geq  \mathbb{P}_{\pi_{\pparams^{\prime}}}\left(\vert \vartheta(\ell_{j}) - \widehat{\vartheta}_{t}(\ell_{j}) \vert \leq \delta/2, \forall j \in [M]\right)\nonumber\\
		&= 1-\mathbb{P}_{\pi_{\pparams^{\prime}}}\left(\vert \vartheta(\ell_{j}) - \widehat{\vartheta}_{t}(\ell_{j}) \vert > \delta/2, \exists  j \in [M]\right)\nonumber\\
		&\geq 1-\sum_{j=1}^{M}\mathbb{P}_{\pi_{\pparams^{\prime}}}\left(\vert \vartheta(\ell_{j}) - \widehat{\vartheta}_{t}(\ell_{j}) \vert > \delta/2\right)\nonumber\\
		&\geq 1-M\max_{j \in [M]}\mathbb{P}_{\pi_{\pparams^{\prime}}}\left(\vert \vartheta(\ell_{j}) - \widehat{\vartheta}_{t}(\ell_{j}) \vert > \delta/2\right)\nonumber\\
		&\geq 1 - 2M \exp{\left(\frac{-2N_{t}\delta^{2}}{4(Q^{\theta}_{u}-Q^{\theta}_{l})^{2}}\right)},
	\end{flalign}
	where $\mathbb{P}_{\pi_{\pparams^{\prime}}} \defeq \mathbb{P}_{A \sim \pi_{\pparams^{\prime}}}(\cdot \vert s)$. And
	the last inequality follows from Hoeffding's inequality \citep{hoeffding1963probability} along with the fact that $\mathbb{E}_{\pi_{\pparams^{\prime}}}\left[\widehat{\vartheta}_{t}(\ell_{j})\right] = \vartheta(\ell_{j})$ and $\sup\limits_{\ell \in [Q^{\theta}_{l}, Q^{\theta}_{u}]} \vert \vartheta(\ell) \vert \leq Q^{\theta}_{u} - Q^{\theta}_{l}$.\\

	Now, the sub-differential of $\vartheta(\ell)$ is given by
	\begin{flalign}
		\partial_{\ell} \vartheta(\ell) = \left[\rho-\mathbb{P}_{A \sim \pi_{\pparams^{\prime}}(\cdot | s)}\left(Q_{\theta}(s, A) \geq \ell\right), \rho-1+\mathbb{P}_{A \sim \pi_{\pparams^{\prime}}(\cdot | s)}\left(Q_{\theta}(s, A) \leq \ell\right)\right].
	\end{flalign}
	By the definition of sub-gradient we obtain
	\begin{flalign}
		&c\vert \widehat{f}^{\rho}_{t \vert s, \theta} - f^{\rho}_{\vert s, \theta} \vert \leq \vert \vartheta(\widehat{f}^{\rho}_{t \vert s, \theta}) - \vartheta(f^{\rho}_{\vert s, \theta}) \vert, \hspace*{2mm} c \in \partial_{\ell}\vartheta(\ell)\nonumber\\
		&\Rightarrow C\vert \widehat{f}^{\rho}_{t \vert s, \theta} - f^{\rho}_{\vert s, \theta} \vert \leq \vert \vartheta(\widehat{f}^{\rho}_{t \vert s, \theta}) - \vartheta(f^{\rho}_{\vert s, \theta}) \vert,
	\end{flalign}
	where $C \defeq \max{\left\{\rho-\mathbb{P}_{A \sim \pi_{\pparams^{\prime}}(\cdot | s)}\left(Q_{\theta}(s, A) \geq f^{\rho}_{\vert s, \theta}\right), \rho-1+\mathbb{P}_{A \sim \pi_{\pparams^{\prime}}(\cdot | s)}\left(Q_{\theta}(s, A) \leq f^{\rho}_{\vert s, \theta}\right)\right\}}$.
	Further,
	\begin{flalign}\label{eq:min_bd}
		C\vert \widehat{f}^{\rho}_{t \vert s, \theta} - f^{\rho}_{\vert s, \theta} \vert &\leq \vert \vartheta(\widehat{f}^{\rho}_{t \vert s, \theta}) - \vartheta(f^{\rho}_{\vert s, \theta}) \vert \nonumber\\
		&\leq  \vert \vartheta(\widehat{f}^{\rho}_{t \vert s, \theta}) - \widehat{\vartheta}_{t}(\widehat{f}^{\rho}_{t \vert s, \theta}) \vert + \vert \widehat{\vartheta}_{t}(\widehat{f}^{\rho}_{t \vert s, \theta}) - \vartheta(f^{\rho}_{\vert s, \theta}) \vert \nonumber\\
		&\leq  \vert \vartheta(\widehat{f}^{\rho}_{t \vert s, \theta}) - \widehat{\vartheta}_{t}(\widehat{f}^{\rho}_{t \vert s, \theta}) \vert + \sup_{\ell \in [Q^{\theta}_{l}, Q^{\theta}_{u}]}\vert \widehat{\vartheta}_{t}(\ell) - \vartheta(\ell) \vert \nonumber\\
		&\leq  2\sup_{\ell \in [Q^{\theta}_{l}, Q^{\theta}_{u}]}\vert \widehat{\vartheta}_{t}(\ell) - \vartheta(\ell) \vert.
	\end{flalign}
	From Eqs. (\ref{eq:hoffbound}) and (\ref{eq:min_bd}), we obtain for $\epsilon > 0$
	\begin{flalign*}
		\mathbb{P}_{\pparams^{\prime}}\left(N_{t}^{\alpha}\vert \widehat{f}^{\rho}_{t \vert s, \theta} - f^{\rho}_{\vert s, \theta} \vert  \geq \epsilon\right) &\leq \mathbb{P}_{\pparams^{\prime}}\left(
		N_{t}^{\alpha}\sup_{\ell \in [Q^{\theta}_{l}, Q^{\theta}_{u}]}\vert \widehat{\vartheta}_{t}(\ell) - \vartheta(\ell) \vert \geq \frac{\epsilon}{2}\right)\\
		&\leq 2M \exp{\left(\frac{-2N_{t}\epsilon^{2}}{16N^{2\alpha}_{t}(Q^{\theta}_{u}-Q^{\theta}_{l})^{2}}\right)}
		= 2M \exp{\left(\frac{-2N_{t}^{1-2\alpha}\epsilon^{2}}{16(Q^{\theta}_{u}-Q^{\theta}_{l})^{2}}\right)}.
	\end{flalign*}
	For $\alpha \in (0,1/2)$ and $\inf_{t \in \mathbb{N}}\frac{N_{t+1}}{N_{t}} \geq \tau > 1$ (by Assumption \ref{assm:assm3}), then
	\begin{flalign*}
		\sum_{t=1}^{\infty}2M \exp{\left(\frac{-2N_{t}^{1-2\alpha}\epsilon^{2}}{16(Q^{\theta}_{u}-Q^{\theta}_{l})^{2}}\right)} \leq
		\sum_{t=1}^{\infty}2M \exp{\left(\frac{-2\tau^{(1-2\alpha)t}N_0^{1-2\alpha}\epsilon^{2}}{16(Q^{\theta}_{u}-Q^{\theta}_{l})^{2}}\right)} < \infty.
	\end{flalign*}
	Therefore, by Borel-Cantelli's Lemma \citep{durrett1991probability}, we have
	\begin{flalign*}
		\mathbb{P}_{\pparams^{\prime}}\left(N_{t}^{\alpha}\big\vert \widehat{f}^{\rho}_{t \vert s, \theta} - f^{\rho}_{\vert s, \theta}
		\big\vert  \geq \epsilon \hspace*{2mm} i.o \right)  = 0.
	\end{flalign*}
	Thus we have $N_{t}^{\alpha}\left(\widehat{f}^{\rho}_{t \vert s, \theta} - f^{\rho}_{\vert s, \theta}\right)$ $\rightarrow$ $0$ $a.s.$ as $N_{t} \rightarrow \infty$.
\end{proof}

\begin{lem}\label{lem:ltheta_conv}
	Almost surely,
	\begin{flalign*}
		\ell^{\theta}_{t} \rightarrow 0 \hspace*{2mm} \text{ as } N_{t} \rightarrow \infty.
	\end{flalign*}
\end{lem}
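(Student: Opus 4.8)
The plan is to read $\ell^{\theta}_{t}$ as the \emph{bias} incurred by two approximations in the Actor update: replacing the true state-conditioned quantile $f_{\theta}^{\rho}(\pparams^{\prime}; S_{t})$ by the random order-statistic estimate $\widehat{f}^{\rho}_{t+1}$, and replacing the idealized mean field by a finite sample $\Xi_{t}$ of size $N_{t}$. The strategy is to isolate these two effects: the finite-sample averaging contributes nothing in conditional expectation once the \emph{true} threshold is used, while the remaining quantile-mismatch term is driven to zero using Lemma \ref{lem:quant_conv}.

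Concretely, write $\widehat{G}_{t}(\ell) \defeq \frac{1}{N_{t}}\sum_{A \in \Xi_{t}} I_{\{Q_{\theta}(S_{t}, A) \geq \ell\}} \nabla_{\pparams_{t}} \ln \pi_{\pparams}(A | S_{t})$ for a threshold $\ell$, so that $\ell^{\theta}_{t} = \mathbb{E}[\widehat{G}_{t}(\widehat{f}^{\rho}_{t+1}) \mid \mathcal{F}_{t}] - G^{*}$, where $G^{*}$ is the mean-field (second) term in the definition of $\ell^{\theta}_{t}$. First I would add and subtract $\mathbb{E}[\widehat{G}_{t}(f_{\theta}^{\rho}(\pparams^{\prime}; S_{t})) \mid \mathcal{F}_{t}]$. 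Because, conditionally on the parameters, the actions in $\Xi_{t}$ are i.i.d.\ from $\pi_{\pparams^{\prime}}(\cdot | S_{t})$ and the true quantile is a deterministic function of the state, $\widehat{G}_{t}(f_{\theta}^{\rho}(\pparams^{\prime}; S_{t}))$ is an unbiased estimator of $G^{*}$ (once the state is averaged according to $\nu$); hence this middle piece cancels $G^{*}$ exactly and is identically $0$. What remains is the quantile-mismatch term $\mathbb{E}[\widehat{G}_{t}(\widehat{f}^{\rho}_{t+1}) - \widehat{G}_{t}(f_{\theta}^{\rho}(\pparams^{\prime}; S_{t})) \mid \mathcal{F}_{t}]$.

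To control this remainder, note the two summands differ only through their indicators, and the difference of indicators is supported on the random band of action-values lying between $\widehat{f}^{\rho}_{t+1}$ and $f_{\theta}^{\rho}(\pparams^{\prime}; S_{t})$. Using $(I_{A} - I_{B})^{2} = |I_{A} - I_{B}|$ together with Cauchy--Schwarz, I would bound the norm of the difference by the square root of the \emph{fraction of sampled actions falling in that band} times the square root of $\tfrac{1}{N_{t}}\sum_{A}\Vert \nabla_{\pparams}\ln\pi_{\pparams}(A|S_{t})\Vert^{2}$; the latter factor is uniformly bounded by Assumption \ref{assm:assm6}(ii). For the band fraction, I would fix $\epsilon > 0$ and split on the event $\{|\widehat{f}^{\rho}_{t+1} - f_{\theta}^{\rho}(\pparams^{\prime}; S_{t})| \le \epsilon\}$: on this event the band fraction is at most the empirical mass of $\{Q_{\theta}(S_{t}, \cdot) \in [f_{\theta}^{\rho} - \epsilon, f_{\theta}^{\rho} + \epsilon]\}$, whose expectation is $\mathbb{P}_{A \sim \pi_{\pparams^{\prime}}}(Q_{\theta}(S_{t}, A) \in [f_{\theta}^{\rho} - \epsilon, f_{\theta}^{\rho} + \epsilon])$ and tends to $0$ as $\epsilon \downarrow 0$, since the distribution of $Q_{\theta}(S_{t}, A)$ has no atom at $f_{\theta}^{\rho}$ by Assumptions \ref{assm:assm5} and \ref{assm:assm9}; on the complement, the fraction is trivially at most $1$, but the event has probability decaying like the exponential tail established inside the proof of Lemma \ref{lem:quant_conv} (via Assumption \ref{assm:assm3}). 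Sending $N_{t}\to\infty$ and then $\epsilon \downarrow 0$ drives this contribution to $0$ pointwise in the state.

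Finally, since the integrand is uniformly bounded in $L^{2}$ across $\pparams, \pparams^{\prime} \in W$ and $\theta \in \Theta$ by Assumption \ref{assm:assm6}, dominated convergence lets me pass the $N_{t} \to \infty$ limit through the outer expectation over $S \sim \nu$, yielding $\ell^{\theta}_{t} \to 0$ a.s. I expect the main obstacle to be exactly this middle step: converting the \emph{pointwise a.s.} convergence of the sample quantile from Lemma \ref{lem:quant_conv} into vanishing of an \emph{expected} band mass, because $\widehat{f}^{\rho}_{t+1}$ sits randomly inside the indicator and is correlated with the very sample over which we average. The event-splitting above, anchored by the exponential concentration already proved for the quantile estimate, is what makes the interchange of limit and expectation legitimate.
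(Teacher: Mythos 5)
Your proof is correct and reaches the stated conclusion, but it is organized around a genuinely different decomposition than the paper's. The paper splits $\ell^{\theta}_{t}$ by inserting the \emph{population} expectation evaluated at the \emph{sample} quantile: it invokes Assumption \ref{assm:assm8} together with Borel--Cantelli to show the sample average concentrates around that population quantity at rate $O(N_{t}^{\alpha^{\prime}})$, cites Lemma \ref{lem:quant_conv} for convergence of the sample quantile to the true quantile, and then asserts that the claim ``follows directly from Assumption \ref{assm:assm6}(ii) and the bounded convergence theorem.'' You instead insert the \emph{sample} average evaluated at the \emph{true} quantile: that middle term cancels the mean-field term $G^{*}$ exactly by unbiasedness under the conditional expectation, so no concentration of the sample average is needed, and the entire burden falls on the indicator-swap term, which you control with the band/Cauchy--Schwarz argument and the exponential tail already established inside the proof of Lemma \ref{lem:quant_conv}. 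Your route buys two things: it dispenses with Assumption \ref{assm:assm8} altogether, and it makes explicit the step the paper glosses over---namely, how a.s.\ convergence of a threshold sitting \emph{inside} an indicator, correlated with the very sample being averaged, translates into vanishing of the expected difference; this is exactly the no-atom/band-mass control you supply. Two small caveats. First, the ``no atom of $Q_{\theta}(s,A)$ at $f_{\theta}^{\rho}(\pparams^{\prime};s)$'' property does not literally follow from Assumptions \ref{assm:assm5} and \ref{assm:assm9} as stated (an atom at the quantile is compatible with both), though the paper implicitly requires the same regularity for its own final step to go through. Second, Assumption \ref{assm:assm6}(ii) bounds a conditional \emph{expectation}, not the empirical second moment of $\nabla_{\pparams}\ln\pi_{\pparams}$ along the sample, so you should apply Cauchy--Schwarz once more at the level of the conditional expectation (bounding $\mathbb{E}\left[\sqrt{X}\sqrt{Y}\,\middle\vert\,\mathcal{F}_{t}\right]$ by $\sqrt{\mathbb{E}[X\vert\mathcal{F}_{t}]}\sqrt{\mathbb{E}[Y\vert\mathcal{F}_{t}]}$) rather than bounding the empirical average pathwise. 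Neither caveat changes the structure or validity of your argument.
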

\begin{proofref}{Proof of Lemma \ref{lem:ltheta_conv}}
	Consider
	\begin{flalign*}
		&\mathbb{E}\bigg[\frac{1}{N_{t}}\sum_{A \in \Xi_{t}}{I_{\{Q_{\theta}(S_{t}, A) \geq \widehat{f}^{\rho}_{t+1}\}}  \nabla_{\pparams_t}  \ln \pi_\pparams(A | S_{t})} \bigg\vert \mathcal{F}_{t}\bigg] = \\
		&\hspace*{5mm}\mathbb{E}\Bigg[\mathbb{E}_{\Xi_{t}}\bigg[\frac{1}{N_{t}}\sum_{A \in \Xi_{t}}{I_{\{Q_{\theta}(S_{t}, A) \geq \widehat{f}^{\rho}_{t+1}\}}  \nabla_{\pparams_t}  \ln \pi_\pparams(A | S_{t})}\bigg] \bigg\vert S_{t}=s, \pparams^{\prime}_{t}\Bigg]
	\end{flalign*}
Notice that, because of the conditions on $\pi_{\pparams^{\prime}}(\cdot \vert s)$, we know that the sample average converges with an exponential rate in the number of samples, for arbitrary $\pparams^{\prime} \in W$.
Namely, for $\epsilon > 0$ and $N \in \mathbb{N}$, we have
\begin{flalign*}
	&\mathbb{P}_{\Xi \mysim \pi_{\pparams^{\prime}}(\cdot \vert s)}\Big(\Big\Vert \frac{1}{N}\sum_{A \in \Xi}{I_{\{Q_{\theta}(s, A) \geq f^{\rho}(Q_{\theta}(s, \cdot), \pi_{\pparams^{\prime}}(\cdot \vert s)\}}  \nabla_{\pparams}  \ln \pi_\pparams(A | s)} - \\
	&\hspace*{5mm}\mathbb{E}_{A \sim \pi_{\pparams^{\prime}}(\cdot \vert s)}\left[I_{\{Q_{\theta}(s, A) \geq \widehat{f}^{\rho}(Q_{\theta}(s, \cdot), \pi_{\pparams^{\prime}}(\cdot \vert s)\}}  \nabla_{\pparams}  \ln \pi_\pparams(A | s)\right] \Big\Vert \geq \epsilon\Big)
		\leq C_{1}\exp{\left(-c_2N^{c_3}\epsilon^{c_4}\right)},\\[4pt]
	&\hspace*{70mm}\forall \theta \in \Theta, \pparams, \pparams^{\prime} \in  W, s \in \States,
\end{flalign*}
where $C_1, c_2, c_3, c_4 > 0$.

Therefore,	 for $\alpha^{\prime} > 0$, we have
	\begin{flalign*}
		&\mathbb{P}\Big(N_{t}^{\alpha^{\prime}}\Big\Vert \frac{1}{N_{t}}\sum_{A \in \Xi_{t}}{I_{\{Q_{\theta}(s, A) \geq \widehat{f}^{\rho}_{\theta, s}\}}  \nabla_{\pparams_t}  \ln \pi_\pparams(A | s)} - \mathbb{E}\left[I_{\{Q_{\theta}(s, A) \geq \widehat{f}^{\rho}_{\theta, s}\}}  \nabla_{\pparams_t}  \ln \pi_\pparams(A | s)\right] \Big\Vert \geq \epsilon\Big)\\
		&\hspace*{15mm}\leq C_{1}\exp{\left(-\frac{c_2N^{c_3}_{t}\epsilon^{c_4}}{N_{t}^{c_4\alpha^{\prime}}}\right)}
		= C_{1}\exp{\left(-c_2N^{c_3 - c_4\alpha^{\prime}}_{t}\epsilon^{c_4}\right)}\\
		&\hspace*{15mm}\leq C_{1}\exp{\left(-c_2\tau^{(c_3 - c_4\alpha^{\prime})t}N^{c_3 - c_4\alpha^{\prime}}_{0}\epsilon^{c_4}\right)},
	\end{flalign*}
	where $f^{\rho}_{\theta, s} \defeq f^{\rho}(Q_{\theta}(s, \cdot), \pi_{\pparams^{\prime}}(\cdot \vert s))$ and $\inf_{t \in \mathbb{N}}\frac{N_{t+1}}{N_{t}} \geq \tau > 1$ (by Assumption \ref{assm:assm3}).\\

	For $c_3 - c_4\alpha^{\prime} > 0$ $\Rightarrow$ $\alpha^{\prime} < c_3/c_4$, we have
	\begin{flalign*}
		\sum_{t=1}^{\infty}{C_{1}\exp{\left(-c_2\tau^{(c_3 - c_4\alpha^{\prime})t}N^{c_3 - c_4\alpha^{\prime}}_{0}\epsilon^{c_4}\right)}} < \infty.
	\end{flalign*}
	Therefore, by Borel-Cantelli's Lemma \citep{durrett1991probability}, we have
	\begin{flalign*}
		&\mathbb{P}\Big(N_{t}^{\alpha^{\prime}}\Big\Vert \frac{1}{N_{t}}\sum_{A \in \Xi_{t}}{I_{\{Q_{\theta}(s, A) \geq \widehat{f}^{\rho}_{\theta, s}\}}  \nabla_{\pparams_t}  \ln \pi_\pparams(A | s)} - \mathbb{E}\left[I_{\{Q_{\theta}(s, A) \geq \widehat{f}^{\rho}_{\theta, s}\}}  \nabla_{\pparams_t}  \ln \pi_\pparams(A | s)\right] \Big\Vert \geq \epsilon \hspace*{2mm}i.o.\Big) \\
		&\hspace*{20mm}= 0.
	\end{flalign*}
	This implies that
	\begin{flalign}\label{eq:sl_conv}
		&N_{t}^{\alpha^{\prime}}\Big\Vert \frac{1}{N_{t}}\sum_{A \in \Xi_{t}}{I_{\{Q_{\theta}(s, A) \geq \widehat{f}^{\rho}_{\theta, s}\}}  \nabla_{\pparams_t}  \ln \pi_\pparams(A | s)} - \mathbb{E}\left[I_{\{Q_{\theta}(s, A) \geq \widehat{f}^{\rho}_{\theta, s}\}}  \nabla_{\pparams_t}  \ln \pi_\pparams(A | s)\right] \Big\Vert \rightarrow 0 \hspace*{4mm} a.s.
	\end{flalign}
	The above result implies that the sample average converges at a rate $O(N_{t}^{\alpha^{\prime}})$, where $0 < \alpha^{\prime} < c_3/c_4$ independent of $\pparams, \pparams^{\prime} \in W$. By Lemma \ref{lem:quant_conv}, we have the sample quantiles $\widehat{f}^{\rho}_{t}$  also converging to the true quantile at a rate $O(N_{t}^{\alpha})$ independent of $\pparams, \pparams^{\prime} \in W$.
	Now the claim follows directly from Assumption \ref{assm:assm6} (ii) and bounded convergence theorem.
\end{proofref}

\subsection{Supporting Lemmas for Lemma \ref{lem:quant_conv}}
\begin{lem}\label{prop:quad_lips}
	Let Assumption \ref{assm:assm5} hold.
	For $\theta \in \Theta$, $\pparams^{\prime} \in W$, $s \in \States$ and $\ell \in [Q^{\theta}_{l}, Q^{\theta}_{u}]$, we have
	\begin{enumerate}
	\item $\mathbb{E}_{A \sim \pi_{\pparams^{\prime}}(\cdot \vert s)}\left[\Psi(Q_{\theta}(s, A), \ell)\right]$ is Lipschitz continuous.
	\item
	$\frac{1}{N}\sum_{\substack{A \in \Xi \\ \vert \Xi \vert = N}}\Psi(Q_{\theta}(s, A), \ell)$  (with $\Xi \mysim \pi_{\pparams^{\prime}}(\cdot | s)$) is Lipschitz continuous with Lipschitz constant independent of the sample length $N$.
	\end{enumerate}
\end{lem}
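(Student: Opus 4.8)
The plan is to exploit the structure of the pinball (``check'') loss $\Psi(y,\ell)$: for each fixed $y$, it is a convex, piecewise-linear function of $\ell$ whose slope equals $-(1-\rho)$ for $\ell < y$ and $+\rho$ for $\ell > y$. Consequently $\Psi(y,\cdot)$ is Lipschitz in $\ell$ with constant $L_{\rho} \defeq \max\{\rho, 1-\rho\}$, and---crucially---this constant is \emph{uniform in $y$}. Both claims then follow immediately, because averaging a family of functions that share a common Lipschitz bound (whether against the measure $\pi_{\pparams^{\prime}}(\cdot\vert s)$ or over a finite sample) preserves that bound.

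First I would establish the pointwise inequality $\vert \Psi(y,\ell_1) - \Psi(y,\ell_2)\vert \le L_{\rho}\vert \ell_1 - \ell_2\vert$ for every $y$ and all $\ell_1,\ell_2 \in [Q^{\theta}_{l}, Q^{\theta}_{u}]$. Taking $\ell_1 \le \ell_2$ without loss of generality, I split into three cases according to the position of $y$: (i) $y \ge \ell_2$, where both evaluations use the branch $(y-\ell)(1-\rho)$ and the difference is exactly $(1-\rho)(\ell_2-\ell_1)$; (ii) $y \le \ell_1$, where both use $(\ell-y)\rho$ and the difference is $\rho(\ell_2-\ell_1)$; and (iii) $\ell_1 < y < \ell_2$ (the straddling case), where I bound each branch by its slope times the length of the relevant sub-interval and recombine, again obtaining at most $L_{\rho}(\ell_2-\ell_1)$. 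Equivalently, one may simply note that $\Psi(y,\cdot)$ is convex with every subgradient contained in $[-(1-\rho),\rho]$, whence the uniform Lipschitz constant $L_{\rho}$.

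For part 1, I would then use the triangle inequality for integrals, together with the pointwise bound and the boundedness of $Q_{\theta}$ (Assumption \ref{assm:assm4}, which keeps the integrand finite and hence integrable): $\vert \mathbb{E}_{A\sim\pi_{\pparams^{\prime}}(\cdot\vert s)}[\Psi(Q_{\theta}(s,A),\ell_1)] - \mathbb{E}_{A\sim\pi_{\pparams^{\prime}}(\cdot\vert s)}[\Psi(Q_{\theta}(s,A),\ell_2)]\vert \le \mathbb{E}_{A\sim\pi_{\pparams^{\prime}}(\cdot\vert s)}[\vert \Psi(Q_{\theta}(s,A),\ell_1)-\Psi(Q_{\theta}(s,A),\ell_2)\vert] \le L_{\rho}\vert\ell_1-\ell_2\vert$, giving Lipschitz continuity with constant $L_{\rho}$. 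For part 2, the identical argument applied to the empirical average yields $\frac{1}{N}\sum_{A\in\Xi}\vert\Psi(Q_{\theta}(s,A),\ell_1)-\Psi(Q_{\theta}(s,A),\ell_2)\vert \le \frac{1}{N}\sum_{A\in\Xi}L_{\rho}\vert\ell_1-\ell_2\vert = L_{\rho}\vert\ell_1-\ell_2\vert$, whose bound is manifestly independent of $N$.

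There is no deep obstacle here; the only point requiring genuine care is the straddling case (iii), and, more importantly for how the lemma is used downstream, the observation that $L_{\rho}$ does not depend on $y$ (hence not on the sampled actions). This uniformity is precisely what makes the constant $\widehat{L}_{\rho}$ in part 2 independent of the sample length $N$, which in turn is what feeds into the uniform-convergence step of Lemma \ref{lem:quant_conv}.
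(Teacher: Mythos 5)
Your proposal is correct and follows essentially the same route as the paper: both arguments rest on the piecewise-linear structure of $\Psi(y,\cdot)$ and a case split on where $Q_{\theta}(s,A)$ lies relative to $\ell_1$ and $\ell_2$, the only difference being that you establish the Lipschitz bound pointwise in $y$ before averaging, whereas the paper carries the case decomposition inside the expectation. Your organization is slightly cleaner and yields the tighter, $y$-uniform constant $\max\{\rho,1-\rho\}$ (versus the paper's looser $\rho+2$), and it makes explicit why the constant in part 2 is independent of $N$.
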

\begin{proof}
	Let $\ell_1, \ell_2 \in [Q^{\theta}_{l}, Q^{\theta}_{u}]$, $\ell_2 \geq \ell_1$. By Assumption \ref{assm:assm5} we have $\mathbb{P}_{A \sim \pi_{\pparams^{\prime}}(\cdot \vert s)}(Q_{\theta}(s, A) \geq \ell_1) > 0$ and $\mathbb{P}_{A \sim \pi_{\pparams^{\prime}}(\cdot \vert s)}(Q_{\theta}(s, A) \geq \ell_2) > 0$. Now,

	\begin{align*}
		\Big\vert &\mathbb{E}_{A \sim \pi_{\pparams^{\prime}}(\cdot \vert s)}
		\left[\Psi(Q_{\theta}(s, A), \ell_1)\right] -  \mathbb{E}_{A \sim \pi_{\pparams^{\prime}}(\cdot \vert s)}\left[\Psi(Q_{\theta}(s, A), \ell_2)\right] \Big\vert \\
		&= \Big\vert \mathbb{E}_{A \sim \pi_{\pparams^{\prime}}(\cdot \vert s)}\left[(Q_{\theta}(s, A) - \ell_1)(1-\rho) I_{\{Q_{\theta}(s, A) \geq \ell_1 \}} +  (\ell_1 - Q_{\theta}(s, A))\rho I_{\{\ell_1 \geq Q_{\theta}(s, A)\}}\right] \\
		&\hspace*{4mm}- \mathbb{E}_{A \sim \pi_{\pparams^{\prime}}(\cdot \vert s)}\left[(Q_{\theta}(s, A) - \ell_2)(1-\rho) I_{\{Q_{\theta}(s, A) \geq \ell_2 \}} +  (\ell_2 - Q_{\theta}(s, A))\rho I_{\{\ell_2 \geq Q_{\theta}(s, A)\}}\right] \Big\vert\\
		&= \Big\vert \mathbb{E}_{A \sim \pi_{\pparams^{\prime}}(\cdot \vert s)}\Big[(Q_{\theta}(s, A) - \ell_1)(1-\rho) I_{\{Q_{\theta}(s, A) \geq \ell_1 \}} +  (\ell_1 - Q_{\theta}(s, A))\rho I_{\{\ell_1 \geq Q_{\theta}(s, A)\}} \\
		&\hspace*{4mm}- (Q_{\theta}(s, A) - \ell_2)(1-\rho) I_{\{Q_{\theta}(s, A) \geq \ell_2 \}} +  (\ell_2 - Q_{\theta}(s, A))\rho I_{\{\ell_2 \geq Q_{\theta}(s, A)\}}\Big] \Big\vert\\
		&= \Big\vert \mathbb{E}_{A \sim \pi_{\pparams^{\prime}}(\cdot \vert s)}\Big[(1-\rho)(\ell_2 - \ell_1)I_{\{Q_{\theta}(s, A) \geq \ell_2 \}} +  \rho(\ell_1 - \ell_2) I_{\{Q_{\theta}(s, A) \leq \ell_1\}} +\\
		&\hspace*{4mm}+ \left(-(1-\rho)\ell_1 - \rho\ell_2 + \rho Q_{\theta}(s, A) + (1-\rho)Q_{\theta}(s, A)\right)I_{\{\ell_1 \leq Q_{\theta}(s, A) \leq \ell_2\}}\Big]\Big\vert\\
		&\leq (1-\rho)\vert\ell_2 - \ell_1\vert + \left(2\rho+1\right)\vert \ell_2 - \ell_1 \vert\\
		&=(\rho + 2)\vert\ell_2 - \ell_1\vert.
	\end{align*}
	Similarly, we can prove the later claim also. This completes the proof of Lemma \ref{prop:quad_lips}.
\end{proof}

\begin{lem}\label{prop:quant_convex}
	Let Assumption \ref{assm:assm5} hold. Then, for $\theta \in \Theta$, $\pparams^{\prime} \in W$, $s \in \States$ and $\ell \in [Q^{\theta}_{l}, Q^{\theta}_{u}]$, we have $\mathbb{E}_{A \sim \pi_{\pparams^{\prime}}(\cdot \vert s)}\left[\Psi(Q_{\theta}(s, A), \ell)\right]$ and
$\frac{1}{N}\sum_{\substack{A \in \Xi \\ \vert \Xi \vert = N}}\Psi(Q_{\theta}(s, A), \ell)$  (with $\Xi \mysim \pi_{\pparams^{\prime}}(\cdot | s)$) are strictly convex.
\end{lem}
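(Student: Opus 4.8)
The plan is to reduce everything to the scalar convexity of the pinball loss $\Psi(y,\cdot)$ in its second argument, and then upgrade convexity to strict convexity using the law of $Y \defeq Q_{\theta}(s,A)$ under $A \sim \pi_{\pparams^{\prime}}(\cdot\vert s)$. For fixed $y$ I would rewrite $\Psi(y,\ell) = (1-\rho)(y-\ell)^{+} + \rho(\ell-y)^{+}$, a nonnegative combination (since $\rho \in [0,1]$) of the two convex functions $\ell \mapsto (y-\ell)^{+}$ and $\ell \mapsto (\ell-y)^{+}$; hence $\Psi(y,\cdot)$ is convex and piecewise linear, with a single kink at $\ell=y$ where the slope jumps from $-(1-\rho)$ to $\rho$. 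Expectation preserves convexity, so $\vartheta(\ell)=\mathbb{E}[\Psi(Y,\ell)]$ is convex, and the finite average $\widehat{\vartheta}(\ell)=\frac{1}{N}\sum_{A\in\Xi}\Psi(Q_{\theta}(s,A),\ell)$ is convex as an average of convex functions.

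For strict convexity of $\vartheta$ I would use a Jensen-gap argument. Fix $\ell_1 < \ell_2$ in $[Q^{\theta}_{l},Q^{\theta}_{u}]$, $\lambda\in(0,1)$, and set $\ell_{\lambda}\defeq\lambda\ell_1+(1-\lambda)\ell_2\in(\ell_1,\ell_2)$. For each fixed $y$ the per-sample gap $\Delta(y)\defeq\lambda\Psi(y,\ell_1)+(1-\lambda)\Psi(y,\ell_2)-\Psi(y,\ell_{\lambda})$ is nonnegative by convexity, equals $0$ whenever $y\notin(\ell_1,\ell_2)$ (there $\Psi(y,\cdot)$ is affine across $[\ell_1,\ell_2]$), and is strictly positive when $y\in(\ell_1,\ell_2)$ (the kink lies strictly between the two points). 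Integrating, the strict-convexity gap of $\vartheta$ is $\mathbb{E}[\Delta(Y)] \geq \inf_{y\in(\ell_1,\ell_2)}\Delta(y)\cdot\mathbb{P}(\ell_1 < Y < \ell_2)$, so it suffices to show $\mathbb{P}_{A\sim\pi_{\pparams^{\prime}}(\cdot\vert s)}(\ell_1 < Q_{\theta}(s,A) < \ell_2) > 0$. Equivalently, one can derive the subdifferential $\partial_{\ell}\vartheta(\ell)=[\rho-\mathbb{P}(Y\geq\ell),\ \rho-1+\mathbb{P}(Y\leq\ell)]$ and note that the left endpoint at $\ell_2$ minus the right endpoint at $\ell_1$ equals exactly $\mathbb{P}(\ell_1 < Y < \ell_2)$ (the events $\{Y\geq\ell_2\}$ and $\{Y\leq\ell_1\}$ are disjoint), so strict monotonicity of the subgradient is equivalent to the same positive-mass condition.

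The crux, and the step I expect to be the main obstacle, is establishing $\mathbb{P}(\ell_1 < Y < \ell_2) > 0$ for every subinterval of the value range. Assumption \ref{assm:assm5} alone only gives positive mass \emph{above} each threshold, $\mathbb{P}(Y\geq\ell)>0$; to get positive mass \emph{inside} an open interval I would invoke the regime under which Assumption \ref{assm:assm5} is stated to hold, namely $Q_{\theta}(s,\cdot)$ continuous in $a$ and $\pi_{\pparams^{\prime}}(\cdot\vert s)$ a continuous density whose support reaches every value in $(Q^{\theta}_{l},Q^{\theta}_{u})$. Under such continuity the pushforward law of $Y$ charges every open subinterval of its range, giving strict monotonicity of $F(\ell)=\mathbb{P}(Y\leq\ell)$ and hence strict convexity of $\vartheta$, whose unique minimizer is then the true $(1-\rho)$-quantile used in Lemma \ref{lem:quant_conv}.

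Finally, for the sample average $\widehat{\vartheta}$ I would proceed analogously but flag an honest caveat: as a finite sum of piecewise-linear terms it is affine between consecutive order statistics and so cannot be strictly convex in the literal pointwise sense. What is actually consumed downstream is that the \emph{limit} $\vartheta$ is strictly convex (unique minimizer) together with convexity of $\widehat{\vartheta}$ and the uniform convergence $\widehat{\vartheta}\to\vartheta$, which is exactly what Lemma \ref{prop:convex_sols_conv} needs to force convergence of the minimizers. I would therefore state the sample claim in its operative form: $\widehat{\vartheta}$ is convex with an almost surely unique minimizer (the $\lceil(1-\rho)N\rceil$-th order statistic), uniqueness holding because, with probability one under the continuity above, the sampled values $Q_{\theta}(s,A_i)$ are distinct so the subgradient changes sign at a single kink.
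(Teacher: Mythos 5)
Your proposal is correct and takes a genuinely different, and in fact more careful, route than the paper. The paper's proof expands $\mathbb{E}\left[\Psi(Q_{\theta}(S,A),\lambda\ell_1+(1-\lambda)\ell_2)\right]$ term by term and bounds each indicator piece separately, arriving at the chain of inequalities $\eqref{eq_quant_main}\le\lambda\mathbb{E}[\Psi(\cdot,\ell_1)]+(1-\lambda)\mathbb{E}[\Psi(\cdot,\ell_2)]$ --- every step is a non-strict ``$\le$'', so as written it only establishes ordinary convexity, and it never explicitly isolates where Assumption \ref{assm:assm5} should inject strictness. Your pinball-loss/Jensen-gap argument buys exactly that missing localization: the per-sample gap $\Delta(y)$ vanishes off $(\ell_1,\ell_2)$ and is positive inside, so strict convexity of $\vartheta$ is \emph{equivalent} to $\mathbb{P}(\ell_1<Q_{\theta}(s,A)<\ell_2)>0$ for every subinterval. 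Your two caveats are both legitimate criticisms of the lemma as stated: (i) Assumption \ref{assm:assm5} only guarantees positive mass above each threshold (e.g.\ a law concentrated on $\{Q^{\theta}_{l},Q^{\theta}_{u}\}$ satisfies it while killing strict convexity), so one really does need the continuity regime the paper mentions only informally after Assumption \ref{assm:assm5}; and (ii) the empirical objective $\widehat{\vartheta}_{t}$ is piecewise linear in $\ell$ and therefore cannot be strictly convex in the literal sense --- the paper's ``similarly for the second claim'' glosses over this. Your replacement of the sample-average claim by its operative form (convexity plus an almost surely unique minimizer at the order statistic, combined with uniform convergence to a strictly convex limit) is exactly what Lemma \ref{prop:convex_sols_conv} and Lemma \ref{lem:quant_conv} actually consume downstream, so the overall argument of the paper is unaffected; your version is the one I would keep.
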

\begin{proof}
For $\lambda \in [0,1]$ and $\ell_1, \ell_2 \in [Q_{l}, Q_{u}]$ with $\ell_1 \leq \ell_2$, we have
\begin{align}
	&\mathbb{E}_{A \in \pi_{\pparams^{\prime}}(\cdot \vert S)}\big[\Psi(Q_{\theta}(S,A), \lambda\ell_1 + (1-\lambda)\ell_2)\big] \label{eq_quant_main}\\
	&=\mathbb{E}_{A \in \pi_{\pparams^{\prime}}(\cdot \vert S)}\big[(1-\rho)\big(Q_{\theta}(S,A) - \lambda\ell_1 - (1-\lambda)\ell_2\big)I_{\{Q_{\theta}(S,A) \geq \lambda\ell_1 + (1-\lambda)\ell_2\}} \nonumber\\
	& \hspace{2.0cm} +  \rho\big(\lambda\ell_1 + (1-\lambda)\ell_2 - Q_{\theta}(S,A)\big)I_{\{Q_{\theta}(S,A) \leq \lambda\ell_1 + (1-\lambda)\ell_2\}}\big]\nonumber
	.
\end{align}
Notice that
\begin{align*}
	&\big(Q_{\theta}(S,A) - \lambda\ell_1 - (1-\lambda)\ell_2\big)I_{\{Q_{\theta}(S,A) \geq \lambda\ell_1 + (1-\lambda)\ell_2\}}\\
	&=\big(\lambda Q_{\theta}(S,A) - \lambda\ell_1 +  (1-\lambda) Q_{\theta}(S,A) - (1-\lambda)\ell_2\big)I_{\{Q_{\theta}(S,A) \geq \lambda\ell_1 + (1-\lambda)\ell_2\}}
\end{align*}
We consider how one of these components simplifies.
\begin{align*}
	&\mathbb{E}_{A \in \pi_{\pparams^{\prime}}(\cdot \vert S)}\big[\big(\lambda Q_{\theta}(S,A) - \lambda\ell_1\big)I_{\{Q_{\theta}(S,A) \geq \lambda\ell_1 + (1-\lambda)\ell_2\}}\big]\\
	&= \lambda \mathbb{E}_{A \in \pi_{\pparams^{\prime}}(\cdot \vert S)}\big[\big(Q_{\theta}(S,A) - \ell_1\big)I_{\{Q_{\theta}(S,A) \geq \lambda\ell_1\}} - \big(Q_{\theta}(S,A) - \ell_1\big) I_{\lambda \ell_1 \le \{Q_{\theta}(S,A) \leq \lambda\ell_1 + (1-\lambda)\ell_2\}} \big]\\
	&\le \lambda \mathbb{E}_{A \in \pi_{\pparams^{\prime}}(\cdot \vert S)}\big[\big(Q_{\theta}(S,A) - \ell_1\big) I_{\{Q_{\theta}(S,A) \geq \lambda\ell_1\}} \big] \ \ \ \ \triangleright \ - \big(Q_{\theta}(S,A) - \ell_1\big) \le 0 \\
		& \hspace{8.0cm} \text{ for } \lambda \ell_1 \le \{Q_{\theta}(S,A) \leq \lambda\ell_1 + (1-\lambda)\ell_2\}\\
	&\le \lambda \mathbb{E}_{A \in \pi_{\pparams^{\prime}}(\cdot \vert S)}\big[\big(Q_{\theta}(S,A) - \ell_1\big) I_{\{Q_{\theta}(S,A) \geq \ell_1\}} \big] \ \ \ \ \triangleright \ \big(Q_{\theta}(S,A) - \ell_1\big) \le 0 \text{ for } I_{\lambda\ell_1 \le \{Q_{\theta}(S,A) \le \ell_1\}}
\end{align*}
Similarly, we get
\begin{align*}
	\mathbb{E}_{A \in \pi_{\pparams^{\prime}}(\cdot \vert S)}&\big[\big(Q_{\theta}(S,A) - \ell_2\big)I_{\{Q_{\theta}(S,A) \geq \lambda\ell_1 + (1-\lambda)\ell_2\}}\big]	\le  \mathbb{E}_{A \in \pi_{\pparams^{\prime}}(\cdot \vert S)}\big[\big(Q_{\theta}(S,A) - \ell_2\big) I_{\{Q_{\theta}(S,A) \geq \ell_2\}} \big] \\
	\mathbb{E}_{A \in \pi_{\pparams^{\prime}}(\cdot \vert S)}&\big[\big(\ell_1 - Q_{\theta}(S,A) \big)I_{\{Q_{\theta}(S,A) \leq \lambda\ell_1 + (1-\lambda)\ell_2\}}\big]	\le  \mathbb{E}_{A \in \pi_{\pparams^{\prime}}(\cdot \vert S)}\big[\big(\ell_1 - Q_{\theta}(S,A)\big) I_{\{Q_{\theta}(S,A) \leq \ell_1\}} \big] \\
	\mathbb{E}_{A \in \pi_{\pparams^{\prime}}(\cdot \vert S)}&\big[\big(\ell_2 - Q_{\theta}(S,A) \big)I_{\{Q_{\theta}(S,A) \leq \lambda\ell_1 + (1-\lambda)\ell_2\}}\big]	\le  \mathbb{E}_{A \in \pi_{\pparams^{\prime}}(\cdot \vert S)}\big[\big(\ell_2 - Q_{\theta}(S,A)\big) I_{\{Q_{\theta}(S,A) \leq \ell_2\}} \big]
\end{align*}
Therefore, for Equation \eqref{eq_quant_main}, we get
\begin{align*}
\eqref{eq_quant_main} &\le
	\lambda (1-\rho)\mathbb{E}_{A \in \pi_{\pparams^{\prime}}(\cdot \vert S)}\big[\big(Q_{\theta}(S,A) - \ell_1\big) I_{\{Q_{\theta}(S,A) \geq \ell_1\}} \big] \\
	&\ \ \ + (1-\lambda) (1-\rho)\mathbb{E}_{A \in \pi_{\pparams^{\prime}}(\cdot \vert S)}\big[\big(Q_{\theta}(S,A) - \ell_2\big) I_{\{Q_{\theta}(S,A) \geq \ell_2\}} \big] \\
	&\ \ \ + \lambda \rho\mathbb{E}_{A \in \pi_{\pparams^{\prime}}(\cdot \vert S)}\big[\big(\ell_1 - Q_{\theta}(S,A)\big) I_{\{Q_{\theta}(S,A) \leq \ell_1\}} \big] \\
	& \ \ \ + (1-\lambda) \rho \mathbb{E}_{A \in \pi_{\pparams^{\prime}}(\cdot \vert S)}\big[\big(\ell_2 - Q_{\theta}(S,A)\big) I_{\{Q_{\theta}(S,A) \leq \ell_2\}} \big] \\
	&= \lambda\mathbb{E}_{A \in \pi_{\pparams^{\prime}}(\cdot \vert S)}\left[\Psi(Q_{\theta}(S,A), \ell_1)\right] + (1-\lambda)\mathbb{E}_{A \in \pi_{\pparams^{\prime}}(\cdot \vert S)}\left[\Psi(Q_{\theta}(S,A), \ell_2)\right].
\end{align*}
We can prove the second claim similarly. This completes the proof of Lemma \ref{prop:quant_convex}.
\end{proof}

\subsection{Lemma \ref{prop:convex_sols_conv}}\label{app_lem3}

\begin{lem}\label{prop:convex_sols_conv}
	Let $\{f_{n} \in C(\bbbr, \bbbr)\}_{n \in \mathbb{N}}$ be a sequence of strictly convex, continuous functions converging uniformly to a strict convex function $f$. Let $x_{n}^{*} = \argmin_{x}{f_{n}(x)}$ and $x^{*} = \argmin_{x \in \bbbr}{f(x)}$. Then $\lim\limits_{n \rightarrow \infty}x^{*}_{n} = x^{*}$.
\end{lem}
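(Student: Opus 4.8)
The statement is a standard fact about epi-convergence, but here it can be proved directly and elementarily. The plan is to reduce the claim to the following trapping statement: for every $\delta > 0$ there exists $N$ such that $n \geq N$ implies $\vert x_n^* - x^* \vert < \delta$. This is precisely $\lim_n x_n^* = x^*$. To obtain the trapping, I would use the strict convexity of the limit $f$ to build a ``valley'' of definite depth around its unique minimizer $x^*$, and then invoke uniform convergence to force each $f_n$ to inherit that valley, so its minimizer $x_n^*$ cannot escape.

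\textbf{Building the barrier.} Fix $\delta > 0$. Since $f$ is strictly convex it has a unique minimizer at $x^*$, so $f(x^* - \delta) > f(x^*)$ and $f(x^* + \delta) > f(x^*)$. Set
\[
m \defeq \min\{f(x^* - \delta),\, f(x^* + \delta)\} - f(x^*) > 0.
\]
The key step is to extrapolate this gap to the whole exterior of the interval using convexity alone: for $x > x^* + \delta$, writing $x^* + \delta$ as a convex combination of $x^*$ and $x$ and using convexity of $f$ gives $f(x) \geq f(x^*) + m$, and symmetrically for $x < x^* - \delta$. Hence $f(x) \geq f(x^*) + m$ for all $x$ with $\vert x - x^* \vert \geq \delta$.

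\textbf{Trapping via uniform convergence.} By uniform convergence on $\bbbr$, choose $N$ so that $\sup_{x \in \bbbr}\vert f_n(x) - f(x)\vert < m/3$ for all $n \geq N$. Then for such $n$ we have $f_n(x^*) < f(x^*) + m/3$, while for every $x$ with $\vert x - x^* \vert \geq \delta$ we have $f_n(x) > f(x) - m/3 \geq f(x^*) + 2m/3$. Since $x_n^*$ minimizes $f_n$, it satisfies $f_n(x_n^*) \leq f_n(x^*) < f(x^*) + m/3 < f(x^*) + 2m/3$, which rules out every point outside the interval; therefore $\vert x_n^* - x^* \vert < \delta$. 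As $\delta > 0$ was arbitrary, this establishes the convergence.

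\textbf{Main obstacle.} The only genuine subtlety is ruling out minimizers escaping to infinity, since a strictly convex function on $\bbbr$ need not be coercive. This is exactly what the \emph{global} uniform convergence hypothesis buys us: the barrier bound $f(x) \geq f(x^*) + m$ holds on the entire exterior, and the uniform error $m/3$ transports it verbatim to every $f_n$, so no boundedness/coercivity assumption on the $f_n$ is needed beyond the existence of each $\argmin$. Strict convexity of the $f_n$ is in fact used only to guarantee that each $x_n^*$ is well defined and unique; the convergence argument itself relies only on strict convexity of the limit $f$.
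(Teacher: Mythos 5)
Your proof is correct, and it takes a genuinely different route from the paper's. You give a direct ``barrier'' argument: fix $\delta>0$, use strict convexity of the limit $f$ to get a positive gap $m$ at $x^*\pm\delta$, extrapolate that gap by ordinary convexity to the entire exterior $\{x:\vert x-x^*\vert\geq\delta\}$, and then transfer it to each $f_n$ via the uniform error $m/3$, which traps $x_n^*$ in $(x^*-\delta,x^*+\delta)$ for all large $n$. The paper instead argues by contradiction on $c=\liminf_n x_n^*$: assuming $x^*>c$ it compares the values of $f$ at $c$, $(x^*+c)/2$ and $x^*$, transfers the strict ordering to $f_n$ by uniform convergence, concludes that $f_n$ restricted to $[c,(x^*+c)/2]$ is minimized at the right endpoint so that $x_n^*>(x^*+c)/2$, derives $c\geq x^*$, and then handles the $\limsup$ by the reflection $g_n(x)=f_n(-x)$. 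Your version is more economical: it avoids the two-sided $\liminf$/$\limsup$ bookkeeping and the reflection trick, gives an explicit $\varepsilon$--$\delta$ trapping statement, and makes transparent why minimizers cannot escape to infinity (the barrier holds on the whole exterior and global uniform convergence transports it verbatim), a point the paper's argument leaves implicit (e.g.\ it does not address the possibility $c=-\infty$). Both proofs share the same implicit hypothesis that each $f_n$ actually attains its minimum, and you are right that strict convexity of the $f_n$ is needed only for uniqueness of $x_n^*$; your extrapolation step uses only convexity of $f$ together with strictness at $x^*$.
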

\begin{proof}
	Let $c = \liminf_{n} x^{*}_{n}$. We employ proof by contradiction here. For that, we assume $x^{*} > c$. Now, note that $f(x^{*}) < f(c)$ and $f(x^{*}) < f(\left(x^{*}+c\right)/2)$ (by the definition of $x^{*}$). Also, by the strict convexity of $f$, we have $f((x^{*} + c)/2) < \left(f(x^{*}) + f(c)\right)/2$ $ < f(c)$.  Therefore, we have
\begin{flalign}
	f(c) > f((x^{*} + c)/2) > f(x^{*}).
\end{flalign}
	Let $r_1 \in \bbbr$ be such that $f(c) > r_1 >  f((x^{*} + c)/2)$.
	Now, since $\Vert f_{n} - f^{*} \Vert_{\infty}$ $\rightarrow 0$ as $n \rightarrow \infty$, there exists an positive integer $N$ \emph{s.t.} $|f_{n}(c) - f(c)| < f(c)-r_1$, $\forall n \geq N$ and $\epsilon > 0$. Therefore,
	$f_{n}(c) - f(c) > r_1 - f(c)$ $\Rightarrow$ $f_{n}(c) > r_1$. Similarily, we can show that $f_{n}((x^{*} + c)/2) > r_1$. Therefore, we have $f_{n}(c) > f_{n}((x^{*} + c)/2)$.
	Similarily, we can show that $f_{n}((x^{*} + c)/2) > f_{n}(x^{*})$. Finally, we obtain
	\begin{flalign}\label{eq:bdfn}
		f_{n}(c) > f_{n}((x^{*} + c)/2) > f_{n}(x^{*}), \hspace*{4mm} \forall n \geq N.
	\end{flalign}
	Now, by the extreme value theorem of the continuous functions, we obtain that for $n \geq N$, $f_n$ achieves minimum (say at $x_{p}$ in the closed interval $[c, (x^{*} + c)/2]$. Note that $f_{n}(x_{p}) \nless f_{n}((x^{*} + c)/2)$ (if so then $f_{n}(x_{p})$ will be a local minimum of $f_{n}$ since $f_{n}(x^{*}) < f_{n}((x^{*} + c)/2)$). Also, $f_{n}(x_{p}) \neq f_{n}((x^{*}+c)/2)$. Therefore, $f_{n}$ achieves it minimum in the closed interval $[c, (x^{*} + c)/2]$ at the point $(x^{*} + c)/2$. This further implies that $x^{*}_{n} > (x^{*} + c)/2$. Therefore, $\liminf_{n} x_{n}^{*} \geq (x^{*} + c)/2$ $\Rightarrow$ $c \geq (x^{*} + c)/2$ $\Rightarrow$ $c \geq x^{*}$. This is a contradiction and implies
\begin{equation}\label{eq:liminfres}
	\liminf_{n} x^{*}_{n} \geq x^{*}.
\end{equation}
	Now  consider $g_{n}(x) = f_{n}(-x)$. Note that $g_{n}$ is also continuous and strictly convex. Indeed, for $\lambda \in [0,1]$, we have $g_{n}(\lambda x_1 + (1-\lambda)x_2) = f_{n}(-\lambda x_1 - (1-\lambda)x_2) < \lambda f(-x_1) + (1-\lambda)f(-x_2) = \lambda g(x_1) + (1-\lambda)g(x_2)$. Applying the result from Eq. (\ref{eq:liminfres}) to the sequence $\{g_{n}\}_{n \in \mathbb{N}}$, we obtain that $\liminf_{n}(-x_{n}^{*}) \geq -x^{*}$. This implies $\limsup_{n} x_{n}^{*} \leq x^{*}$. Therefore,
\begin{flalign*}
	\liminf_{n} x^{*}_{n} \geq x^{*} \geq \limsup_{n} x_{n}^{*} \geq \limsup_{n} x_{n}^{*}.
\end{flalign*}
	Hence, $\liminf_{n} x^{*}_{n} = \limsup_{n} x_{n}^{*} = x^{*}$
\end{proof}

% %%%%%%%%%%%%%%%%%%%%%%%%%%%%%%%%%%%%%%%%%%%%%%%%%%%%%%%%%%%%%%%%%%%%%%%%%%%%%%%
% %%%%%%%%%%%%%%%%%%%%%%%%%%%%%%%%%%%%%%%%%%%%%%%%%%%%%%%%%%%%%%%%%%%%%%%%%%%%%%%

\section{Experimental Details}

\subsection{Hyperparameter Details}%
\label{apdx:hypers}

In this section, we outline the tuned hyperparameters for each algorithm on each environment in our experiments.
For each algorithm, hyperparameters were tuned over an initial 10 runs with different
random seeds. Each algorithm saw the same 10 initial random seeds. For a list of all hyperparameters swept, see
Section~\ref{sec:exp_details}.
In Table~\ref{tab:continuous_across_env}, we list the tuned hyperparameters for each algorithm when
tuning across continuous-action environments.
In Table~\ref{tab:discrete_across_env}, we list the tuned hyperparameters for each algorithm when
tuning across discrete-action environments.
In Tables~\ref{tab:per_env_greedyac}, \ref{tab:per_env_vanillaac}, and \ref{tab:per_env_sac}, we list the tuned
hyperparameters when tuning per-environment for GreedyAC, VanillaAC, and SAC respectively. Finally,
Table~\ref{tab:swim_greedyac} outlines the hyperparamters used in the experiments on Swimmer.

\begin{table}[H]
\centering
\begin{tabular}{| c | c | c | c |}
	\hline
	Hyperparameter & $\kappa$ & $\alpha$ & $\tau$ \\
	\hline
	Greedy Actor-Critic & 1.0 & 1e-3 & 1e-3 \\
	Vanilla Actor-Critic & 2.0 & 1e-3 & 1e-3 \\
	Soft Actor-Critic & 1.0 & 1e-3 & 1e-3 \\
	\hline
\end{tabular}
	\caption{Hyperparameters tuned across continuous-action
	environments for GreedyAC, VanillaAC, and SAC.}
\label{tab:continuous_across_env}
\end{table}

\begin{table}[H]
\centering
\begin{tabular}{| c | c | c | c |}
	\hline
	Hyperparameter & $\kappa$ & $\alpha$ & $\tau$ \\
	\hline
	Greedy Actor-Critic & 10.0 & 1e-3 & - \\
	Vanilla Actor-Critic & 1e-1 & 1e-3 & 1e-2 \\
	Soft Actor-Critic & 10 & 1e-5 & 10 \\
	\hline
\end{tabular}
\caption{Hyperparameters tuned across discrete-action
	environments for GreedyAC, VanillaAC, and SAC.}
\label{tab:discrete_across_env}
\end{table}

\begin{table}[H]
\centering
\begin{tabular}{| c | c | c | c |}
	\hline
	Hyperparameter & $\kappa$ & $\alpha$ & $\tau$ \\
	\hline
	Acrobot-CA & 1e-1 & 1e-3 & 1e-2 \\
	Acrobot-DA & 1e-1 & 1e-2 & - \\
	Mountain Car-CA & 1.0 & 1e-3 & 10.0 \\
	Mountain Car-DA & 2.0 & 1e-3 & - \\
	Pendulum-CA & 1e-1 & 1e-2 & 10.0 \\
	Pendulum-DA & 1.0 & 1e-3 & - \\
	\hline
\end{tabular}
\caption[Per-Environment Tuned Hyperparameters for GreedyAC]{Hyperparameters tuned per-environment for GreedyAC.}
\label{tab:per_env_greedyac}
\end{table}

\begin{table}[H]
\centering
\begin{tabular}{| c | c | c | c |}
	\hline
	Hyperparameter & $\kappa$ & $\alpha$ & $\tau$ \\
	\hline
	Acrobot-CA & 2.0 & 1e-3 & 1e-3 \\
	Acrobot-DA & 1e-1 & 1e-2 & 1e-2 \\
	Mountain Car-CA & 2.0 & 1e-3 & 1e-3 \\
	Mountain Car-DA & 1.0 & 1e-3 & 1e-2 \\
	Pendulum-CA & 1e-2 & 1e-2 & 1e-2 \\
	Pendulum-DA & 2.0 & 1e-3 & 1.0 \\
	\hline
\end{tabular}
\caption[Per-Environment Tuned Hyperparameters for VanillaAC]{Hyperparameters tuned per-environment for VanillaAC.}
\label{tab:per_env_vanillaac}
\end{table}

\begin{table}[H]
\centering
\begin{tabular}{| c | c | c | c |}
	\hline
	Hyperparameter & $\kappa$ & $\alpha$ & $\tau$ \\
	\hline
	Acrobot-CA & 10.0 & 1e-5 & 10.0 \\
	Acrobot-DA & 2.0 & 1e-5 & 10.0 \\
	Mountain Car-CA & 1.0 & 1e-3 & 1e-3 \\
	Mountain Car-DA & 1.0 & 1e-3 & 1e-2 \\
	Pendulum-CA & 1e-1 & 1e-2 & 1e-1 \\
	Pendulum-DA & 1.0 & 1e-3 & 1.0 \\
	\hline
\end{tabular}
\caption[Per-Environment Tuned Hyperparameters for SAC]{Hyperparameters tuned per-environment for SAC.}
\label{tab:per_env_sac}
\end{table}

\begin{table}[H]
\centering
\begin{tabular}{| c | c | c | c |}
	\hline
	Hyperparameter & $\kappa$ & $\alpha$ & $\tau$ \\
	\hline
	Greedy Actor-Critic & 1e-2 & 1e-4 & 1e-1 \\
	\hline
\end{tabular}
	\caption{Hyperparameters Chosen for GreedyAC on Swimmer.}
\label{tab:swim_greedyac}
\end{table}

\subsection{Normalization Approach}\label{app_normalization}

For each environment, we find the best return achieved by any agent, across all runs, as a simple approximation to a
near-optimal return. Table~\ref{tab:NearOptimalPerformance} lists these returns for each environment. Then, to obtain a
normalized score, we use $1 - \frac{\text{BestValue} - \text{AlgValue}}{|\text{BestValue}|}$, where the numerator is
guaranteed to be nonnegative. If $\text{AlgValue} = \text{BestValue}$ we get the highest value of 1. If AlgValue is half
of BestValue, we get $\frac{0.5\text{BestValue}}{|\text{BestValue}|} = 0.5$. If AlgValue is significantly worse than
BestValue, the score is much lower.

The AlgValue that we normalize is the point depicted on the sensitivity plot. It corresponds to the Average Return
across timesteps and across runs for the algorithm, with that hyperparameter setting in that environment.

For the experiments in Figure~\ref{fig:across-env}, where we tune across the complete set of
discrete- or continuous-action environments,
we first compute the normalized scores just
described. Then, we compute the average normalized scores for each algorithm and hyperparameter setting across
discrete-action and continuous-action environments separately. We then choose the hyperparameter setting for each
algorithm for the discrete- and continuous-action environments based on the hyperparameter setting which resulted in the
highest normalized scores. The learning curves for these hyperparameters, for each algorithm, are shown in
Figure~\ref{fig:across-env}.

\begin{table}[H]
\centering
\begin{tabular}{ | c | c | c | }
	\hline
	Environment & Continuous & Discrete \\
	\hline
	Acrobot & -56 & -56 \\
	Mountain Car & -65 & -83 \\
	Pendulum & 930 & 932 \\
	\hline
\end{tabular}
\caption{Approximate return achieved by an optimal policy. We approximate the return achievable by a near-optimal policy
on environment $\mathscr{E}$ by finding the highest return achieved over all runs of all hyperparameters and all agents
on environment $\mathscr{E}$. }
\label{tab:NearOptimalPerformance}
\end{table}

\subsection{Sensitivity Plots}\label{app_sensitivity}

We plot parameter sensitivity curves, which include a line for each entropy scale, with the stepsize on the x-axis.
Because there are two stepsizes, we have two sets of plots -- one for the critic stepsize and one for the actor stepsize.
When examining the sensitivity to the critic stepsize, we select the corresponding best actor stepsize. This means that
for each point $(\text{critic stepsize}, \text{entropy scale}) = (\alpha, \tau)$ on the sensitivity plot for the critic
stepsize, we find the best actor stepsize and report
the performance for that triplet averaged over all 40 runs.
We do the same procedure when plotting the actor stepsize on the x-axis, but maximizing over critic stepsize.

\begin{figure}[ht]
	\centering
	\includegraphics[width=0.9\linewidth]{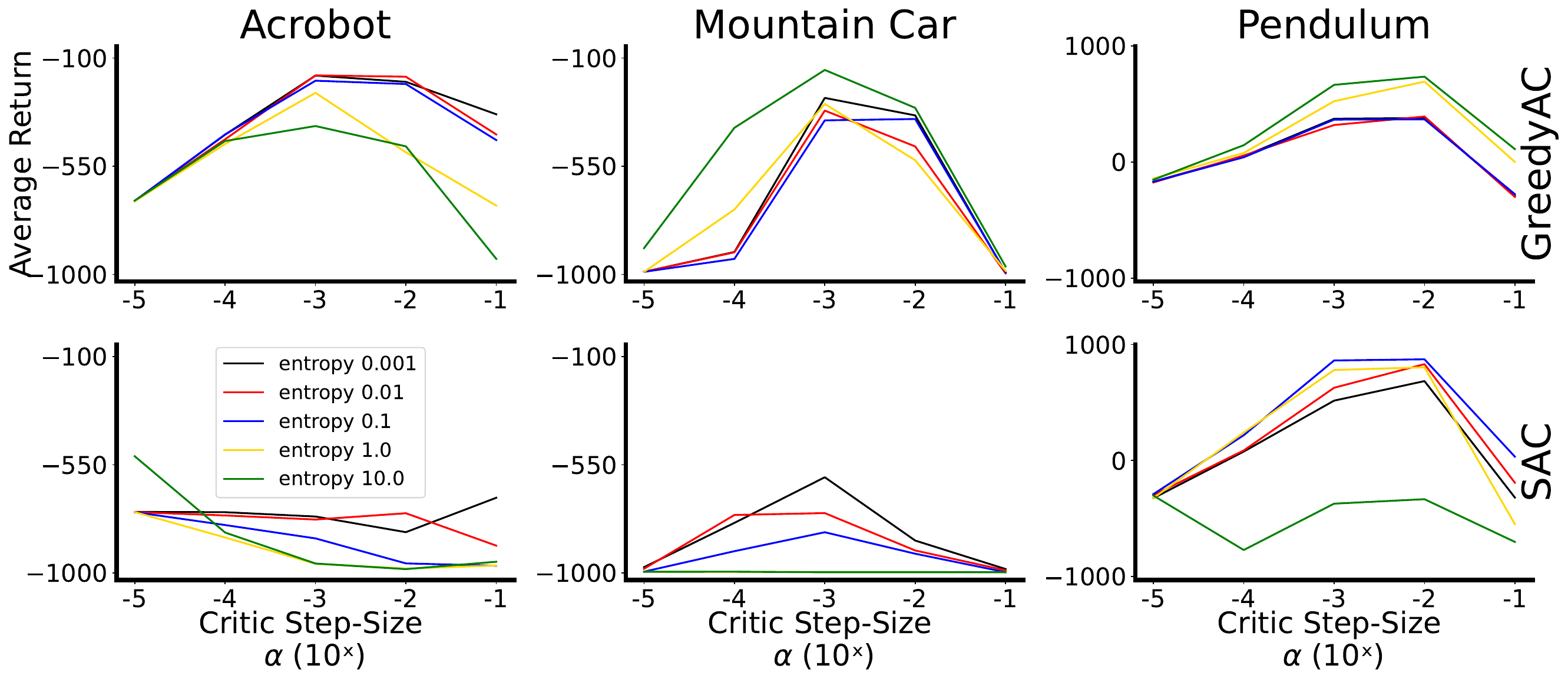}
	\caption{Sensitivity curves for the critic step-size hyperparameter $\alpha$ for GreedyAC and SAC,
	with one line for each entropy scale tested. The critic step-size
	is plotted on a logarithmic scale on the x-axis.}
	\label{fig:entropyscale-critic}
\end{figure}

\begin{figure}[H]
	\centering
	\includegraphics[width=0.95\linewidth]{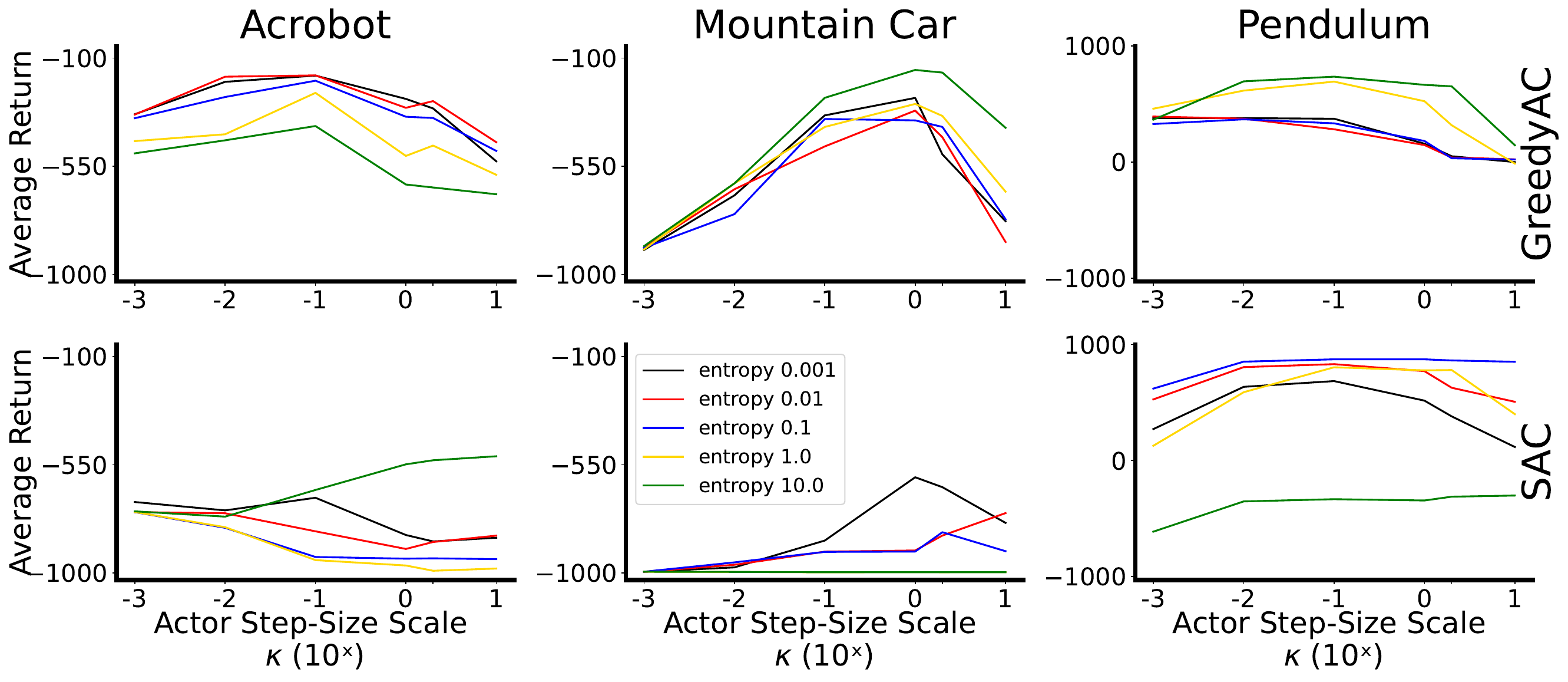}
	\caption{Sensitivity curves for the actor stepsize scale hyperparameter $\kappa$
	for both GreedyAC and SAC, with one line for each entropy scale tested. The actor step-size scale is plotted on a
	logarithmic scale on the x-axis.}
	\label{fig:entropyscale-actor}
\end{figure}

\section{Ablation Study on SAC}%
\label{apdx:sac_ablation}

\begin{figure}[H]
	\centering
	\includegraphics[width=1.0\linewidth]{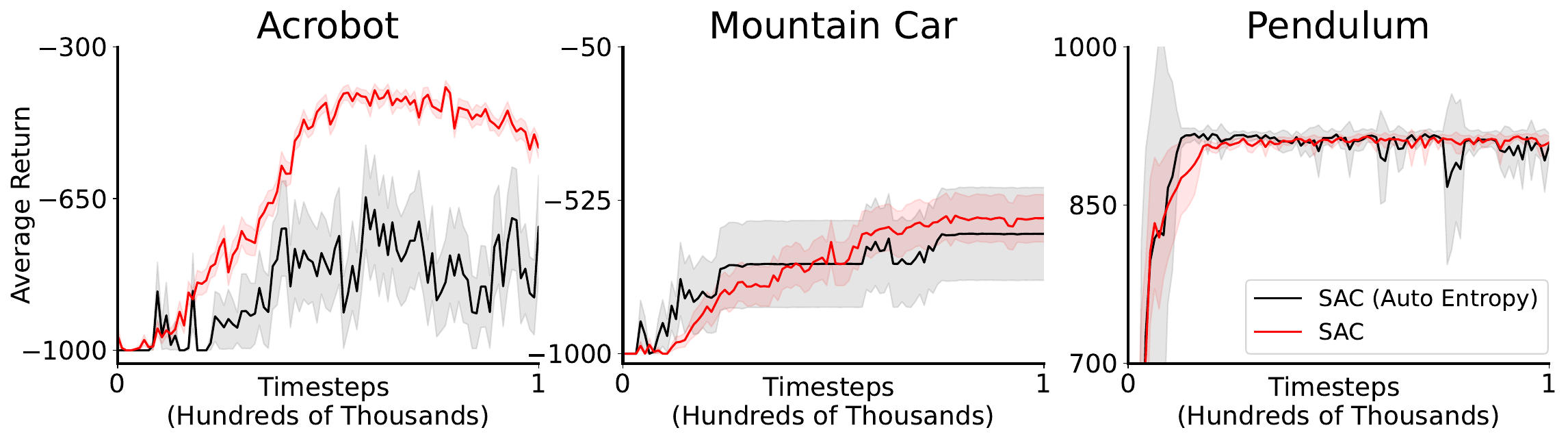}
	\caption{Learning curves over 40 runs for SAC and 10 runs for SAC (Auto Entropy) with shaded regions denoting
	standard error. The entropy scale for SAC as well as the entropy scale step-size for SAC (Auto Entropy)
	are \textbf{tuned using a grid search}.}
	\label{fig:auto-ent}
\end{figure}

Modern variants of SAC utilize a trick to automatically adapt the entropy scale hyperparameter during training
\citep{SACv2}. In order to gauge which variant of SAC to use in this work, we performed an ablation study where we
studied SAC with and without automatic entropy tuning. We ran SAC with automatic entropy tuning for 10 runs.
Hyperparameters were swept in the same sets as listed in Section~\ref{sec:exp_details}.
Additionally, we swept entropy scale step-sizes $\beta = 10^{z}$ for in $z \in
\left\{ -4, -3, -2 \right\}$ for automatic entropy tuning.
Figure~\ref{fig:auto-ent} shows the learning curves of SAC with automatic entropy tuning,
over 10 runs, and SAC without automatic entropy tuning over the 40 runs conducted for the experiments in the main text.
As can be seen in the figure, performing a grid search over the entropy scale hyperparameter never degrades performance
compared to using automatic entropy tuning, and in some cases results in better performance than when
using automatic entropy tuning. Because of this, we decided to use manual entropy tuning through a grid search in our
experiments, which also allows us to characterize the sensitivity of SAC's performance with respect to the entropy scale
hyperparameter.

\end{document}